\def\eqref#1{equation~\ref{#1}}
\def\1{\bm{1}}
\def\vg{{\bm{g}}}
\def\vv{{\bm{v}}}
\def\vw{{\bm{w}}}
\def\vx{{\bm{x}}}
\def\vxi{{\bm{\xi}}}
\def\mI{{\bm{I}}}
\def\mW{{\bm{W}}}
\DeclareMathAlphabet{\mathsfit}{\encodingdefault}{\sfdefault}{m}{sl}
\SetMathAlphabet{\mathsfit}{bold}{\encodingdefault}{\sfdefault}{bx}{n}
\def\gD{{\mathcal{D}}}
\def\gI{{\mathcal{I}}}
\def\gL{{\mathcal{L}}}
\def\gN{{\mathcal{N}}}
\def\sR{{\mathbb{R}}}
\title{Do We Need All the Synthetic Data? Targeted Image Augmentation via Diffusion Models}
\author{
Dang Nguyen\thanks{Equal contribution}$^{\hspace{0.5em}\dagger}$ \quad
Jiping Li\footnotemark[1]$^{\hspace{0.5em}\dagger}$ \quad
Jinghao Zheng\footnotemark[1]$^{\hspace{0.5em}\ddagger}$ \quad
Baharan Mirzasoleiman$^{\dagger}$ \\
$^{\dagger}$University of California Los Angeles (UCLA) \quad
$^{\ddagger}$Ecole Polytechnique Federale de Lausanne (EPFL)
}
\theoremstyle{plain}
\newtheorem{theorem}{Theorem}[section]
\newtheorem{lemma}[theorem]{Lemma}
\newtheorem{corollary}[theorem]{Corollary}
\theoremstyle{definition}
\newtheorem{definition}[theorem]{Definition}
\theoremstyle{remark}
\newcommand{\useful}{\textsc{USEFUL}}
\newcommand{\alg}{\textsc{TADA}}
\newcommand{\tda}{\textbf{TA}rgeted \textbf{D}iffusion \textbf{A}ugmentation}
\newcommand{\Leo}[1]{{{\textcolor{blue}{[Leo: #1]}}}}
\newcommand{\rebuttal}[1]{{{\textcolor{blue}{#1}}}}
\begin{document}

\maketitle

\begin{abstract}
Synthetically augmenting training datasets with diffusion models has become an effective strategy for improving the generalization of image classifiers. However, existing approaches typically increase dataset size by 10–30× and struggle to ensure generation diversity, leading to substantial computational overhead.
In this work, we introduce \alg~(\tda), a principled framework that selectively augments examples that are not learned early in training using faithful synthetic images that preserve semantic features while varying noise. We show that augmenting only this targeted subset consistently outperforms augmenting the entire dataset. Through theoretical analysis on a two-layer CNN, we prove that \alg~improves generalization by promoting homogeneity in feature learning speed without amplifying noise.
Extensive experiments demonstrate that by augmenting only 30–40\% of the training data, \alg~improves generalization by up to 2.8\% across diverse architectures including ResNet, ViT, ConvNeXt, and Swin Transformer on CIFAR-10/100, TinyImageNet, and ImageNet, using optimizers such as SGD and SAM. Notably, \alg~combined with SGD outperforms the state-of-the-art optimizer SAM on CIFAR-100 and TinyImageNet. Furthermore, \alg~shows promising improvements on object detection benchmarks, demonstrating its applicability beyond image classification. Our code
is available at \url{https://github.com/BigML-CS-UCLA/TADA}.
\looseness=-1
\end{abstract}

\section{Introduction}

Data augmentation has been essential to obtaining state-of-the-art in image classification tasks. 
In particular, adding synthetic images generated by diffusion models \citep{rombach2022high,nichol2021glide,saharia2022photorealistic} improves the accuracy \citep{azizi_synthetic_2023} and effective robustness \citep{bansal2023leaving} of image classification, beyond what is achieved by weak (random crop, flip, color jitter, etc) and strong data augmentation strategies (PixMix, DeepAugment, etc) \citep{hendrycks2021many,hendrycks2022pixmix} or data augmentation using traditional generative models \citep{goodfellow2020generative,brock2018large}. 
Existing works, however, generate synthetic images by conditioning the diffusion model on class labels \citep{bansal2023leaving,azizi_synthetic_2023}, or noisy versions of entire training data \citep{zhou2023training}. 
Unlike weak and strong augmentation techniques, data augmentation techniques based on diffusion models often struggle to ensure diversity and increase the size of the training data by up to 10$\times$ \citep{azizi_synthetic_2023} to 30$\times$ \citep{fu_dreamda_2024} to yield satisfactory performance improvement. This raises a key question: \looseness=-1
 
\textit{Does synthetically augmenting the full data yield optimal performance? 
Can we identify a part of the data that outperforms full data, when synthetically augmented? 
}

At first, this seems implausible as adding synthetic images corresponding to only a part of the training data introduces a shift between training and test data distributions and harms the in-distribution performance.
However, recent results in the optimization literature have revealed that learning features at a more uniform speed during training improves the generalization performance \citep{nguyen2024changing}. 
This is shown by comparing learning dynamics of Sharpness-Aware-Minimization (SAM) with Gradient Descent (GD) optimizers. 
SAM is a state-of-the-art optimizer that finds flatter local minima by simultaneously minimizing the value and sharpness of the loss \citep{foret2020sharpness}. 
In doing so, SAM learns slow-learnable features faster than GD and achieves superior performance. 
This suggests that augmenting the slow-learnable part of the data to accelerate their learning can improve the generalization performance, despite slight distribution shift.
Yet, how to generate such synthetic data
remains an open question. \looseness=-1

In our work, we provide a rigorous answer to the above question. 
We propose \alg~(\tda), a principled framework for selectively augmenting slow-learnable examples using diffusion models.
First, by analyzing a two-layer convolutional neural network (CNN), we show that SAM suppresses learning noise from the data, while speeding up learning slow-learnable features. 
Then, we prove that generating \textit{faithful} synthetic images containing slow-learnable features with different noise effectively speeds up learning such features without causing noise overfitting.
To find examples with slow-learnable features, we partition the data to two parts by clustering model outputs early in training and identify the cluster with higher average loss.
Then, we generate faithful images corresponding to the slow-learnable examples, by using real data to guide the diffusion process. That is, we add noise to examples that are not learned early in training and denoise them to generate faithful synthetic data (see examples in Figure~\ref{fig:qualitative_results}).
This enables synthetically augmenting only the slow-learnable part of the data by up to 5x to get further performance improvement.
In contrast, upsampling slow-learnable examples---which appears to be a simpler and more intuitive approach---more than once 
could amplify the noise and significantly harm the performance, suggesting the necessity of using synthetic data. Finally, we prove the convergence properties of training on our synthetically augmented data with stochastic gradient methods. 

We conduct extensive experiments training ResNet, ViT, DenseNet, ConvNeXt, and Swin Transformer on CIFAR10, CIFAR100 \citep{krizhevsky2009learning}, TinyImageNet \citep{le2015tiny}, and ImageNet~\citep{deng2009imagenet}. We show that \alg~consistently outperforms both upsampling and full synthetic augmentation, improving SGD and SAM by up to 2.8\% while augmenting only 30–40\% of the data. Notably, \alg~combined with SGD outperforms SAM on CIFAR-100 and TinyImageNet and achieves state-of-the-art performance. Moreover, we validate its scalability on ImageNet with ResNet18 and ResNet50, and demonstrate its effectiveness beyond classification by improving object detection performance. \alg~remains effective across different diffusion models and can be seamlessly combined with existing weak and strong augmentation strategies to further boost performance.

\begin{figure*}[t!]
    \centering
    \includegraphics[width=0.85\textwidth]{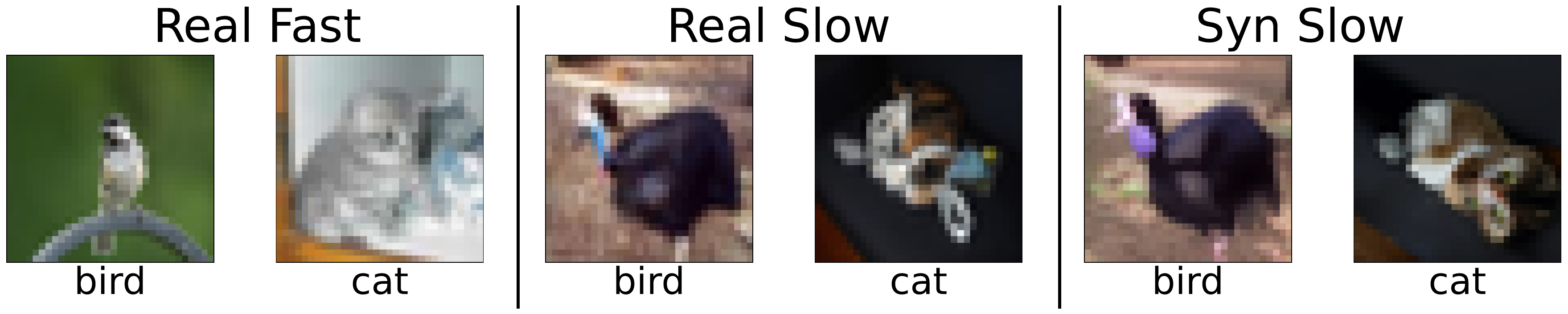}
    \caption{Examples of slow- and fast-learnable images and our \textit{faithful} synthetic images corresponding to slow-learnable examples generated for CIFAR-10. Our synthetic data preserves features in slow-learnable images but replace noise. This amplifies slow-learnable features without magnifying noise.
    This is difficult to achieve with standard augmentations like random cropping or flipping, highlighting the value of generative augmentation. Additional images are given in Figure~\ref{fig:qualitative_results_full}.
    }
    \label{fig:qualitative_results}
    \vspace{-3mm}
\end{figure*}

\section{Related Works}

\textbf{Generative Models for Augmentation.} There has been a recent surge of studies on synthetic data augmentation using diffusion models. 
For example, \citet{azizi_synthetic_2023} applied diffusion models to ImageNet classification, while further studies \citep{trabucco_effective_2023, he_is_2023} explored their application in zero- or few-shot settings. 
Despite promising results, this line of research faces fundamental challenges in achieving diversity, faithfulness, and efficiency. 
Recent work attempts to overcome this through intricate prompt-conditioning mechanisms, customized embedding optimizations, or multi-stage diffusion processes.
For example, DiffuseMix~\citep{islam2024diffusemix} and GenMix~\citep{islam2024genmix} use prompt-guided editing with complex mixing strategies to avoid unrealistic artifacts, while Diff-Mix~\citep{wang2024enhance} balances foreground fidelity and background diversity through inter-class mixup. Diff-II~\citep{wang2025inversion} introduces a novel inversion-circle interpolation and a two-stage denoising process to jointly promote diversity and faithfulness. For fine-grained image classification, SaSPA~\citep{michaeli2024advancing}  preserves structural integrity by conditioning on edges and subject representations, and DiffCoRe-Mix~\citep{islam2025context} uses constrained diffusion with negative prompting and hard-cosine filtering to maintain semantic consistency. \looseness=-1

Although these approaches improve synthetic image quality and diversity, they typically require generating extremely large synthetic datasets—often 10$\times$ to 30$\times$ the size of the original data—to achieve meaningful performance gains, making them computationally expensive. A few recent works, such as Boomerang~\citep{luzi2022boomerang} and DiffCoRe-Mix, address this cost by performing local manifold sampling, enabling strong performance with a 1$\times$ augmentation ratio. However, these methods still involve substantial system-level complexity and high generation costs.

Our approach departs from this trend by focusing on \emph{which} examples to augment rather than designing increasingly complex generation pipelines. We theoretically and empirically show that augmenting only the 30\%--40\% of examples that are not learned early in training is sufficient—and often superior—to full-data augmentation. \alg~is simple, computationally lightweight, and generator-agnostic. It can be seamlessly combined with state-of-the-art diffusion-based augmentation methods, as demonstrated with both DiffuseMix and Boomerang in our experiments.

\textbf{Sharpness-aware-minimization (SAM).} SAM is an optimization technique that obtains state-of-the-art performance on a variety of tasks, by simultaneously minimizing the loss and its sharpness 
~\citep{foret2020sharpness, Zheng_2021_CVPR}. 
In doing so, it improves the generalization in expense of doubling the training time.
SAM has also been shown to be beneficial in settings such as label noise \citep{foret2020sharpness, Zheng_2021_CVPR}, out-of-distribution \citep{springersharpness}, and domain generalization \citep{Cha_2021_Advances, Wang_2023_CVPR}. 

The superior generalization performance of SAM has been contributed to smaller Hessian spectra~\citep{foret2020sharpness,kaur2023maximum,wen2022does,bartlett2023dynamics}, sparser solution \citep{andriushchenko2022towards}, and benign overfitting in presence of weaker signal \citep{chen2022towards}.
Most recently, SAM is shown to learn features at a more uniform speed \citep{nguyen2024changing}. 
In our work, we show that targeted synthetic data augmentation can improve generalization by making training dynamics more similar to SAM.\looseness=-1

\section{Preliminary}
In this section, we introduce our theoretical framework for analyzing synthetic data augmentation with diffusion models. 
We also discuss SAM's ability in learning features at a more uniform speed.\looseness=-1


\textbf{Data Distribution. }
We adopt a similar data distribution used in recent works on feature learning~\citep{allen2020towards,chen2022towards,jelassi2022towards,cao2022benign,kou2023benign,deng2023robust,chen2023does} to model data containing two features $\vv_d, \vv_e$ and noise patches. 
\begin{definition}[Data distribution] \label{def:data} A data point has the form $(\vx, y) \sim \gD(\beta_e, \beta_d, \alpha) \in (\sR^d)^P \times \{\pm1\}$, where $y \sim \text{Radamacher}(0.5)$, $0 \leq \beta_d < \beta_e \in \sR$, and $\vx = (\vx^{(1)}, \vx^{(2)}, \cdots, \vx^{(P)})$ contains $P$ patches. 
\vspace{-2mm}
\begin{itemize}[leftmargin=0.8cm]
\setlength\itemsep{0em}
    \item Exactly one patch is given by the 
    \textit{fast-learnable} feature $\beta_e \cdot y \cdot \vv_e$ for some unit vector $\vv_e$ with probability $\alpha>0$. Otherwise, the patch is given by the 
    \textit{slow-learnable} 
    feature $\beta_d \cdot y \cdot \vv_d$ for some unit vector $\vv_e \cdot \vv_d = 0$.  
    \item The other $P-1$ patches are i.i.d. Gaussian noise $\bm{\xi}$ from $\gN(0, (\sigma_p^2/d)\mI_d)$ for some 
    constant $\sigma_p$.  \looseness=-1
\end{itemize}
\vspace{-2mm}
\end{definition}
Probability $\alpha$ controls the frequency of feature $\vv_e$ in the data distribution. The distribution parameters $\beta_e, \beta_d$ characterize the feature strength in the data. $\beta_e > \beta_d$ ensures that the 
fast-learnable feature is represented better in the population and thus learned faster. 
The faster speed of learning captures various notions of simplicity, such as simpler shape, larger magnitude, and less variation. 
Note that image data in practice are high-dimensional and the noises become dispersed. For simplicity, we assume $P = 2$, that the noise patch is orthogonal from the two features, and that summations involving noise cross-terms $\langle \bm{\xi}_i, \bm{\xi}_j\rangle$ become negligible. 


\textbf{Two-layer CNN Model.} We use a dataset $D = \{(\vx_i, y_i)\}_{i=1}^N$ from distribution \ref{def:data} to train a two-layer nonlinear CNN with activation functions $\sigma(z) = z^3$.

\begin{definition}[Two-layer CNN] \label{def:CNN} For one data point $(\vx, y)$, the two-layer Convolutional Neural Network (CNN) with weights $\mW = [\vw_1, \vw_2, \cdots, \vw_J] \in \sR^{d \times J}$, where $\vw_j$ is weight of the $j$-th neuron (filter), has the form: 
\[
    f(\vx; \mW) = \sum_{j=1}^J \sum_{p = 1}^P \langle \vw_j, \vx^{(p)}\rangle^3 = \sum_{j=1}^J \left( \langle \vw_j, \bm{\xi} \rangle^3  + y\begin{cases} 
      \beta_d^3 \langle \vw_j, \vv_d\rangle^3 & \text{if $\vv_d$}  \\
      \beta_e^3\langle \vw_j, \vv_e\rangle^3 & \text{if $\vv_e$}
   \end{cases}\right) \quad \text{for our $P = 2$. }
\]
\end{definition}
\textbf{Empirical Risk Minimization.} We consider minimizing the following empirical logistic loss:  
\begin{equation} \label{eq:loss}
   \gL(\mW) = \frac{1}{N}\sum_{i = 1}^N l(y_if(\vx_i; \mW)):= \frac{1}{N}\sum_{i = 1}^N \log(1 + \exp (-y_if(\vx_i; \mW)) . 
\end{equation}
via (1) sharpness-aware minimization (SAM) \citep{foret2020sharpness} and (2) gradient descent (GD), whose filter-wise update rules, with some learning rate $\eta > 0$, are respectively given by: 
\begin{align}
    \textbf{SAM}: \vw_{j}^{(t + 1)} &\!\!=\! \vw_{j}^{(t)} \!\!-\! \eta\nabla_{\vw_j^{{(t)}}}\gL(\mW^{(t)} \!+\! \rho^{(t)}\nabla\gL(\mW^{(t)})), \text{ where $\rho^{(t)} \!= \rho/\|\nabla\gL(\mW^{(t)})\|_F$, $\rho > 0$ },\nonumber \\
    \textbf{GD}: \vw_{j}^{(t + 1)} &= \vw_{j}^{(t)} - \eta \frac{1}{N}\sum_{i=1}^N \nabla_{i, \vw_j^{{(t)}}}\gL(\mW^{(t)}) = \vw_{j}^{(t)} - \eta\nabla_{\vw_j^{{(t)}}}\gL(\mW^{(t)}). \nonumber  
\end{align}
Here $\nabla_{\vw_j^{{(t)}}}\gL(\mW^{(t)})$ denotes the full gradient w.r.t. filter $\vw_j$ at iteration $t$, $\nabla_{i, \vw_j^{{(t)}}}\gL(\mW^{(t)})$ denotes the per-example gradient for $i \in [N]$, and $\nabla\gL(\mW^{(t)})$ denotes the full gradient matrix. 

\textbf{High-level idea of SAM.} 
By perturbing the weights with gradient ascent ($\bm{\epsilon}^{(t)} = \rho^{(t)}\nabla\gL(\mW^{(t)})$), SAM looks ahead in the \textit{worst} weight direction and forces the training algorithm to escape an unstable (sharp) local minimum. In practice, this leads to more generalizable solutions. 

\textbf{SAM Learns Features More Homogeneously. }
%
With the above setting, the alignment of $\vv_d, \vv_e$ with weights, i.e., $\langle \vw^{(t)}, \vv_d \rangle$ and $\langle \vw^{(t)}, \vv_d \rangle$, indicate how much they are learned by the CNN at iteration $t$.
SAM's (normalized) gradient for the \textit{slow-learnable} feature is larger than GD by
a factor of \citep{nguyen2024changing}:
\begin{equation} \label{eq:best_k}
    k = \left(\frac{1 - \rho^{(t)}\beta_d^3\langle \vw, \vv_d\rangle}{1 - \rho^{(t)}\beta_e^3\langle \vw, \vv_e\rangle}\right)^{2/3}. 
\end{equation}
That is, SAM amplifies the slow-learnable feature and learns it faster than GD. In doing so, it learns features at a more homogeneous speed.
While Eq. \ref{eq:best_k} suggests that the empirical choice of $k$ should depend on the relative strength and difficulty of the features, simply upsampling examples with slow-learnable features more than once results in performance degradation, as we confirm in our experiments.\looseness=-1

\section{Learning Features Homogeneously without Overfitting Noise} \label{theory}
In this section, we first prove that SAM suppresses learning noise from the data, while promoting homogeneous feature learning. Then, we introduce \alg~(\tda) and discuss generating synthetic data to amplify features in images without magnifying their noise. This allows amplifying slow-learnable features by $k>2$ to further boost performance.
Finally, we show convergence of training on our synthetically augmented data.\looseness=-1

\subsection{SAM Suppresses Learning Noise from the Data}
First, we theoretically analyze how SAM suppresses learning noise in the above setting. 
Intuitively, as SAM pushes the learning dynamics away from sharp landscapes, it simultaneously helps the model avoid areas where certain noises concentrate. This becomes a natural defense against noise overfitting in high-curvature areas. 
On the other hand, gradient descent is unaware of local smoothness, so solutions that may sit in a flat, noise‑resilient basin. 

Fomally, we prove that starting with the same weights $\mW^{(t)}$, a 
SAM step suppresses the model’s alignment with noise directions more effectively than an equivalent gradient descent step. 
Let $\bm{\Phi}$ denote the sets of noises for dataset $D$. Let $\vw_{j, \bm{\epsilon}}$ denote the perturbed weights of filter $j$ for SAM. We then define $\gI_{j, \bm{\epsilon}, +}^{(t)} = \{\bm{\phi}_i \in \bm{\Phi} : i \in [N], \ \text{sgn}(\langle \vw_{j, \bm{\epsilon}}^{(t)}, \bm{\phi}_i \rangle) = \text{sgn}(y_i)\}$ and $\gI_{j, \bm{\epsilon}, -}^{(t)} = \{\bm{\phi}_i \in \bm{\Phi} : i \in [N], \ \text{sgn}(\langle \vw_{j, \bm{\epsilon}}^{(t)}, \bm{\phi}_i \rangle) \neq \text{sgn}(y_i)\}$ be the sets of noises where the sign of alignment matches or mismatches the sign of the label. We define $\gI_{j, +}^{(t)}$, $\gI_{j, -}^{(t)}$ accordingly for each GD weight $\vw_j^{(t)}$. We measure filter-wise noise learning for a set of noises using the metric $\textbf{NoiseAlign}(\gI, \vw_j^{(t)}) = \frac{1}{|\gI|}\sum_{\phi \text{ in } \gI}|\langle \nabla_{\vw_{j}^{(t)}} \gL(\mW^{(t)}), \phi\rangle|$, which intuitively measures how much noise is learned by the model. \looseness=-1 

The following theorem quantifies how SAM learns noise to a smaller extent compared to GD. For simplicity, we analyze the early training phase. However, our results should hold throughout the training. \looseness=-1

\begin{theorem} \label{thm:noise_learning_GD_and_SAM}
    With controlled logit terms $l_i^{(t)} = \text{sigmoid}(-y_if(\vx_i; \mW^{(t)}))$, large data size $N$, small learning rate $\eta$, and small SAM perturbation parameter $\rho$ (see Appendix \ref{appx:proofs}),
    SAM and GD updates from the same parameters have the following property, early in training:
    \vspace{-2mm}
    \begin{enumerate}[leftmargin=0.8cm]
\setlength\itemsep{0em}
        \item \textbf{Inert Noises:} Alignment with noises that belong to $\gI_{j, -}^{(t)}$ and $\gI_{j, \bm{\epsilon}, -}^{(t)}$ will get closer to $0$ after each update, so they will not be learned eventually by GD or SAM.
        \item \textbf{Noise Learning:} The other noises will continue being learned in the sense that $|\langle \vw_j, \vxi_i\rangle|$ is monotonically increasing. For these noises, SAM slows down noise learning by looking ahead to noise-sensitive (sharp) directions, while GD updates ``blindly''. The following SAM and GD learning dynamics hold for $\bm{\xi}_i \in \gI_{j, \bm{\epsilon}, +}^{(t)}$ and $\bm{\xi}_i \in \gI_{j, +}^{(t)}$ in terms of noise gradient: 
        \begin{align}
            \text{\textbf{SAM:}}&~ |\langle \nabla_{\vw_{j, \bm{\epsilon}}}\gL(\mW^{(t)}\! + \bm{\epsilon}^{(t)}), \bm{\xi}_i\rangle| \!=\! \frac{3}{N}l_{i}^{(t)} \langle \vw_{j}^{(t)}, \bm{\xi}_i  \rangle^2 \!\! \left(1 \!- \frac{3\rho^{(t)}}{N}l_i^{(t)}|\langle \vw_j^{(t)}, \bm{\xi}_i\rangle|\|\bm{\xi}_i\|^2\right)^2\!\!\!\!\|\bm{\xi}_i\|^2, \nonumber\\
           \text{\textbf{GD:}}& \quad|\langle \nabla_{\vw_{j}}\gL(\mW^{(t)} ), \bm{\xi}_i\rangle| = \frac{3}{N}l_{i}^{(t)}\langle \vw_{j}^{(t)}, \bm{\xi}_i\rangle^2\|\bm{\xi}_i\|^2 . \nonumber
        \end{align}
        Furthermore, on average, the perturbed SAM gradient aligns strictly less with these noises, 
        \begin{align*}
             \textbf{NoiseAlign}(\gI_{j, \bm{\epsilon}, +}^{(t)}, \vw_{j, \bm{\epsilon}}^{(t)})  < \textbf{NoiseAlign}(\gI_{j, +}^{(t)}, \vw_j^{(t)}). 
        \end{align*}
    \end{enumerate}
    A special case of this theorem is that with the same initializations $\mW^{(0)} \sim \mathcal{N}(0, \sigma_0^2)$, nearly half of the noises will not be learned, and SAM in early training prevents overfitting, while GD does not. 
\end{theorem}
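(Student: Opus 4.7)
The plan is to project every relevant quantity onto the noise direction $\bm{\xi}_i$ and exploit that, in this setting, cross-patch and cross-noise inner products either vanish exactly (feature/noise orthogonality) or are negligible (the $\langle \bm{\xi}_i,\bm{\xi}_{i'}\rangle$ terms at large $N$). Because of the cubic activation, the per-example contribution to $\nabla_{\vw_j}\gL$ is $-\tfrac{1}{N} l_i^{(t)} y_i \nabla_{\vw_j} f(\vx_i;\mW^{(t)})$ with $\nabla_{\vw_j} f(\vx_i;\mW^{(t)}) = 3\langle \vw_j^{(t)},\bm{\xi}_i\rangle^2 \bm{\xi}_i + 3 y_i \beta^3 \langle \vw_j^{(t)},\vv_i\rangle^2 \vv_i$. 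Taking the inner product with $\bm{\xi}_i$ annihilates the feature term and, up to the negligible cross-noise errors absorbed by the large-$N$ assumption, leaves only the $i'=i$ summand. Reading off the absolute value yields the GD identity $|\langle \nabla_{\vw_j}\gL(\mW^{(t)}),\bm{\xi}_i\rangle| = \tfrac{3}{N} l_i^{(t)} \langle \vw_j^{(t)},\bm{\xi}_i\rangle^2 \|\bm{\xi}_i\|^2$ stated in the theorem.

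\textbf{SAM reduction.} For SAM I would first project the perturbed filter $\vw_{j,\bm{\epsilon}}^{(t)} = \vw_j^{(t)} + \rho^{(t)} \nabla_{\vw_j}\gL(\mW^{(t)})$ onto $\bm{\xi}_i$ using the identity above; after factoring, the projection equals $\langle \vw_j^{(t)},\bm{\xi}_i\rangle\bigl(1 - \tfrac{3\rho^{(t)}}{N} l_i^{(t)} |\langle \vw_j^{(t)},\bm{\xi}_i\rangle|\|\bm{\xi}_i\|^2\bigr)$ when $i \in \gI_{j,+}^{(t)}$. Then I would rerun the same projection argument for the gradient evaluated at the perturbed weights --- legitimate because $l_i$ at the perturbed point differs from $l_i^{(t)}$ by $O(\rho)$ under the bounded-gradient regime of the theorem --- and square the shrink factor, exactly reproducing the SAM identity in the theorem. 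Choosing $\rho$ small enough that the shrink factor stays in $(0,1)$ keeps $\mathrm{sgn}(\langle \vw_{j,\bm{\epsilon}}^{(t)},\bm{\xi}_i\rangle) = \mathrm{sgn}(\langle \vw_j^{(t)},\bm{\xi}_i\rangle)$, so $\gI_{j,\bm{\epsilon},\pm}^{(t)} = \gI_{j,\pm}^{(t)}$.

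\textbf{Monotonicity, NoiseAlign, and the initialization corollary.} With both closed-form gradients in hand, the dichotomy is immediate: in both the GD and SAM updates, the signed increment to $\langle \vw_j,\bm{\xi}_i\rangle$ carries the sign of $y_i$, so it points toward zero whenever $y_i \neq \mathrm{sgn}(\langle \vw_j,\bm{\xi}_i\rangle)$ (the inert case) and away from zero otherwise (the monotonically-growing case). For the NoiseAlign inequality, since $\gI_{j,\bm{\epsilon},+}^{(t)} = \gI_{j,+}^{(t)}$ and each SAM summand equals the corresponding GD summand multiplied by the strictly sub-unity factor $(1 - \tfrac{3\rho^{(t)}}{N} l_i^{(t)} |\langle \vw_j^{(t)},\bm{\xi}_i\rangle|\|\bm{\xi}_i\|^2)^2$, averaging over the common set yields the strict inequality. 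For the initialization corollary, $\langle \vw_j^{(0)},\bm{\xi}_i\rangle$ is zero-mean Gaussian and independent of $y_i$, so its sign is a fair coin independent of $y_i$ and roughly $N/2$ noises land in each sign set, while the inequality above keeps SAM out of the noise-overfitting regime early in training.

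\textbf{Main obstacle.} The chief technical hurdle is the small-$\eta$/small-$\rho$/large-$N$ bookkeeping that underpins every reduction to a per-example identity: we must simultaneously keep the cross-noise projections $\langle \bm{\xi}_i,\bm{\xi}_{i'}\rangle$ below the dominant $\|\bm{\xi}_i\|^2$ term uniformly over $i$, keep $l_i$ at the perturbed weights close to $l_i^{(t)}$, and --- most importantly for the set identity $\gI_{j,\bm{\epsilon},\pm}^{(t)} = \gI_{j,\pm}^{(t)}$ --- keep the SAM shrink factor bounded away from $0$ so that no sign flips occur. The quantitative parameter regime deferred to the appendix is calibrated precisely to absorb all three of these errors uniformly in $j$ and $i$ over the early-training window.
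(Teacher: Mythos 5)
Your proposal follows essentially the same route as the paper's proof: per-example gradient projections onto $\bm{\xi}_i$ via orthogonality and negligible cross-noise terms, the SAM--GD weight connection yielding the squared shrink factor, a small-$\rho$ condition keeping that factor in $(0,1)$ so that $\gI_{j,\bm{\epsilon},\pm}^{(t)}=\gI_{j,\pm}^{(t)}$ (the paper makes this precise via a lower bound on $\|\nabla\gL(\mW^{(t)})\|_F$ that controls $\rho^{(t)}$), and then termwise comparison over the common set for the \textbf{NoiseAlign} inequality, with the Gaussian sign argument at initialization. The only detail you gloss over is that the inert-noise claim also needs the step size small enough that the alignment does not overshoot zero, which the paper handles with an explicit monotone-sequence induction; otherwise the argument matches.
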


All the proof can be found in Appendix \ref{appx:proofs}. Our results are aligned with~\citep{chen2023does} which showed, in a different setting, that SAM can achieve benign overfitting when SGD cannot.

\textbf{Remark.} Theorem \ref{thm:noise_learning_GD_and_SAM} implies that to resemble feature learning with SAM and ensure superior convergence, it is also crucial to avoid magnifying noise when amplifying the slow-learnable feature.  

\subsection{Synthetic Data Augmentation To Amplify Features But not Noise}\label{subsec:syn_amplify}

Next, we discuss how \alg~identifies examples containing slow-learnable features and generates faithful synthetic data to amplify these features without magnifying the noise in the data.

\textbf{Identifying slow-learnable examples. } 
To find slow-learnable features in the data, we find examples that are not learned robustly at the early phase of training. 
If an example contains at least one fast-learnable feature that is learned by the model early in training, the model relies on such features to lower the loss and potentially correctly classify the example early in training. 
Thus, examples that only include slow-learnable features can be identified based on loss or misclassification, or by partitioning model outputs to two clusters after a few training epochs and selecting the cluster with the higher average loss. In our experiments, we use clustering to identify examples with slow-learnable features as it yields better performance, as we confirm in our ablation studies. We note that finding examples with slow-learnable features is not the main focus of our work. Our main contribution is characterizing \textit{how} to amplify slow-learnable features without amplifying noise in the data.

\textbf{Amplifying slow-learnable features.}
Next, we show that generating synthetic data containing slow-learnable features with different noise considerably boosts the generalization performance, while upsampling slow-learnable examples amplifies the noise in the data and harms generalization.

Recall that $D$ is the original dataset with $|D| = N$. We assume exactly $(1 - \alpha) N \in \mathbb{Z}$ samples have only $\vv_d$, and let $D_U$, $D_G$ be the modified datasets via upsampling and generation with factor $k$ and new size $N_{new} =\alpha N + k(1-\alpha)N$. For $D_U$, the replicated noises $\{\bm{\xi}_i:i=\alpha N + 1, \dots, N\}$ introduce a dependence. Additionally, for $D_G$, the generative model will have its own noises $\bm{\gamma}_i$ (potentially higher) for synthetic data, and we assume the noises are i.i.d., orthogonal to features, and independent from feature noise from some distribution $\gD_{\bm{\gamma}}$. 

With similar notations, let $\bm{\Phi}_G$, $\bm{\Phi}_U$ denote the multi-sets \footnote{Due to identical noises in $D_U$. The following $\gI_{j, +}^{U, (t)}$ and $\gI_{j, -}^{U, (t)}$ are also defined via multi-sets to keep their sizes consistent with the generation counterparts, meaning that we count all the replicated noises. } of all the noises for $D_G$, $D_U$ respectively. We define $\gI_{j, +}^{G, (t)} = \{\bm{\phi}_i \in \bm{\Phi}_G : i \in [N_{new}], \ \ \text{sgn}\langle \vw_{j}^{(t)}, \bm{\phi}_i \rangle = \text{sgn}(y_i)\}$, $\gI_{j, -}^{G, (t)}$, $\gI_{j, +}^{U, (t)}$, $\gI_{j, -}^{U, (t)}$ in a similar fashion for $D_G$ and $D_U$ as before.

Next, we show that early in training, 
upsampling and generation contribute similarly to feature learning, but upsampling accelerates learning noises in $\gI_{j, +}^{U, (t)}$, while synthetic generation does not. \looseness=-1

\begin{theorem}[Comparison of feature \& noise learning] \label{thm:noise_overfitting}
    Let $\nabla_{\vw_j^{{(t)}}}^{U}(\gL(\mW^{(t)}))$ and $\nabla_{\vw_j^{{(t)}}}^{G}(\gL(\mW^{(t)}))$ denote the full gradients w.r.t the $j$-th neuron for upsampling and generation at iteration $t$ respectively. When dropping the index $i$, the term is treated as a random variable. Then for one gradient update, 
    \begin{enumerate}[leftmargin=0.8cm]
\setlength\itemsep{0em}
        \item \textbf{Feature Learning \& Inert Noises: } Both gradients attempt to contribute equally to feature learning and will not eventually learn certain noises (that are those in $\gI_{j, -}^{U, (t)}$, $\gI_{j, -}^{G, (t)}$). 
        \item \textbf{Noise Learning: } Upsampling, compared with generation, amplifies learning of noise on the repeated subset by a factor of $k$. In particular, 
        for generation, $\bm{\phi}_i\in \gI_{j, \bm{\epsilon}, +}^{G, (t)}$, where $\bm{\phi}_i$ could be $\bm{\xi}_i$ or the generation noise $\bm{\gamma}_i$, 
        \[
            |\langle \nabla_{\vw_{j}}^G\gL(\mW^{(t)}), \bm{\phi}_i\rangle| = 
                \frac{3}{N_{new}}l_{i}^{(t)}\langle \vw_{j}^{(t)}, \bm{\phi}_i\rangle^2\|\bm{\phi}_i\|^2.  
        \] 
        However, for upsampling, $\bm{\xi}_i\in \gI_{j, +}^{U, (t)}$, 
        \[
            |\langle \nabla_{\vw_{j}}^U\gL(\mW^{(t)}), \bm{\xi}_i\rangle| = \begin{cases}
                \frac{3}{N_{new}}l_i(t)\langle \vw_{j}^{(t)}, \bm{\xi}_i\rangle^2\|\bm{\xi}_i\|^2, & i = 1, \dots, \alpha N\\
                \frac{3k}{N_{new}}l_i(t)\langle \vw_{j}^{(t)},  \bm{\xi}_i\rangle^2\|\bm{\xi}_i\|^2, &i = \alpha N + 1, \dots, N
            \end{cases}.   
        \]
    \end{enumerate}
    A special case of this theorem is that with the same initializations $\mW^{(0)} \sim \mathcal{N}(0, \sigma_0^2)$,  for all the early iterations $0 \leq t \leq T$, if the synthetic noises are sufficiently small in the sense that 
        \begin{align*}
            \mathbb{E}\left[l_{\bm{\gamma}}(t)\langle \vw_{j}^{(t)},  \bm{\gamma}\rangle^2\|\bm{\gamma}\|^2\right] &< (k+1)\mathbb{E}\left[l_{\bm{\xi}}(t)\langle \vw_{j}^{(t)},  \bm{\xi}\rangle^2\|\bm{\xi}\|^2\right], 
        \end{align*}
    where $l_{\bm{\gamma}}(t)$, $l_{\bm{\xi}}(t)$ denote the logits as random variables under the corresponding noise at iteration $t$, 
    \[
        \text{then noises are overfitted less: } \quad  \mathbb{E}\left[\textbf{NoiseAlign}(\gI_{j, +}^{G, (t)}, \vw_{j}^{(t)})\right]  < \mathbb{E}\left[\textbf{NoiseAlign}(\gI_{j, +}^{U, (t)}, \vw_j^{(t)})\right]. 
    \]
\end{theorem}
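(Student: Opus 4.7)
I would prove the three items by direct computation of the gradient followed by projection onto noise and feature directions. Differentiating the logistic loss with $l_i^{(t)} = \text{sigmoid}(-y_i f(\vx_i; \mW^{(t)}))$ and using the cubic activation from Definition~\ref{def:CNN} gives
\[
\nabla_{\vw_j}\gL(\mW^{(t)}) = -\frac{1}{N_{new}}\sum_{i=1}^{N_{new}} l_i^{(t)}\, y_i \Bigl(3\langle \vw_j^{(t)}, \bm{\phi}_i\rangle^2 \bm{\phi}_i + 3 y_i \beta_i^3 \langle \vw_j^{(t)}, \vv_i\rangle^2 \vv_i\Bigr),
\]
where $\bm{\phi}_i$ is the noise patch and $\vv_i \in \{\vv_d, \vv_e\}$ (with $\beta_i \in \{\beta_d, \beta_e\}$) is the feature of sample $i$. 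Since noises are orthogonal to features by construction and cross-noise inner products $\langle \bm{\phi}_i, \bm{\phi}_j\rangle$ for $i \neq j$ are negligible under the assumption in Definition~\ref{def:data}, projecting onto any single noise direction $\bm{\phi}$ isolates only the diagonal terms involving copies of $\bm{\phi}$.

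\textbf{Noise and feature learning (items 1--2).} In $D_G$, every noise---whether an original $\bm{\xi}_i$ or a synthetic $\bm{\gamma}_i$---appears exactly once in $\bm{\Phi}_G$, so projection gives $|\langle \nabla_{\vw_j}^G\gL, \bm{\phi}_i\rangle| = \tfrac{3}{N_{new}} l_i^{(t)} \langle \vw_j^{(t)}, \bm{\phi}_i\rangle^2 \|\bm{\phi}_i\|^2$. In $D_U$, the $k$ duplicated slow samples share \emph{both} feature and noise, so their $k$ identical per-sample gradients add constructively along $\bm{\xi}_i$, yielding the factor $k$ in the stated formula. Feature learning matches ``in attempt'' because $D_U$ and $D_G$ contain the same counts of fast-learnable ($\alpha N$) and slow-learnable ($k(1-\alpha)N$) samples, so
\[
\langle \nabla_{\vw_j}\gL, \vv_d\rangle = -\frac{3\beta_d^3 \langle \vw_j^{(t)}, \vv_d\rangle^2}{N_{new}} \sum_{i:\, \vv_i=\vv_d} l_i^{(t)}
\]
has identical coefficient and summation length in both cases; the $k$ copies' logits agree exactly under upsampling and agree in expectation over $\bm{\gamma}$ under generation, giving equal expected feature contribution at early iterations. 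For the inert set $\gI_{j,-}^{\cdot,(t)}$, one has $y_i\langle\vw_j^{(t)},\bm{\phi}_i\rangle < 0$; the update $\vw_j^{(t+1)} = \vw_j^{(t)} - \eta\nabla\gL$ shifts $\langle \vw_j^{(t+1)}, \bm{\phi}_i\rangle$ in direction $\text{sgn}(y_i)$, which is opposite to its current sign, so $|\langle \vw_j^{(t+1)}, \bm{\phi}_i\rangle|$ strictly decreases for small $\eta$ and these noises are never learned, identically for $D_U$ and $D_G$.

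\textbf{Special case (item 3 inequality).} For the \textbf{NoiseAlign} inequality, I would exploit the symmetric Gaussian initialization: each noise independently falls into $\gI_{j,+}$ with probability $1/2$, so $\mathbb{E}|\gI_{j,+}^{\cdot,(t)}| \approx N_{new}/2$ early in training. For $D_U$, the $k$ duplicates of each slow noise either all belong to $\gI_{j,+}^{U,(t)}$ or none do (they share sign), each contributing magnitude $\tfrac{3k}{N_{new}}l_i\langle\vw_j,\bm{\xi}_i\rangle^2\|\bm{\xi}_i\|^2$, so
\[
\mathbb{E}\Bigl[\sum_{\bm{\xi}\in\gI_{j,+}^{U,(t)}}|\langle \nabla^U\gL,\bm{\xi}\rangle|\Bigr] = \frac{3\bigl[\alpha N + k^2(1-\alpha)N\bigr]}{2 N_{new}}\,\mathbb{E}\bigl[l_{\bm{\xi}}\langle\vw_j^{(t)},\bm{\xi}\rangle^2\|\bm{\xi}\|^2\bigr].
\]
The analogous sum for $D_G$ splits into $N$ original noises and $(k-1)(1-\alpha)N$ synthetic noises, each entering with weight $\tfrac{3}{N_{new}}$, giving $\tfrac{3}{2N_{new}}\bigl[N\,\mathbb{E}(l_{\bm{\xi}}\langle\vw_j,\bm{\xi}\rangle^2\|\bm{\xi}\|^2) + (k-1)(1-\alpha)N\,\mathbb{E}(l_{\bm{\gamma}}\langle\vw_j,\bm{\gamma}\rangle^2\|\bm{\gamma}\|^2)\bigr]$. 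Normalizing by $\mathbb{E}|\gI_{j,+}| \approx N_{new}/2$, subtracting, canceling $(1-\alpha)N$, and using $1 - k^2 = -(k-1)(k+1)$, the desired inequality collapses to
\[
(k-1)\Bigl[\mathbb{E}\bigl(l_{\bm{\gamma}}\langle\vw_j^{(t)},\bm{\gamma}\rangle^2\|\bm{\gamma}\|^2\bigr) - (k+1)\,\mathbb{E}\bigl(l_{\bm{\xi}}\langle\vw_j^{(t)},\bm{\xi}\rangle^2\|\bm{\xi}\|^2\bigr)\Bigr] < 0,
\]
which for $k > 1$ is exactly the stated hypothesis.

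\textbf{Main obstacle.} The principal bookkeeping difficulty is the dependence structure induced by $D_U$: the $k$ identical copies couple the per-sample logit $l_i^{(t)}$ to the repeated gradient contribution, and I must verify that this coupling still produces the clean multiplicity factor $k$ in $\tfrac{3k}{N_{new}}l_i\langle\vw_j,\bm{\xi}_i\rangle^2\|\bm{\xi}_i\|^2$ rather than a correlated expression that breaks the linear addition. A secondary technical concern is controlling the near-orthogonality of the Gaussian noises so that cross-terms $\langle\bm{\phi}_i,\bm{\phi}_j\rangle$, $i \neq j$, do not leak into the projected gradient; under the paper's standing assumption that such sums are negligible the isolation of diagonal terms is immediate, but pinning down quantitative error bounds in terms of $\sigma_p$, $d$, and $N_{new}$ would be required for a fully rigorous statement.
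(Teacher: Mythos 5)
Your proposal is correct and follows essentially the same route as the paper: the same per-filter gradient formulas projected onto orthogonal noise/feature directions give items 1–2 (with the factor $k$ arising from the $k$ identical duplicates), and your special-case computation—expressing each expected \textbf{NoiseAlign} as a population mixture over original and synthetic (resp.\ fast and $k^2$-weighted slow) noises and reducing the difference to $(k-1)\bigl[\mathbb{E}_{\bm{\gamma}} - (k+1)\mathbb{E}_{\bm{\xi}}\bigr] < 0$—is algebraically identical to the paper's chain of inequalities with mixture weights $p = N/N_{new}$ and $1-p$. The only cosmetic difference is that you carry the $1/2$ sign-probability and normalize by $\mathbb{E}|\gI_{j,+}| \approx N_{new}/2$ explicitly, whereas the paper writes the average per element directly and separately records (via its static-noise-sets lemma) that the sets do not change across early iterations.
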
 

Theorem \ref{thm:noise_overfitting} suggests that \alg~can prevent noise overfitting in expectation by synthetically generating faithful images that preserve features in real data with independent noise.
In particular, this metric of generation noise does not need to be strictly less than that of data noise during early training; instead, there exists a tolerance factor up to $k + 1$. 

\textbf{Remark.} For a diffusion model, as long as the noises in the synthetic data $\|\bm{\gamma}_i\|^2$ are not too large, noise learning under generation is more diffused due to its independence and operates more similarly to SAM, whereas upsampling amplifies learning on the duplicated noise, potentially elevating it to a new feature and increasing the \textbf{NoiseAlign} metric.

\subsection{Superior Convergence of Synthetic Data Augmentation over Upsampling}\label{subsec:convergence}

Next, we show the superior convergence of mini-batch Stochastic Gradient Descent (SGD)---which is used in practice---when training with \alg-generated synthetic augmentation compared to direct upsampling.
Consider training the above CNN using SGD with batch size B: 
$\vw_{j}^{(t + 1)} = \vw_{j}^{(t)} - \eta \frac{1}{B}\sum_{i=1}^B \nabla_{i, \vw_j^{{(t)}}}\gL(\mW^{(t)})$. 
The convergence rate of mini-batch SGD is inversely proportional to the batch size \citep{ghadimi2013stochastic}. 
The following theorem shows that upsampling inflates the variance of mini-batch gradients and thus slows down convergence. 

Note that it does not directly compare the relative magnitudes of the variances, since $\sigma_G(k)$, $\sigma_U(k)$ depend on datasets. Instead, it quantifies the sources of extra variance: \looseness=-1

\begin{theorem}[Variance of mini-batch gradients] \label{thm:mini_batch_variance}
   Suppose we train the model using mini-batch SGD with proper batch size $B$. Let $\mathbb{E}_{i \in D_G}[\|\vg_i - \bar{\vg}_G\|^2] \leq \sigma_G^2(k)$, $\mathbb{E}_{i \in D_U}[\|\vg_i - \bar{\vg}_U\|^2] \leq \sigma_U^2(k)$ be the variances of the per-example gradients for generation and upsampling (where $\vg_i$ is the gradient of the $i$-th data and $\bar{\vg}$ is the full gradient). Let $\hat{\vg}_U$ and $\hat{\vg}_G$ be the mini-batch gradients. We have: 
    \[
        \mathbb{E}_{D_G}[\|\hat{\vg}_G - \bar{\vg}_G\|^2] \leq \frac{\sigma_G^2(k)}{B}, 
    \]
    \[
        \mathbb{E}_{D_U}[\|\hat{\vg}_U - \bar{\vg}_U\|^2]\leq I, \quad \text{where }I \geq\frac{\sigma_U^2(k)}{B} \left(1+\frac{k(k-1)(1-\alpha)}{(\alpha + k(1-\alpha))^2}\frac{B}{N}\right). 
    \]
\end{theorem}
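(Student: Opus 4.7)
The plan is to compute the second moment $\mathbb{E}[\|\hat{\vg}-\bar{\vg}\|^2]$ under each sampling regime by expanding it into a sum of pairwise inner products over the $B^2$ slots of the mini-batch. The guiding principle is that generation produces \emph{independent} samples, so the pairwise cross terms vanish and one recovers the textbook $\sigma^2/B$ bound, whereas upsampling produces \emph{identical} copies, so some cross terms become non-negative self-second-moments that strictly inflate the variance.

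First, for $D_G$, I would use the assumption from Section~\ref{subsec:syn_amplify} that the synthetic noises $\{\bm{\gamma}_i\}$ are i.i.d.\ from $\gD_{\bm{\gamma}}$ and independent of the real noises $\{\bm{\xi}_i\}$. Together with the independence of feature assignments across samples, this makes the per-example gradients $\{\vg_i\}_{i\in D_G}$ mutually independent, so under uniform mini-batch sampling with indices $I_1,\dots,I_B$,
\[
\mathbb{E}\bigl[\|\hat{\vg}_G-\bar{\vg}_G\|^2\bigr]
= \frac{1}{B^2}\sum_{b=1}^B \mathrm{Var}(\vg_{I_b})
\le \frac{\sigma_G^2(k)}{B},
\]
which is the desired bound for generation.

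Second, for $D_U$, I would do the same expansion but keep the cross terms. Under uniform sampling from $D_U$ (size $N_{\mathrm{new}}=(\alpha+k(1-\alpha))N$), the probability that two distinct batch slots $b\ne b'$ land on two copies of the same slow-learnable example is
\[
p_{\text{dup}} = \sum_{j\text{ slow}}\frac{k(k-1)}{N_{\mathrm{new}}(N_{\mathrm{new}}-1)}
= \frac{k(k-1)(1-\alpha)N}{N_{\mathrm{new}}(N_{\mathrm{new}}-1)},
\]
and whenever this event occurs the ``cross'' term is actually $\mathbb{E}\|\vg_j-\bar{\vg}_U\|^2\ge 0$ rather than a genuine covariance that could cancel. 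Collecting all $B(B-1)$ ordered pairs of batch slots, this contributes a strictly positive correction term to the batch variance on top of the diagonal contribution $B\sigma_U^2(k)/B^2$. Substituting $N_{\mathrm{new}}-1 \approx N_{\mathrm{new}}$ for large $N$ and rearranging yields the inflation factor $1+\tfrac{k(k-1)(1-\alpha)}{(\alpha+k(1-\alpha))^2}\tfrac{B}{N}$ multiplying $\sigma_U^2(k)/B$, so any bound $I$ on the mini-batch variance must satisfy the stated inequality.

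The main obstacle is the \emph{attribution} step in the upsampling analysis: duplicated examples inflate $\sigma_U^2(k)$ itself (through their multiplicity in the empirical distribution defining $\bar{\vg}_U$) \emph{and} the within-batch correlation (through the probability of co-sampling duplicates). To get a clean bound in the form $\tfrac{\sigma_U^2(k)}{B}(1+\cdot)$, I need to disentangle these, probably by lower-bounding the self-second-moment on duplicated indices, $\mathbb{E}\|\vg_j-\bar{\vg}_U\|^2$ for slow $j$, by a uniform fraction of $\sigma_U^2(k)$ (this is natural since the slow-learnable block dominates $D_U$ when $k$ is large) and absorbing the ratio $(N_{\mathrm{new}}-1)/N_{\mathrm{new}}$ and any finite-population corrections into the ``$\ge$'' direction rather than trying for equality. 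The generation bound does not require this care because independence gives the classical bound directly.
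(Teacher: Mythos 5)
Your proposal is correct in substance and lands on the same inflation factor, but it organizes the computation differently from the paper. The paper works with a \emph{stratified} mini-batch: it fixes $B_1$ and $B_2$ (the numbers of samples drawn from the independent block and the replicated block) in proportion to the block sizes, writes $\sum_{i\in B_2}\vg_i=\sum_{i=1}^{(1-\alpha)N}Y_i\vg_i$ with $Y_i$ a hypergeometric occupancy count, and bounds $\mathbb{E}[Y_i^2]$ to obtain $I=\frac{\sigma_U^2(k)}{B}\bigl(1+\frac{k(k-1)(1-\alpha)}{(\alpha+k(1-\alpha))^2}\frac{B}{N}\bigr)$ directly, remarking that the unstratified variance can only be larger. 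You instead expand over the $B^2$ slot pairs of a uniform draw from $D_U$ and isolate the probability $p_{\text{dup}}$ that two distinct slots hit copies of the same example; since $\mathbb{E}[Y_i(Y_i-1)]$ is exactly the expected number of such collisions on example $i$, the two computations are combinatorially equivalent, and your correction term $\frac{B(B-1)}{B^2}\,p_{\text{dup}}\,\sigma_U^2(k)$ reproduces the paper's. Two points of care. First, under sampling without replacement the cross terms for \emph{distinct} underlying examples are not merely ``cancelable'' — they are systematically negative (the centered gradients sum to zero over the dataset), and it is precisely by discarding these negative terms that the duplicate collisions appear as a pure additive surplus; your write-up should make that explicit rather than treat them as noise. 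Second, the attribution obstacle you identify (whether $\mathbb{E}\|\vg_j-\bar{\vg}_U\|^2$ on duplicated indices is a definite fraction of $\sigma_U^2(k)$) is genuine if you want to \emph{lower-bound the actual variance} by $\frac{\sigma_U^2(k)}{B}(1+\cdots)$, but the theorem as stated does not require this: the paper sidesteps it by applying the bound $\|\vg_i-\bar{\vg}_U\|^2\le\sigma_U^2(k)$ per example to both the diagonal and the collision terms, so the derived upper bound $I$ \emph{equals} the displayed expression by construction and the inequality $I\ge\frac{\sigma_U^2(k)}{B}(1+\cdots)$ holds trivially. Your route proves something slightly stronger if you can carry out the lower bound on the slow-example second moments, at the cost of an extra assumption the paper does not need.
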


From Theorem \ref{thm:mini_batch_variance}, we see that all the variance for generation solely comes from the per-example variance, potentially getting larger when synthetic images diversify the dataset. However, upsampling induces one extra term that results from repeated noises and unnecessarily inflates the variance due to dependence within the dataset. This is empirically justified in our ablation studies in Section \ref{subsec:ablation}. 

\begin{corollary} \label{cor:convergence_rate}
    As long as the generation noise is small enough, i.e., $\sigma_G^2(k)\leq\sigma_U^2(k)$, convergence of mini-batch SGD on synthetically augmented data is faster than upsampled data. 
\end{corollary}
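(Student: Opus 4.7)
The plan is to combine Theorem~\ref{thm:mini_batch_variance} with the standard non-convex mini-batch SGD convergence bound of \citet{ghadimi2013stochastic} and show that the resulting rate guarantee for the generated dataset is uniformly tighter than for the upsampled dataset. Because the Ghadimi--Lan rate depends monotonically on the per-iteration mini-batch variance proxy, the corollary will follow once we verify that the generation variance bound is smaller under the stated hypothesis.

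First I would invoke the standard result: for an $L$-smooth (possibly non-convex) empirical loss $\gL$, mini-batch SGD with step size $\eta \leq 1/L$ and per-iteration variance bound $\sigma^2$ satisfies
$$\min_{0 \le t \le T}\mathbb{E}\!\left[\|\nabla \gL(\mW^{(t)})\|^2\right] \;\le\; \frac{2(\gL(\mW^{(0)}) - \gL^*)}{\eta T} + L\eta\sigma^2,$$
which, optimized over $\eta$, yields the familiar $O(\sqrt{L\sigma^2/T})$ non-convex SGD rate. Instantiating this separately for the two trajectories and substituting the bounds from Theorem~\ref{thm:mini_batch_variance} gives mini-batch variance proxies
$$\sigma^2_G \;\le\; \frac{\sigma_G^2(k)}{B}, \qquad \sigma^2_U \;\le\; \frac{\sigma_U^2(k)}{B}\!\left(1 + \frac{k(k-1)(1-\alpha)}{(\alpha + k(1-\alpha))^2}\frac{B}{N}\right).$$

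Next I would combine the hypothesis $\sigma_G^2(k) \le \sigma_U^2(k)$ with the fact that for any $k > 1$ and $\alpha < 1$ the extra factor multiplying $\sigma_U^2(k)/B$ is strictly greater than $1$. This gives $\sigma^2_G < \sigma^2_U$, hence a strictly smaller $O(\sqrt{L\sigma^2/T})$ bound for generation, which is the claim. The extra term on the upsampling side directly reflects the per-example gradient dependence introduced by duplicated noises (as already characterized in Theorem~\ref{thm:noise_overfitting}), whereas the synthetic samples contribute only through $\sigma_G^2(k)$; the gap therefore scales with the replication factor $k$ and vanishes only as $B/N \to 0$.

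The main obstacle is that this argument compares \emph{upper bounds} rather than exact per-iteration suboptimality, so strictly speaking the corollary is a statement about guaranteed convergence rates. To make the comparison faithful I would (i)~fix a common smoothness constant $L$ (or take the larger of the two), (ii)~either tune $\eta$ optimally for each method or fix a common schedule $\eta = \Theta(1/\sqrt{T})$, and (iii)~observe that the initialization and deterministic terms in the Ghadimi--Lan bound are identical across the two regimes, so any slack beyond the variance proxy is symmetric and a strict improvement in $\sigma^2$ propagates to a strict improvement in the rate. With these minor bookkeeping steps the conclusion follows directly; no additional technical machinery beyond Theorem~\ref{thm:mini_batch_variance} is required.
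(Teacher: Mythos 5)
Your proposal is correct and follows essentially the same route the paper intends: the paper gives no standalone proof of this corollary, treating it as an immediate consequence of Theorem~\ref{thm:mini_batch_variance} together with the cited dependence of the Ghadimi--Lan mini-batch SGD rate on the per-iteration gradient variance, which is exactly the substitution you carry out. Your explicit caveat that the comparison is between variance \emph{upper bounds} (so the corollary is really a statement about guaranteed rates) is a fair and accurate reading --- the paper itself concedes this in the remark preceding Theorem~\ref{thm:mini_batch_variance}.
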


\subsection{Generating Faithful Synthetic Images via Diffusion Models} \label{subsec:diffusion}

Finally, we describe how \alg~generates \textit{faithful} images for the slow-learnable part of the data. From Theorem \ref{thm:noise_overfitting} we know that while the noise in the synthetic data can be larger than that of the original data, it should be small enough to yield a similar feature learning behavior to SAM.

Synthetic image generation with diffusion models \citep{pmlr-v37-sohl-dickstein15,ho_denoising_2020,pmlr-v139-nichol21a,10.1145/3626235} involves a forward process to iteratively add noise to the images, followed by a reverse process to learn to denoise the images.
Specifically, the forward process progressively adds noise to the data $x_0$ over $T$ steps, with each step modeled as a Gaussian transition: $q(\mathbf{x}_t | \mathbf{x}_{t-1}) = \mathcal{N}(\mathbf{x}_t; \sqrt{1-\beta_t} \mathbf{x}_{t-1}, \beta_t \mathbf{I})$, where $\beta_t$ controls the noise added at each step. The reverse process inverts the forward process, learns to denoise the data, with the goal of recovering the original data $x_0$ from a noisy sample $x_T$: $ p_\theta(\mathbf{x}_{t-1} | \mathbf{x}_t) = \mathcal{N}(\mathbf{x}_{t-1}; \mu_\theta(\mathbf{x}_t, t), \Sigma_\theta(\mathbf{x}_t, t)) $, where mean $ \mu_\theta $ is conditioned on the sample at the previous time step and variance $\Sigma_\theta $ follows a fixed schedule.\looseness=-1

To ensure generating images that are faithful to real data, we use the real images as guidance to generate synthetic images. 
Specifically, while using the class name (e.g., ``a photo of a dog'') as the text prompt, we also incorporated the original real samples as guidance. 
More formally, instead of sampling a a pure noisy image $x_T \sim \mathcal{N}(0, I)$ as the initialization of the reverse path, we add noise to a reference (real) image $x_0^{ref}$ such that the noise level corresponds to a certain time-step $t_*$. 
Then we begin denoising from time-step $t_*$, using an open-source text-to-image model, e.g. GLIDE \citep{nichol2021glide}, to iteratively predict a less noisy image $x_{t-1} (t = T, T - 1, \cdots, 1)$ using the given text prompt $l$ and the noisy latent image $x_t$ as inputs.
This technique enables produce synthetic images that are similar, yet distinct, from the original examples, and has been successfully used for synthetic image generation for few-shot learning \citep{he_is_2023}. 
Pseudocode of \alg~is illustrated in Appendix~\ref{app:pseudocode} Alg. \ref{alg:Method}. \looseness=-1

\noindent\textbf{Generality.} While we describe \alg~using diffusion models, our targeted augmentation strategy is compatible with other diffusion models and synthetic generation pipelines; as shown in our experiments, it can be instantiated with different generative backbones.

\vspace{-1mm}\section{Experiment}\vspace{-1mm}
In this section, we evaluate the effectiveness of our synthetic augmentation strategy on various datasets and model architectures. We also conduct an ablation study on different parts of our method. 

\textbf{Base training datasets.} We use common benchmark datasets for image classification including CIFAR10, CIFAR100~\citep{krizhevsky2009learning}, Tiny-ImageNet~\citep{le2015tiny}, ImageNet~\citep{deng2009imagenet}, Flowers-102~\citep{nilsback2008automated}, Aircraft~\citep{maji13fine-grained}, and Stanford Cars~\citep{krause20133d}.\looseness=-1

\textbf{Augmented training datasets.} We train different models on:
(1) \textbf{Original}: The original training datasets without any modifications.
(2) \textbf{\useful}~\citep{nguyen2024changing}: Augmented dataset with upsampled real images that are not learned in early training.
(3) \textbf{\alg}: Replace the upsampled samples in \useful~with their corresponding synthetic images. For $k > 2$, we use generated images at different diffusion denoising steps, instead of generating from scratch.  


\textbf{Hyperparameters.} For \useful, we set the upsampling factor $k$ to 2 as it yields the best performance. For \alg, we use $k=5$ for CIFAR10 and CIFAR100 and $k=4$ for TinyImageNet. For generating synthetic images using GLIDE, we use guidance scale of 3 and run denoising for 100 steps, saving generated images every 10 steps. More training details are in Appendix~\ref{app:add_exp_settings}.

\subsection{\alg~is Effective across Datasets and Architectures}

\textbf{Different datasets.} 
Figure~\ref{fig:diff_datasets} clearly shows that \alg~significantly reduces the test classification error compared to both the \textbf{Original} and \textbf{\useful} methods across all datasets, namely CIFAR10, CIFAR100, and Tiny-ImageNet. For Tiny ImageNet, \alg~yields an improvement of 2.8\% when training with SAM. The superior performance of \alg~compared to \useful~is well aligned with our theoretical results in Section~\ref{subsec:convergence}. Notably, SGD with \alg~outperforms SAM on CIFAR100 and TinyImageNet, further confirming the effectiveness of our approach. Importantly, this advantage extends to large-scale settings in Figure~\ref{fig:imagenet}. \alg~achieves the highest Top-1 and Top-5 accuracies on ImageNet when training both ResNet18 and ResNet50. Moreover, it outperforms Boomerang while augmenting only 65\% of the dataset, compared to Boomerang’s 100\% synthetic augmentation.\looseness=-1 

\begin{figure}[!t]
\begin{center}
\centerline{\includegraphics[width=\columnwidth]{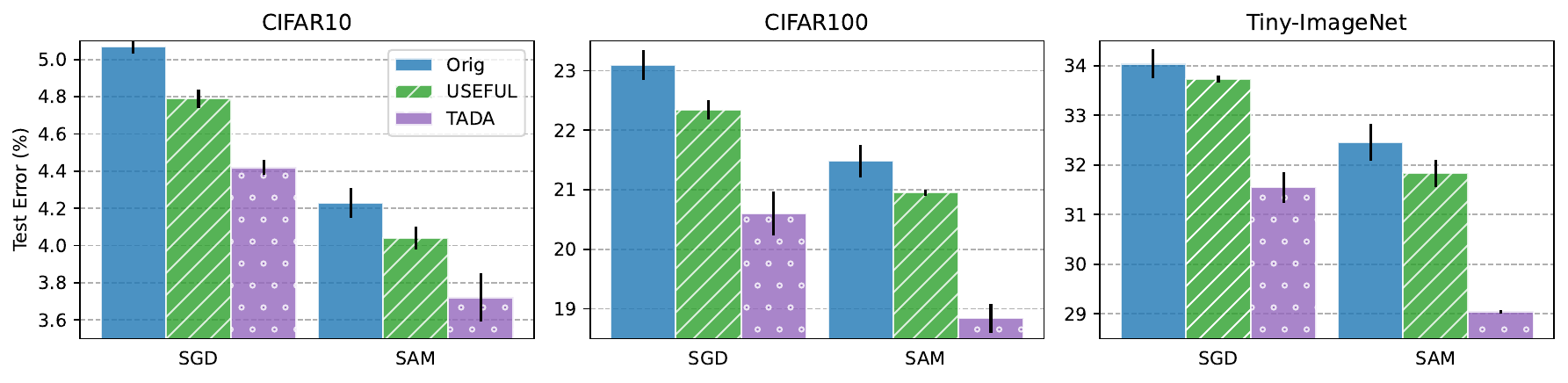}}
\vspace{-2mm}
\caption{
Test classification error of ResNet18 on CIFAR10, CIFAR100 and TinyImageNet. For \useful, we use a factor of $k=2$, as higher $k$ harms the performance. In contrast for \alg, $k=5, 5, 4$ for CIFAR10, CIFAR100, and Tiny-ImageNet, respectively. \alg~improves both SGD and SAM. Notably, it enables SGD to outperform SAM on CIFAR100 and TinyImageNet.
}
\label{fig:diff_datasets}
\end{center}
\vspace{-4mm}
\end{figure}

\begin{figure}[!t]
\begin{center}
\centerline{\includegraphics[width=0.8\columnwidth]{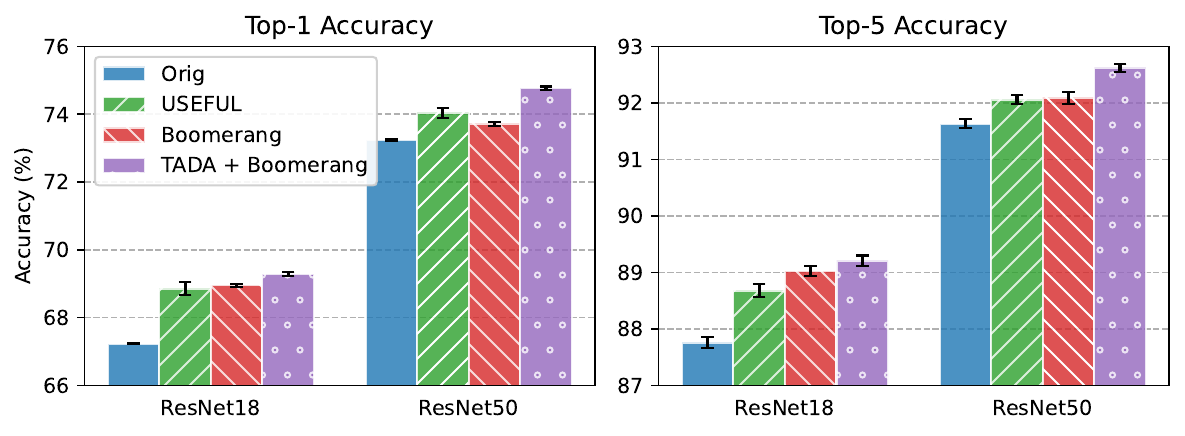}}
\vspace{-2mm}
\caption{
Top-1 and top-5 accuracies of training ResNet18 and ResNet50 on ImageNet.
}
\label{fig:imagenet}
\end{center}
\vspace{-4mm}
\end{figure}

\textbf{Different model architectures.} To further evaluate the generalization of our approach, we conduct experiments on multiple model architectures using~\textbf{CIFAR10} as the base dataset. Specifically, we apply \alg~to CNNs (VGG19, DenseNet121) and Transformers (ViT-S). Figure~\ref{fig:diff_arch} presents the test classification error for different architectures. The results demonstrate that \alg~achieves consistently lower classification error than both the \textbf{Original} and \textbf{\useful} methods across all architectures, under both SGD and SAM optimization settings. {Moreover, when applied to state-of-the-art architectures such as ConvNeXt~\citep{liu2022convnet} and Swin Transformer~\citep{liu2021swin}, \alg~still outperforms the baselines by a substantial margin, as reported in Table~\ref{tab:sota_arch}}. These findings confirm the effectiveness of our approach across different model structures.\looseness=-1

\textbf{Transfer learning and stacking with other synthetic image generations. }
We evaluate \alg~in a transfer learning setting, where we fine-tune a ResNet18 pre-trained on ImageNet-1K on {3 popular fine-grained image classification datasets} including Flowers-102, Aircraft, and Stanford Cars datasets.
Table~\ref{tab:transfer_learning} compares \alg~with DiffuseMix \citep{islam2024diffusemix} which is a SOTA data augmentation method. It enhances diversity of synthetic images by blending partial natural images with images generated via InstructPix2Pix~\citep{brooks2023instructpix2pix} diffusion model. 
Applying \alg~to augment slow-learnable images with DiffuseMix significantly outperform no-augmentation and augmenting entire dataset with DiffuseMix.

\begin{figure}[!t]
\vskip -0.2in
\begin{center}
\centerline{\includegraphics[width=\columnwidth]{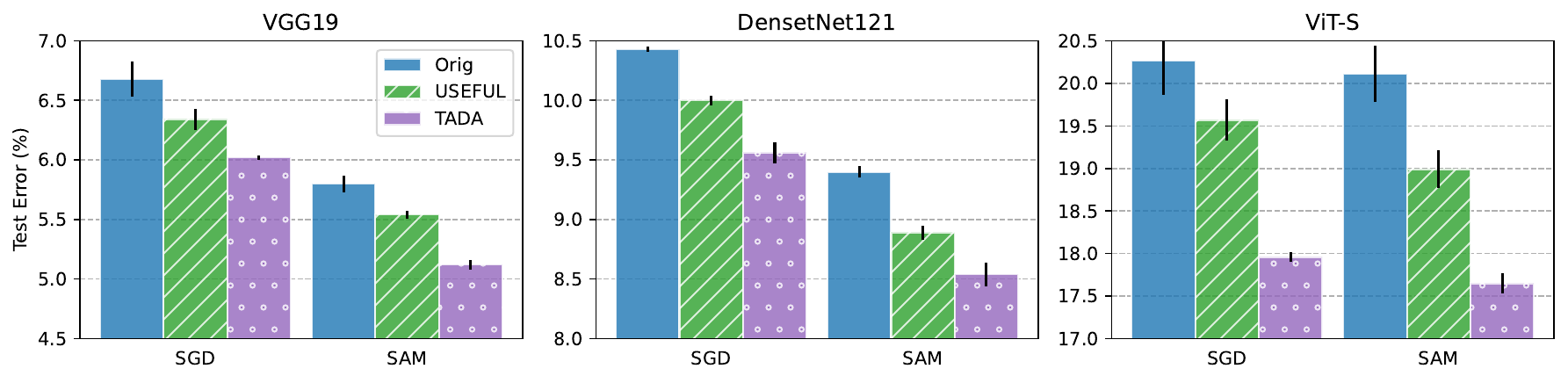}}
\vspace{-2mm}
\caption{
Test classification error of VGG19, DenseNet121, and ViT-S on CIFAR10. For \useful, we use a factor of $k=2$---as higher $k$ hurst the performance---while for \alg, we use $k=5$.
}
\label{fig:diff_arch}
\end{center}
\vspace{-3mm}
\end{figure}

\begin{table}[!t]
\vskip -0.1in
\centering
\begin{minipage}{.43\textwidth}
    \caption{Test error of ConvNeXt-T and Swin-T on CIFAR-10 using SGD and $k\!=\!2$.}\vspace{-1mm}
    \label{tab:sota_arch}\vspace{-4mm}
    \begin{center}
    \begin{small}
    \scalebox{0.9}{
    \begin{tabular}{c|c|c}
            \toprule
            Method & ConvNeXtT & SwinT \\
            \midrule
            Original & 37.33 $\pm$ 3.12 & 16.10 $\pm$ 0.19 \\
            \useful & 34.16 $\pm$ 2.47 & 14.93 $\pm$ 0.07 \\
            \alg & \textbf{27.40} $\pm$ \textbf{1.99} & \textbf{14.57} $\pm$ \textbf{0.10} \\
            \bottomrule
        \end{tabular}
    }
    \end{small}
    \end{center}
\end{minipage}%
\hfill
\begin{minipage}{.54\textwidth}
    \caption{Test error of pre-trained ResNet18 on Flowers-102, Aircraft, and Stanford Cars datasets.}\vspace{-1mm}
    \label{tab:transfer_learning}\vspace{-4mm}
    \begin{center}
    \begin{small}
    \scalebox{0.9}{
    \begin{tabular}{c|c|c|c}
            \toprule
            Method & Flowers-102 & Aircraft & Stanford Cars \\
            \midrule
            Original & 8.55 $\pm$ 0.19 & 26.02 $\pm$ 0.27 & 15.45 $\pm$ 0.02 \\
            DiffuseMix & 8.92 $\pm$ 0.15 & 25.65 $\pm$ 0.20 & 15.19 $\pm$ 0.06 \\
            \alg+DifMx & \textbf{8.08} $\pm$ \textbf{0.16} & \textbf{25.12} $\pm$ \textbf{0.35} & \textbf{14.96} $\pm$ \textbf{0.07} \\
            \bottomrule
        \end{tabular}
    }
    \end{small}
    \end{center}
    \vspace{-3mm}
\end{minipage}
    \vspace{-3mm}
\end{table}

\textbf{Do We Need All the Synthetic Data?}
To answer this question, we compare synthetically augmenting all examples (\textbf{Syn-all})--which doubles the training set size--with \alg~with $k=2$--which results in increase of approximately 30\% and 40\% of the total training data--in CIFAR10 and CIFAR100. 
Notably, the generation time for \alg~is reduced to 0.3x and 0.4x that of \textbf{Syn-all}, making it more efficient. In addition, we consider a baseline (\textbf{Syn-rand}) where random images are augmented with their corresponding synthetic ones. Figure~\ref{fig:upsampling_strategy} shows that \alg~has a much lower test classification error compared to \textbf{Syn-rand} (same cost) and \textbf{Syn-all} (higher cost). This highlights the effectiveness of our \textit{targeted} data augmentation.

\begin{figure}[!t]
\begin{center}
\centerline{\includegraphics[width=\columnwidth]{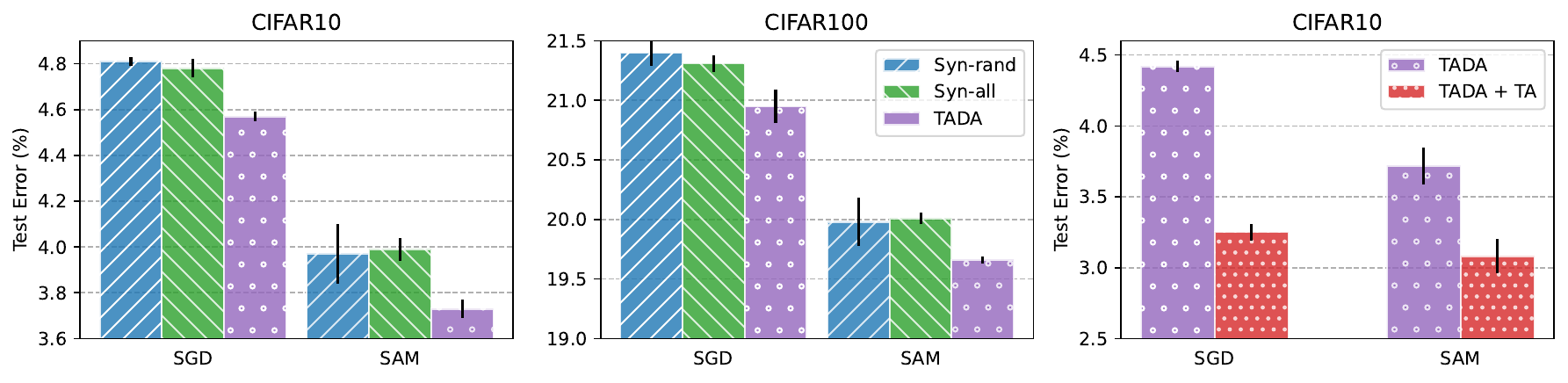}}
\vspace{-2mm}
\caption{(left \& middle) Comparison between different synthetic image augmentation strategies when training ResNet18 on CIFAR10 and CIFAR100. For Syn-rand and \alg, we use $k=2$ resulting in only 30\% and 40\% additional examples compared to 100\% of Syn-all. (right) \alg~with $k = 5$ can be stacked with TrivialAugment (TA) to further boosts the performance when training ResNet18 on CIFAR10, achieving (to our knowledge) SOTA test classification error.
}
\label{fig:upsampling_strategy}
\end{center}
\vskip -0.4in
\end{figure}

\begin{figure*}[t!]
    \captionsetup[subfigure]{labelformat=empty}
    \centering
    \begin{subfigure}{0.3\textwidth}
        \centering
        \includegraphics[width=\columnwidth]{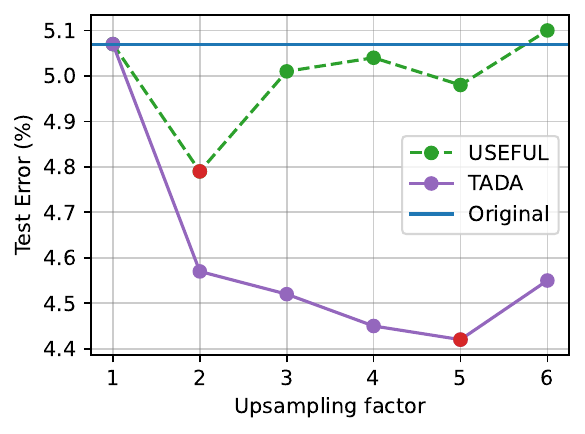}
        \caption{\phantom{Upsampling factor}}
        \label{fig:vary_uf_syn}
    \end{subfigure}
    \hfill
    \begin{subfigure}{0.3\textwidth}
        \centering
        \includegraphics[width=\columnwidth]{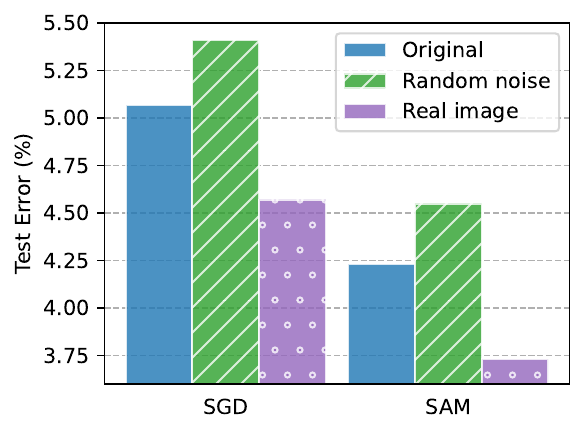}
        \caption{
        }
        \label{fig:denoising_strategy}
    \end{subfigure}
    \hfill
    \begin{subfigure}{0.3\textwidth}
        \centering
        \includegraphics[width=\columnwidth]{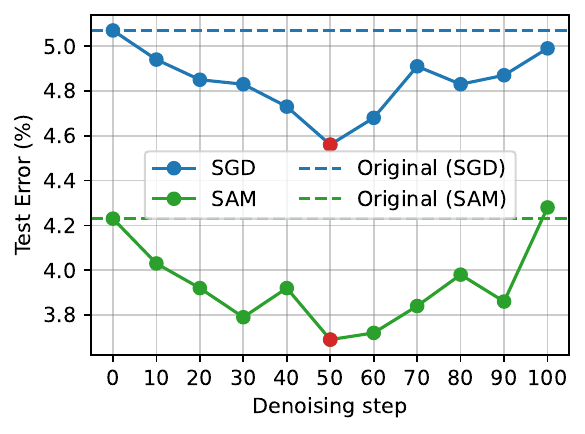}
        \caption{}
        \label{fig:vary_steps}
    \end{subfigure}
    \hfill
    \vspace{-7mm}
    \caption{Training ResNet18 on CIFAR10. (left) The effect of amplification factor $k$ on test error for upsampling vs generation.
    Red points indicate the optimal choice of $k$. $k>2$ hurts upsampling but boosts generation. (middle) Generating synthetic CIFAR10 images from real images outperform starting from random noise.  
    (right) Effect of the number of denoising steps on the performance 
    with $k = 2$. \looseness=-1}
    \label{fig:ablation_studies}
    \vspace{-4mm}
\end{figure*}

\vspace{-1mm}
\subsection{Ablation studies}  \label{subsec:ablation}\vspace{-1mm}
\textbf{Application to non-diffusion augmentation.} Beyond diffusion-based augmentation, our targeted augmentation generalizes to traditional data augmentation. We evaluate this by applying TrivialAugment (TA)~\citep{Muller_2021_ICCV} to augment upsampled data. As summarized in Table~\ref{tab:non_diff_augmentation}, targeted augmentation consistently outperforms full-data augmentation.

\textbf{\alg~stacked with strong augmentation.}  
Figure~\ref{fig:upsampling_strategy} right shows that \alg~stacked with TrivialAugment achieves state-of-the-art results when training ResNet18 on CIFAR10.  Appendix~\ref{app:add_exp_results} shows similar results for CIFAR100 and Tiny-ImageNet.

\textbf{Identifying slow-learnable features.} Table~\ref{tab:select_strategy} in Appendix~\ref{app:add_exp_results} shows that identifying slow-learnable features by clustering model outputs outperforms selection based on high-loss or misclassification. \looseness=-1

\textbf{For larger $k$, upsampling hurts but generation helps.} Figure~\ref{fig:vary_uf_syn} illustrates the performance of \alg~and \useful~on CIFAR10 when varying the upsampling factors. 
Upsampling achieves the best performance at $k = 2$ due to overfitting noise at larger $k$. 
But using synthetic images, \alg~benefits from larger values of $k$, yielding the best performance at $k = 5$ for CIFAR10 and CIFAR100, and $k = 4$ for Tiny-ImageNet. Detailed results for CIFAR100 and Tiny-ImageNet can be found in Table~\ref{tab:up_factor} in Appendix~\ref{app:add_exp_results}. This corroborates our theoretical findings in Section~\ref{subsec:syn_amplify}. \looseness=-1

\textbf{Choices of initialization for denoising.} 
We compare generating synthetic images by adding noise and denoising real images with denoising from random noises. Figure~\ref{fig:denoising_strategy} illustrates that real data guidance is necessary for targeted synthetic image augmentation.  
When using random noises to generate synthetic images, performance of ResNet18 on the augmented training datasets is even worse than that on the original set, as generated images do not effectively amplify slow-learnable features. \looseness=-1 

\textbf{Number of denoising steps.} Figure~\ref{fig:vary_steps} demonstrates the effect of the number of denoising steps on the performance of \alg. Using fewer steps generates images that are too close to real images, amplifying the noise in the real images. In contrast, using too many steps results in too much noise, which also harms the performance. Overall, using 50 steps yields the best results for both SGD and SAM.\looseness=-1

\textbf{Convergence.} Figure~\ref{fig:grad_variance} in Appendix~\ref{app:add_exp_results} shows the mini-batch gradient variance when training ResNet18 
on the augmented CIFAR10 dataset using upsampling and generation. Generation results in lower gradient variance compared to upsampling for different $k$, confirming our results in Sec. \ref{subsec:convergence}.\looseness=-1

\textbf{Object detection experiments.} On MS-COCO object detection (Table~\ref{tab:object_detection}), \alg~consistently improves both $\text{AP}{50}$ and $\text{mAP}{50-95}$ when training YOLOv5m from scratch. It outperforms InstanceAugmentation and all other baselines while using 25\% fewer augmented images. These results demonstrate that \alg~extends beyond classification and remains effective for dense prediction tasks.\looseness=-1

\vspace{-1mm}
\section{Conclusion}\vspace{-1mm}
Diffusion-based synthetic data augmentation improves generalization but often enlarges datasets by 10–30×, incurring high computational cost and limited diversity control. We proposed \alg~(\tda), a principled framework that selectively augments examples not learned early in training using faithful synthetic images with diverse noise.
We showed that augmenting only this targeted subset consistently outperforms full augmentation, and our theoretical analysis explains the gains through more uniform feature learning without noise amplification.
Across architectures (ResNet, ViT, ConvNeXt, Swin), datasets (CIFAR-10/100, TinyImageNet, ImageNet), and optimizers (SGD, SAM), augmenting just 30–40\% of the data yields up to 2.8\% improvement. Notably, \alg~with SGD surpasses SAM on CIFAR-100 and TinyImageNet. Results on object detection further demonstrate its effectiveness beyond classification.\looseness=-1

\section*{Acknowledgements}
This research was supported in part by in part by the NSF CAREER Award 2146492, NSF-Simons AI Institute for Cosmic Origins (CosmicAI), and NSF AI Institute for Foundations of Machine Learning (IFML).

\bibliography{iclr2026_conference}
\bibliographystyle{iclr2026_conference}

\newpage

\appendix

\newcommand{\todoblue}[1]{\textcolor{blue}{[TODO: #1]}}

\section{Formal Proofs}\label{appx:proofs}

\subsection{Full Gradient Formulas \& Useful Lemmas}

From the setting, at iteration $t$, we can take the derivative with respect to the j-th filter $\vw_j^{(t)}$ in the following three training schemes: 
\begin{enumerate}
    \item GD training on the augmented dataset via upsampling: Lemmas \ref{lem:upsampling_gradient}, \ref{lem:upsampling_patch_gradient}. 
    \item GD training on the augmented dataset via synthetic generation: Lemmas \ref{lem:generation_gradient}, \ref{lem:generation_patch_gradient}. 
    \item SAM and GD training on the original dataset: Lemmas \ref{lem:original_gradient}, \ref{lem:original_patch_gradient}. 
\end{enumerate}
Recall that $k$ is the augmenting factor, and the augmented datasets $D_G$, $D_U$ now have size $N_{\text{new}} = \alpha N + k(1 - \alpha)N$. 

\begin{lemma} (Upsampling: full gradient) \label{lem:upsampling_gradient} In the augmented dataset $D_U$ after upsampling the ``slow-learnable'' subset with a factor $k$, for $t \geq 0$ and $j \in [J]$, the gradient of the loss $\gL^U(\mW^{(t)})$ with respect to $\vw_j^{(t)}$ is
\begin{align*}
    \nabla_{\vw_j^{{(t)}}}^{U}\gL(\mW^{(t)}) = & 
    -\frac{3}{N_{\text{new}}}\sum_{i=1}^{\alpha N} l_i^{(t)}\left( 
    \beta_e^3 \langle \vw_j^{(t)}, \vv_e \rangle ^2 \vv_e + 
    y_i \langle \vw_j^{(t)}, \vxi_i \rangle ^2 \vxi_i\right) \\
    &-\frac{3k}{N_{\text{new}}} \sum_{i=\alpha N + 1}^{N} kl_i^{(t)}\left( \beta_d^3 \langle \vw_j^{(t)}, \vv_d \rangle ^2 \vv_d + 
    y_i \langle \vw_j^{(t)}, \vxi_i \rangle ^2 \vxi_i\right), 
\end{align*}
where $l_i^{(t)} = \text{sigmoid}(-y_if(\vx_i; \mW^{(t)}))$ for the two-layer CNN model $f$. 
\end{lemma}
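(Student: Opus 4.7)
The plan is a direct chain-rule calculation, the only real subtlety being the bookkeeping of the $k$-fold replication of slow-learnable samples in $D_U$. Recall that the logistic loss $l(z)=\log(1+\exp(-z))$ satisfies $l'(z)=-\mathrm{sigmoid}(-z)$, so evaluating at $z = y_i f(\vx_i;\mW^{(t)})$ and pulling out the $y_i$ from the inner derivative gives
\[
\nabla_{\vw_j}\, l\!\left(y_i f(\vx_i;\mW^{(t)})\right) \;=\; -\,y_i\,l_i^{(t)}\,\nabla_{\vw_j} f(\vx_i;\mW^{(t)}),
\]
with $l_i^{(t)} = \mathrm{sigmoid}(-y_i f(\vx_i;\mW^{(t)}))$ as defined in the statement.

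First I would compute $\nabla_{\vw_j} f$. Because $f$ sums $\langle \vw_{j'}, \vx_i^{(p)}\rangle^3$ over filters $j'$ and the two patches $p\in\{1,2\}$, only the $j'=j$ contributions survive, and each produces $3\langle \vw_j, \vx_i^{(p)}\rangle^2\, \vx_i^{(p)}$. By Definition 3.1 with $P=2$, each $\vx_i$ is composed of one feature patch (either $\beta_e y_i \vv_e$ or $\beta_d y_i \vv_d$) and one noise patch $\vxi_i$, so $\nabla_{\vw_j} f(\vx_i;\mW^{(t)})$ splits into a feature term and a noise term.

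Next I would simplify the feature contribution using the odd-cube identities $y_i^3=y_i$ and $y_i^2=1$. For a $\vv_e$-example the feature-patch term is $3\langle \vw_j, \beta_e y_i \vv_e\rangle^2(\beta_e y_i \vv_e)=3\beta_e^3 y_i\langle \vw_j,\vv_e\rangle^2 \vv_e$, and multiplying by the chain-rule prefactor $-y_i l_i^{(t)}$ collapses the label and yields $-3 l_i^{(t)}\beta_e^3 \langle \vw_j,\vv_e\rangle^2 \vv_e$. The analogous step for a $\vv_d$-example replaces $(\beta_e,\vv_e)$ by $(\beta_d,\vv_d)$. The noise patch is label-free in the cube, so no cancellation happens there and the per-example noise contribution stays as $-3 y_i l_i^{(t)} \langle \vw_j,\vxi_i\rangle^2 \vxi_i$.

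Finally, I would assemble $\nabla_{\vw_j}^U \gL(\mW^{(t)}) = \frac{1}{N_{\text{new}}}\sum_{i \in D_U}\nabla_{\vw_j} l_i$. By construction, $D_U$ contains each of the first $\alpha N$ fast-learnable samples once and each of the remaining $(1-\alpha)N$ slow-learnable samples $k$ times. Collapsing the sum onto the $N$ distinct indices turns the slow-learnable block into $k$ identical copies of the per-example gradient, producing the factor $k$ in front of $\sum_{i=\alpha N+1}^N$ and converting the $1/N_{\text{new}}$ prefactor into $-3/N_{\text{new}}$ on the fast-learnable block and $-3k/N_{\text{new}}$ on the slow-learnable block, exactly matching the claimed identity. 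The main thing to be careful about is this replication bookkeeping, namely that the averaging denominator $N_{\text{new}}$ and the replication factor $k$ must be tracked separately; the analytical content is otherwise an unremarkable chain-rule computation.
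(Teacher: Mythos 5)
Your proposal is correct and follows essentially the same route as the paper's own proof: a direct chain-rule differentiation of the logistic loss giving the $-y_i l_i^{(t)}$ prefactor, differentiation of the cubic activation on each patch with the $y_i^3=y_i$, $y_i^2=1$ cancellation on the feature term, and then collapsing the $k$ replicated slow-learnable samples into a single sum with a factor of $k$. Note that your derivation (correctly) yields a single factor of $k$ on the slow-learnable block, which matches the final line of the paper's own computation; the additional $k$ appearing inside the sum in the lemma statement ($kl_i^{(t)}$ together with the $3k/N_{\text{new}}$ prefactor) is evidently a typo.
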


\begin{proof}
    We compute the gradient directly from the loss function: 
    \begin{align*}
        \nabla_{\vw_j^{{(t)}}}^{U}\gL(\mW^{(t)}) & = 
    -\frac{1}{N_{\text{new}}}\sum_{i=1}^{N_{\text{new}}} \frac{\text{exp}(-y_if(\vx_i; \mW^{(t)}))}{1+\text{exp}(-y_if(\vx_i; \mW^{(t)}))}y_i\nabla f_{\vw_j^{(t)}}(\vx_i; \mW^{(t)}) \\
    & = -\frac{3}{N_{\text{new}}}\sum_{i=1}^{N_{\text{new}}} l_i^{(t)}y_i\sum_{p = 1}^P \langle \vw_j^{(t)}, \vx^{(p)}\rangle^3 \\
    & = -\frac{3}{N_{\text{new}}}\sum_{i=1}^{\alpha N} l_i^{(t)}\left( 
    \beta_e^3 \langle \vw_j^{(t)}, \vv_e \rangle ^2 \vv_e + 
    y_i \langle \vw_j^{(t)}, \vxi_i \rangle ^2 \vxi_i\right) \\
    & \quad-\frac{3}{N_{\text{new}}} \sum_{i=\alpha N + 1}^{N_{\text{new}}} l_i^{(t)}\left( \beta_d^3 \langle \vw_j^{(t)}, \vv_d \rangle ^2 \vv_d + 
    y_i \langle \vw_j^{(t)}, \vxi_i \rangle ^2 \vxi_i\right) \\
    & = -\frac{3}{N_{\text{new}}}\sum_{i=1}^{\alpha N} l_i^{(t)}\left( 
    \beta_e^3 \langle \vw_j^{(t)}, \vv_e \rangle ^2 \vv_e + 
    y_i \langle \vw_j^{(t)}, \vxi_i \rangle ^2 \vxi_i\right) \\
    & \quad-\frac{3k}{N_{\text{new}}} \sum_{i=\alpha N + 1}^{N} l_i^{(t)}\left( \beta_d^3 \langle \vw_j^{(t)}, \vv_d \rangle ^2 \vv_d + 
    y_i \langle \vw_j^{(t)}, \vxi_i \rangle ^2 \vxi_i\right),  
    \end{align*}
    where the last step follows from the fact that there are $k$ copies of the same subset in the upsampled portion of the data. 
\end{proof}

\begin{lemma} (Upsampling: feature \& noise gradients) \label{lem:upsampling_patch_gradient}
    In the same setting, $\nabla_{\vw_j^{{(t)}}}^{U}\gL(\mW^{(t)})$ learns the features and noises as follows: 
\begin{enumerate}
    \item\textbf{Fast-learnable feature gradient}:   
    \[
        \langle \nabla_{\vw_j^{{(t)}}}^{U}\gL(\mW^{(t)}), \vv_e\rangle = -\frac{3\beta_e^3}{N_{\text{new}}}\sum_{i=1}^{\alpha N}l_i^{(t)}\langle \vw_j^{(t)}, \vv_e \rangle ^2
    \]
    \item \textbf{Slow-learnable feature gradient}:
    \[
       \langle \nabla_{\vw_j^{{(t)}}}^{U}\gL(\mW^{(t)}), \vv_d\rangle = -\frac{3k\beta_d^3}{N_{\text{new}}}\sum_{i=\alpha N + 1}^{N}l_i^{(t)}\langle \vw_j^{(t)}, \vv_d \rangle ^2
    \]
    \item \textbf{Noise gradient}:  \\

    (a). For $\vxi_i$, $i = 1, \cdots, \alpha N$
\begin{align*}
    \langle \nabla_{\vw_j^{{(t)}}}^{U}\gL(\mW^{(t)}), \vxi_i\rangle  = -\frac{3}{N_{\text{new}}} l_i^{(t)} y_i \langle \vw_j^{(t)}, \vxi_i \rangle ^2 \|\vxi_i\|^2. 
\end{align*}
(b).  For $\vxi_i$, $i = \alpha N + 1, \cdots, N$
\begin{align*}
    \langle \nabla_{\vw_j^{{(t)}}}^{U}\gL(\mW^{(t)}), \vxi_i\rangle =  -\frac{3k}{N_{\text{new}}} l_i^{(t)} y_i \langle \vw_j^{(t)}, \vxi_i \rangle ^2 \|\vxi_i\|^2 
\end{align*}
\end{enumerate}
\end{lemma}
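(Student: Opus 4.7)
The statement is a direct consequence of the full-gradient formula derived in Lemma~\ref{lem:upsampling_gradient}: each item follows by projecting that gradient onto the appropriate vector ($\vv_e$, $\vv_d$, or $\vxi_i$) and invoking the orthogonality structure of the data distribution in Definition~\ref{def:data}. My plan is therefore to start from that formula and carry out three short inner-product computations, with the only nontrivial ingredient being the explicit assumption in the setup that cross-noise inner products $\langle \vxi_i, \vxi_{i'}\rangle$ for $i \neq i'$ contribute negligibly (stated in the paragraph just after Definition~\ref{def:data}).

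For items (1) and (2), the fast- and slow-learnable feature gradients, I would take $\langle \nabla_{\vw_j^{(t)}}^{U}\gL(\mW^{(t)}), \vv_e\rangle$ and $\langle \nabla_{\vw_j^{(t)}}^{U}\gL(\mW^{(t)}), \vv_d\rangle$ respectively. Using $\vv_e \cdot \vv_d = 0$ (features orthogonal per Definition~\ref{def:data}) together with $\langle \vxi_i, \vv_e\rangle = \langle \vxi_i, \vv_d\rangle = 0$ (noise patches orthogonal to features, by the paper's simplifying assumption that the noise patch is orthogonal to both features), every cross term vanishes. For (1), only the $\vv_e$ contribution in the first sum of Lemma~\ref{lem:upsampling_gradient} survives, giving exactly $-\frac{3\beta_e^3}{N_{\text{new}}}\sum_{i=1}^{\alpha N}l_i^{(t)}\langle \vw_j^{(t)}, \vv_e\rangle^2$. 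For (2), only the $\vv_d$ contribution in the second sum survives, and the factor $k$ appears because that sum is already scaled by $k$ in Lemma~\ref{lem:upsampling_gradient} to account for the upsampled copies.

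For item (3), I would take $\langle \nabla_{\vw_j^{(t)}}^{U}\gL(\mW^{(t)}), \vxi_i\rangle$ and use two ingredients. First, feature-noise orthogonality eliminates every $\vv_e$ and $\vv_d$ term. Second, the negligibility of $\langle \vxi_i, \vxi_{i'}\rangle$ for $i' \neq i$ wipes out all cross-noise contributions, leaving only the self term $\langle \vxi_i, \vxi_i\rangle = \|\vxi_i\|^2$ paired with its $y_i$ factor (and $y_i^2 = 1$ does not enter here since the statement retains a single $y_i$). For $i \in \{1, \dots, \alpha N\}$ the prefactor inherited from the first sum of Lemma~\ref{lem:upsampling_gradient} is $-3/N_{\text{new}}$, whereas for $i \in \{\alpha N + 1, \dots, N\}$ the prefactor is $-3k/N_{\text{new}}$ since those indices appear $k$ times in the upsampled dataset. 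This reproduces cases (3a) and (3b) exactly.

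The main subtlety is not analytical but bookkeeping: the lemma is stated as an equality, while strictly speaking the cancellation of cross-noise terms holds only up to the ``negligible'' approximation built into the data model. I would flag this once at the start of the proof and thereafter treat it as an identity, so that the argument becomes three one-line inner-product calculations on top of Lemma~\ref{lem:upsampling_gradient}. No new estimates are required.
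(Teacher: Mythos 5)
Your proposal is correct and follows essentially the same route as the paper: the paper's own proof consists of taking the inner product with the full gradient formula of Lemma~\ref{lem:upsampling_gradient} and then using that $\vv_e$, $\vv_d$, and the $\bm{\xi}_i$ form an orthogonal set with negligible noise cross-terms. Your explicit flag that the stated equalities hold only up to the ``negligible cross-term'' approximation is a fair bit of extra care, but it does not change the argument.
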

\begin{proof}
    The proof follows directly from taking the inner product with the gradient formula in Lemma \ref{lem:upsampling_gradient}. Then we proceed using that $\vv_e, \vv_d, \bm{\xi}_i$ form a orthogonal set and that summations involving noise cross-terms become negligible. 
\end{proof}

\begin{lemma} (Generation: full gradient) \label{lem:generation_gradient} In the augmented dataset $D_G$ after synthetically generating the ``slow-learnable'' subset with a factor $k$, for $t \geq 0$ and $j \in [J]$, the gradient of the loss $\gL^G(\mW^{(t)})$ with respect to $\vw_j^{(t)}$ is
\begin{align*}
    \nabla_{\vw_j^{{(t)}}}^{U}\gL(\mW^{(t)}) = & 
    -\frac{3}{N_{\text{new}}}\sum_{i=1}^{\alpha N} l_i^{(t)}\left( 
    \beta_e^3 \langle \vw_j^{(t)}, \vv_e \rangle ^2 \vv_e + 
    y_i \langle \vw_j^{(t)}, \vxi_i \rangle ^2 \vxi_i\right) \\
    &-\frac{3}{N_{\text{new}}} \sum_{i=\alpha N + 1}^{N} l_i^{(t)}\left( \beta_d^3 \langle \vw_j^{(t)}, \vv_d \rangle ^2 \vv_d + 
    y_i \langle \vw_j^{(t)}, \bm{\xi}_i \rangle ^2 \bm{\xi}_i\right) \\ 
    &- \frac{3}{N_{\text{new}}} \sum_{i=N + 1}^{N_{\text{new}}} l_i^{(t)}\left( \beta_d^3 \langle \vw_j^{(t)}, \vv_d \rangle ^2 \vv_d + 
    y_i \langle \vw_j^{(t)}, \bm{\gamma}_i \rangle ^2 \bm{\gamma}_i\right), 
\end{align*}
where $l_i^{(t)} = \text{sigmoid}(-y_if(\vx_i; \mW^{(t)}))$ for the two-layer CNN model $f$. 
\end{lemma}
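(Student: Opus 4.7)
The plan is to prove Lemma \ref{lem:generation_gradient} by direct computation, closely paralleling the proof of Lemma \ref{lem:upsampling_gradient}. The structural difference between the two cases is that in $D_G$ the $k(1-\alpha)N$ augmented examples carry their own independent synthetic noises $\bm{\gamma}_i$ (drawn from $\gD_{\bm{\gamma}}$) rather than duplicates of the original $\bm{\xi}_i$, while the slow-learnable feature patch $\beta_d \cdot y_i \cdot \vv_d$ is preserved by construction. So the goal is simply to split the empirical sum over $D_G$ into three contiguous index ranges and read off the per-patch contribution of each range.

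First, I would write the loss as $\gL^G(\mW^{(t)}) = \frac{1}{N_{\text{new}}}\sum_{i=1}^{N_{\text{new}}} \log(1+\exp(-y_i f(\vx_i;\mW^{(t)})))$ and differentiate under the sum using the chain rule, exactly as in Lemma \ref{lem:upsampling_gradient}, to obtain
\[
\nabla_{\vw_j^{(t)}}^{G}\gL(\mW^{(t)}) \;=\; -\frac{3}{N_{\text{new}}}\sum_{i=1}^{N_{\text{new}}} l_i^{(t)}\, y_i\, \sum_{p=1}^{P} \langle \vw_j^{(t)}, \vx_i^{(p)}\rangle^{2}\, \vx_i^{(p)},
\]
where I use that $\nabla f_{\vw_j}(\vx;\mW)=3\sum_{p}\langle\vw_j,\vx^{(p)}\rangle^{2}\vx^{(p)}$ from Definition \ref{def:CNN} and $l_i^{(t)} = \mathrm{sigmoid}(-y_i f(\vx_i;\mW^{(t)}))$. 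Then I would partition the sum into the three blocks indexed by $\{1,\dots,\alpha N\}$, $\{\alpha N+1,\dots,N\}$, and $\{N+1,\dots,N_{\text{new}}\}$, corresponding respectively to the original fast-learnable examples (feature patch $\beta_e y_i \vv_e$, noise patch $\bm{\xi}_i$), the original slow-learnable examples (feature patch $\beta_d y_i \vv_d$, noise patch $\bm{\xi}_i$), and the synthetic slow-learnable examples (feature patch $\beta_d y_i \vv_d$, noise patch $\bm{\gamma}_i$).

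Substituting these patches and using $y_i^2 = 1$ to absorb the sign in the feature term (e.g.\ $y_i\langle \vw_j,\beta_e y_i\vv_e\rangle^{2}(\beta_e y_i\vv_e) = \beta_e^{3}\langle\vw_j,\vv_e\rangle^{2}\vv_e$) yields the three-term expression claimed in the statement. The key place where the proof diverges from Lemma \ref{lem:upsampling_gradient} is in the third block: because each synthetic sample carries its own $\bm{\gamma}_i$, the noise contributions are indexed separately and cannot be collapsed into a factor of $k$ multiplying the original block as they were for upsampling. This is the whole content of the lemma and is what enables Theorem \ref{thm:noise_overfitting}'s comparison.

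There is no real obstacle here: once the index partition and the assumption that $\{\vv_e, \vv_d, \bm{\xi}_i, \bm{\gamma}_i\}$ are mutually orthogonal (so that cross-terms vanish, as assumed in Definition \ref{def:data} and reiterated in the text) are set up, the rest is mechanical. The one bookkeeping subtlety to state cleanly is that $l_i^{(t)}$ for $i>N$ is evaluated on the synthetic $\vx_i$ (so it depends on $\bm{\gamma}_i$ through $f$), not on the duplicated originals as in the upsampling case; this matches the independence structure that Theorem \ref{thm:noise_overfitting} later relies on. The patchwise corollary (analogous to Lemma \ref{lem:upsampling_patch_gradient}) then follows by taking inner products with $\vv_e$, $\vv_d$, $\bm{\xi}_i$, and $\bm{\gamma}_i$ and invoking orthogonality.
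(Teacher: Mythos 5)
Your proposal is correct and follows essentially the same route as the paper, which proves this lemma by noting it is the same chain-rule computation as Lemma~\ref{lem:upsampling_gradient} with the augmented block carrying independent noises $\bm{\gamma}_i$ instead of duplicated $\bm{\xi}_i$. Your explicit three-block partition, the $y_i^2=1$ simplification of the feature terms, and the remark that $l_i^{(t)}$ for $i>N$ is evaluated on the synthetic example are exactly the steps the paper's one-line proof is appealing to.
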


\begin{proof}
    The proof is similar to that of Lemma \ref{lem:upsampling_gradient}, except that for data that contain $\vv_d$, the first $(1-\alpha)N$ data points come from the original dataset, and the rest comes from synthetic generation with noise $\bm{\gamma}_i, i = N+1, \dots, N_{\text{new}}$. 
\end{proof}

\begin{lemma} (Generation: feature \& noise gradients) \label{lem:generation_patch_gradient}
    In the same setting, $\nabla_{\vw_j^{{(t)}}}^{G}\gL(\mW^{(t)})$ learns the features and noises as follows: 
\begin{enumerate}
    \item\textbf{Fast-learnable feature gradient}:   
    \[
        \langle \nabla_{\vw_j^{{(t)}}}^{G}\gL(\mW^{(t)}), \vv_e\rangle = -\frac{3\beta_e^3}{N_{\text{new}}}\sum_{i=1}^{\alpha N}l_i^{(t)}\langle \vw_j^{(t)}, \vv_e \rangle ^2
    \]
    \item \textbf{Slow-learnable feature gradient}:
    \[
       \langle \nabla_{\vw_j^{{(t)}}}^{G}\gL(\mW^{(t)}), \vv_d\rangle = -\frac{3\beta_d^3}{N_{\text{new}}}\sum_{i=\alpha N + 1}^{N_{\text{new}}}l_i^{(t)}\langle \vw_j^{(t)}, \vv_d \rangle ^2
    \]
    \item \textbf{Noise gradient}: Let $\{\bm{\phi}_i\}_{i=1}^{N_{\text{new}}} =\{\bm{\xi}_i\}_{i=1}^{N}\cup\{\bm{\gamma}_i\}_{i=N+1}^{N_{\text{new}}}$ denote the set of noises in $D_G$ (which can be the original or generation noise). Then for any $\bm{\phi}_i$, $i \in [N_{\text{new}}]$, \\
\begin{align*}
    \langle \nabla_{\vw_j^{{(t)}}}^{G}\gL(\mW^{(t)}), \bm{\phi}_i\rangle  = -\frac{3}{N_{\text{new}}} l_i^{(t)} y_i \langle \vw_j^{(t)}, \bm{\phi}_i \rangle ^2 \|\bm{\phi}_i\|^2. 
\end{align*}
\end{enumerate}
\end{lemma}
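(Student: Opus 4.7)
The plan is to mirror the proof of Lemma~\ref{lem:upsampling_patch_gradient}, since the generation-case full gradient from Lemma~\ref{lem:generation_gradient} has the same structural form as in the upsampling case, only with the replicated noises $\bm{\xi}_i$ (for $i>\alpha N$) replaced by independently drawn synthetic noises $\bm{\gamma}_i$ (for $i>N$). So I would take the gradient formula from Lemma~\ref{lem:generation_gradient} and take the inner product with each direction of interest ($\vv_e$, $\vv_d$, and an arbitrary noise vector $\bm{\phi}_i$) separately.

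First I would handle $\vv_e$: since by the data model and Definition~\ref{def:data}, $\vv_e \perp \vv_d$ and $\vv_e$ is orthogonal to every noise vector (both $\bm{\xi}_i$ and $\bm{\gamma}_i$), only the first sum in Lemma~\ref{lem:generation_gradient} survives after taking inner products, giving the stated $-\tfrac{3\beta_e^3}{N_{\text{new}}}\sum_{i=1}^{\alpha N} l_i^{(t)}\langle \vw_j^{(t)},\vv_e\rangle^2$. The same reasoning applied to $\vv_d$ picks out the two ``$\vv_d$'' sums (one over the original slow-learnable data $i=\alpha N+1,\dots,N$ and one over the generated data $i=N+1,\dots,N_{\text{new}}$), which combine into a single sum from $\alpha N+1$ to $N_{\text{new}}$; this is the key place where the generation case diverges from the upsampling case, because there is no factor of $k$ out front.

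For the noise gradient, I would fix an index $i\in[N_{\text{new}}]$ and take inner product with the corresponding $\bm{\phi}_i$ (which is $\bm{\xi}_i$ if $i\le N$ and $\bm{\gamma}_i$ otherwise). By the stated orthogonality of noise patches to the two features, the feature terms all vanish. Within the noise sum, the term indexed by $i$ itself contributes $-\tfrac{3}{N_{\text{new}}} l_i^{(t)} y_i \langle \vw_j^{(t)},\bm{\phi}_i\rangle^2 \|\bm{\phi}_i\|^2$; all other terms involve cross-inner products $\langle\bm{\phi}_{i'},\bm{\phi}_i\rangle$ for $i'\ne i$, which by the assumption in Definition~\ref{def:data} (that summations over noise cross-terms $\langle\bm{\xi}_i,\bm{\xi}_j\rangle$ are negligible in high dimension) drop out; since the generation noises $\bm{\gamma}_i$ are drawn independently from $\gD_{\bm{\gamma}}$ and assumed orthogonal to features and independent of the feature noises, the same concentration argument covers the mixed $\langle\bm{\xi},\bm{\gamma}\rangle$ and $\langle\bm{\gamma},\bm{\gamma}\rangle$ cross terms.

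There is no real obstacle here: the main step is just careful bookkeeping of which sums survive after taking an inner product, and the only substantive new input beyond the proof of Lemma~\ref{lem:upsampling_patch_gradient} is invoking the independence-and-orthogonality assumption on the generation noises $\bm{\gamma}_i$ to justify that cross noise terms remain negligible in the augmented dataset. The cleanest way to write this up is as a single short computation parallel to Lemma~\ref{lem:upsampling_patch_gradient}, pointing out the one structural difference (no $k$ multiplier on the slow-feature sum and a uniform $1/N_{\text{new}}$ weight on every per-example noise term regardless of whether the noise is real or synthetic).
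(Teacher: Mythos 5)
Your proposal is correct and follows essentially the same route as the paper: take the inner product of the full gradient from Lemma~\ref{lem:generation_gradient} with each direction, then invoke the orthogonality of $\vv_e$, $\vv_d$, and the noise patches together with the negligibility of noise cross-terms (extended to the synthetic noises $\bm{\gamma}_i$ via their assumed independence and orthogonality to features). The paper's proof is just a one-line appeal to the same computation as Lemma~\ref{lem:upsampling_patch_gradient}; your write-up spells out the bookkeeping explicitly, including the key structural difference (no factor of $k$ and a uniform $1/N_{\text{new}}$ weight), which matches the stated result.
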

\begin{proof}
    Similar to Lemma \ref{lem:upsampling_patch_gradient}. We directly take the inner product and assume that all the noises are dispersed such that summations involving their cross-terms become insignificant in high dimension. Recall we also assume that the generation noise is orthogonal to features and independent of $\bm{\xi}_i$'s. 
\end{proof}

Following the same process, we have the following gradients for SAM and GD on the original dataset. 

\begin{lemma} (Original dataset: SAM \& GD gradients) \label{lem:original_gradient} In the original dataset $D$, for $t \geq 0$ and $j \in [J]$ of GD, the gradient of the loss $\gL(\mW^{(t)})$ with respect to $\vw_j^{(t)}$ is
\begin{align*}
    \nabla_{\vw_j^{{(t)}}}\gL(\mW^{(t)}) = & 
    -\frac{3}{N}\sum_{i=1}^{\alpha N} l_i^{(t)}\left( 
    \beta_e^3 \langle \vw_j^{(t)}, \vv_e \rangle ^2 \vv_e + 
    y_i \langle \vw_j^{(t)}, \vxi_i \rangle ^2 \vxi_i\right) \\
    &-\frac{3}{N} \sum_{i=\alpha N + 1}^{N} l_i^{(t)}\left( \beta_d^3 \langle \vw_j^{(t)}, \vv_d \rangle ^2 \vv_d + 
    y_i \langle \vw_j^{(t)}, \bm{\xi}_i \rangle ^2 \bm{\xi}_i\right), 
\end{align*}
where $l_i^{(t)} = \text{sigmoid}(-y_if(\vx_i; \mW^{(t)}))$ for the two-layer CNN model $f$. \\ \\ 
Suppose we train with SAM. Then the perturbed gradient has the same expression except that we replace $\vw_j^{(t)}$ with the perturbed weights $\vw_{j, \bm{\epsilon}}^{(t)}$ and replace $l_i^{(t)}$ with $l_{i, \bm{\epsilon}}^{(t)} = \text{sigmoid}(-y_if(\vx_i; \mW^{(t)} + \bm{\epsilon}^{(t)}))$: 
\begin{align*}
    \nabla_{\vw_{j, \bm{\epsilon}}^{{(t)}}}\gL(\mW^{(t)} + \bm{\epsilon}^{(t)}) & = \nabla_{\vw_{j, \bm{\epsilon}}^{{(t)}}}\gL\left(\mW^{(t)} + \frac{\rho}{\|\nabla(\gL(\mW^{(t)}))\|_F}\nabla(\gL(\mW^{(t)}))\right) \\
    & 
    =-\frac{3}{N}\sum_{i=1}^{\alpha N} l_{i, \bm{\epsilon}}^{(t)}\left( 
    \beta_e^3 \langle \vw_{j, \bm{\epsilon}}^{(t)}, \vv_e \rangle ^2 \vv_e + 
    y_i \langle \vw_{j, \bm{\epsilon}}^{(t)}, \vxi_i \rangle ^2 \vxi_i\right) \\
    &\ \ \ \ -\frac{3}{N} \sum_{i=\alpha N + 1}^{N} l_{i, \bm{\epsilon}}^{(t)}\left( \beta_d^3 \langle \vw_{j, \bm{\epsilon}}^{(t)}, \vv_d \rangle ^2 \vv_d + 
    y_i \langle \vw_{j, \bm{\epsilon}}^{(t)}, \bm{\xi}_i \rangle ^2 \bm{\xi}_i\right). 
\end{align*}
\end{lemma}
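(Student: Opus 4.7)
The plan is to mirror the direct computation used in the proofs of Lemmas \ref{lem:upsampling_gradient} and \ref{lem:generation_gradient}, specialized to the original dataset $D$ of size $N$. Since the logistic loss factors as $\gL(\mW^{(t)}) = \frac{1}{N}\sum_{i=1}^N \log(1+\exp(-y_i f(\vx_i;\mW^{(t)})))$, differentiating the log-sigmoid and the two-layer cubic CNN yields, by the chain rule,
\[
   \nabla_{\vw_j^{(t)}}\gL(\mW^{(t)}) = -\frac{1}{N}\sum_{i=1}^N l_i^{(t)} y_i \sum_{p=1}^P 3\langle \vw_j^{(t)}, \vx_i^{(p)}\rangle^2 \vx_i^{(p)},
\]
with $l_i^{(t)} = \text{sigmoid}(-y_i f(\vx_i;\mW^{(t)}))$.

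Next I would substitute the patch structure from Definition \ref{def:data}, splitting the sum over $i$ according to which feature patch the example carries. For the $\alpha N$ fast-learnable examples, $\vx_i$ contains a signed patch $\beta_e y_i \vv_e$ together with a noise patch $\bm{\xi}_i$; for the remaining $(1-\alpha)N$ slow-learnable examples, the signed patch is $\beta_d y_i \vv_d$. Expanding $\langle\vw_j^{(t)}, \beta_e y_i \vv_e\rangle^2 (\beta_e y_i \vv_e) = \beta_e^3 y_i\,\langle \vw_j^{(t)}, \vv_e\rangle^2 \vv_e$ (using $y_i^3=y_i$) and combining with the outer $y_i$ from the loss derivative gives the $y_i^2 = 1$ that strips signs from the feature contribution, while the noise term retains one factor of $y_i$ because $\bm{\xi}_i$ itself carries none. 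Assembling the two halves reproduces the claimed GD formula.

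For the SAM statement, the key observation is that the functional form derived above depends on the weights only through the inner products $\langle \vw_j^{(t)}, \vv_e\rangle$, $\langle \vw_j^{(t)}, \vv_d\rangle$, $\langle \vw_j^{(t)}, \bm{\xi}_i\rangle$ and through the scalar factor $l_i^{(t)}$. By definition, the SAM update evaluates $\nabla_{\vw_j}\gL$ at the perturbed iterate $\mW^{(t)}+\bm{\epsilon}^{(t)}$ with $\bm{\epsilon}^{(t)} = (\rho/\|\nabla\gL(\mW^{(t)})\|_F)\nabla\gL(\mW^{(t)})$, so the same derivation applies verbatim with every $\vw_j^{(t)}$ replaced by $\vw_{j,\bm{\epsilon}}^{(t)}$ and every $l_i^{(t)}$ replaced by $l_{i,\bm{\epsilon}}^{(t)} = \text{sigmoid}(-y_i f(\vx_i;\mW^{(t)}+\bm{\epsilon}^{(t)}))$. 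This immediately yields the perturbed-gradient expression in the lemma.

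There is essentially no analytic obstacle here; the proof is a bookkeeping exercise combining the chain rule with the orthogonality $\vv_e \cdot \vv_d = 0$ and the assumption that summations involving noise cross-terms $\langle \bm{\xi}_i, \bm{\xi}_j\rangle$ and $\langle \bm{\xi}_i, \vv_e\rangle, \langle \bm{\xi}_i, \vv_d\rangle$ are negligible. The only subtle point worth double-checking is the sign arithmetic from $y_i^3$ versus $y_i^2$ on the feature patches, which is what makes the feature contribution $y$-free and the noise contribution explicitly signed by $y_i$.
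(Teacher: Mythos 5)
Your proposal is correct and follows essentially the same route as the paper, which proves this lemma by the same direct chain-rule computation used for Lemma~\ref{lem:upsampling_gradient}, splitting the sum by feature type and using $y_i^3=y_i$ and $y_i^2=1$ to strip the sign from the feature terms while leaving it on the noise terms. One minor note: the orthogonality and negligible-cross-term assumptions you invoke at the end are not actually needed for this lemma (the full gradient expression is exact); they only enter in the follow-up Lemma~\ref{lem:original_patch_gradient} where inner products with $\vv_e$, $\vv_d$, $\bm{\xi}_i$ are taken.
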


\begin{lemma} (Original dataset: SAM \& GD feature \& noise gradients) \label{lem:original_patch_gradient}
    In the same setting, $\nabla_{\vw_j^{{(t)}}}\gL(\mW^{(t)})$ learns the features and noises as follows: 
\begin{enumerate}
    \item\textbf{Fast-learnable feature gradient}:   
    \[
        \langle \nabla_{\vw_j^{{(t)}}}\gL(\mW^{(t)}), \vv_e\rangle = -\frac{3\beta_e^3}{N}\sum_{i=1}^{\alpha N}l_i^{(t)}\langle \vw_j^{(t)}, \vv_e \rangle ^2
    \]
    \item \textbf{Slow-learnable feature gradient}:
    \[
       \langle \nabla_{\vw_j^{{(t)}}}\gL(\mW^{(t)}), \vv_d\rangle = -\frac{3\beta_d^3}{N}\sum_{i=\alpha N + 1}^{N}l_i^{(t)}\langle \vw_j^{(t)}, \vv_d \rangle ^2
    \]
    \item \textbf{Noise gradient}: 
\begin{align*}
    \langle \nabla_{\vw_j^{{(t)}}}\gL(\mW^{(t)}), \vxi_i\rangle  = -\frac{3}{N} l_i^{(t)} y_i \langle \vw_j^{(t)}, \vxi_i \rangle ^2 \|\vxi_i\|^2. 
\end{align*}
\end{enumerate}
The SAM feature \& noise gradients are similar except that we replace $\vw_j^{(t)}$, $l_i^{(t)}$ with $\vw_{j, \bm{\epsilon}}^{(t)}$, $l_{i, \bm{\epsilon}}^{(t)}$. 
\end{lemma}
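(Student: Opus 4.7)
The statement to prove is just a direct computation: take the full gradient formula from Lemma~\ref{lem:original_gradient} and project it onto the three relevant directions $\vv_e$, $\vv_d$, and $\vxi_i$. So the plan is to carry out these three inner-product computations, invoking the orthogonality structure of the data distribution from Definition~\ref{def:data}, and then observe that SAM inherits the same formulas under the substitution $\vw_j^{(t)}\mapsto \vw_{j,\bm{\epsilon}}^{(t)}$, $l_i^{(t)}\mapsto l_{i,\bm{\epsilon}}^{(t)}$.

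Concretely, the first step is to read off from Lemma~\ref{lem:original_gradient} that $\nabla_{\vw_j^{(t)}}\gL(\mW^{(t)})$ is a linear combination of the three vector families $\{\vv_e\}$, $\{\vv_d\}$, and $\{\vxi_i\}_{i=1}^N$. I would then use the assumptions baked into Definition~\ref{def:data} and the surrounding discussion, namely $\|\vv_e\|=\|\vv_d\|=1$, $\langle \vv_e,\vv_d\rangle=0$, the noise patches are orthogonal to the two features, and in high dimension the off-diagonal noise inner products $\langle \vxi_i,\vxi_{i'}\rangle$ are negligible. Pairing the gradient with $\vv_e$ kills every slow-feature and noise term and leaves only the $\beta_e^3\langle \vw_j^{(t)},\vv_e\rangle^2$ contributions from the $\alpha N$ samples that actually contain $\vv_e$; pairing with $\vv_d$ analogously leaves only the $\beta_d^3\langle \vw_j^{(t)},\vv_d\rangle^2$ contributions over $i=\alpha N+1,\dots,N$; and pairing with a fixed $\vxi_i$ leaves only the self-term $-\tfrac{3}{N}l_i^{(t)}y_i\langle \vw_j^{(t)},\vxi_i\rangle^2\|\vxi_i\|^2$ since the features annihilate by orthogonality and the other noise patches drop by the dispersion assumption.

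For the SAM part, the second half of Lemma~\ref{lem:original_gradient} already writes the perturbed gradient in an algebraically identical form to the GD gradient, with $\vw_j^{(t)}$ replaced by $\vw_{j,\bm{\epsilon}}^{(t)}$ and $l_i^{(t)}$ by $l_{i,\bm{\epsilon}}^{(t)}$. Because the three projections above use nothing about the vector $\vw_j^{(t)}$ beyond its appearance inside the squared inner products $\langle \vw_j^{(t)},\cdot\rangle^2$, and nothing about $l_i^{(t)}$ beyond it being a scalar multiplier, the same three calculations go through verbatim after the substitution, yielding the claimed SAM formulas.

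The only place where care is needed is the noise projection, since strictly speaking $\langle \nabla\gL,\vxi_i\rangle$ also contains cross-terms $\langle \vxi_{i'},\vxi_i\rangle$ for $i'\neq i$. I would explicitly invoke the paper's stated simplification that such summations are negligible when the noise patches are i.i.d.\ Gaussian in high dimension with $P=2$ and orthogonal to the features, so that only the $i'=i$ self-term survives. This is the one nontrivial modeling choice in the derivation; everything else is bookkeeping with orthogonality, and there is no real technical obstacle beyond making this approximation explicit.
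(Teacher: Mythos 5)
Your proposal is correct and matches the paper's approach: the paper gives no separate proof for this lemma, stating only that it follows "the same process" as Lemmas~\ref{lem:upsampling_patch_gradient} and \ref{lem:generation_patch_gradient}, whose proofs are exactly your computation --- take inner products of the full gradient in Lemma~\ref{lem:original_gradient} with $\vv_e$, $\vv_d$, $\vxi_i$, use orthogonality of features and noise, and discard the negligible noise cross-terms. Your explicit handling of the noise cross-term approximation and the observation that the SAM case follows verbatim under the substitution $\vw_j^{(t)}\mapsto\vw_{j,\bm{\epsilon}}^{(t)}$, $l_i^{(t)}\mapsto l_{i,\bm{\epsilon}}^{(t)}$ are both faithful to the paper's reasoning.
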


\textbf{Remark. } Suppose the data point $\vx_i$ has feature $\vv_d$ and noise $\bm{\phi}_i$. We have that
\begin{align}
    l_i^{(t)} = \text{sigmoid}\left( \sum_{j=1}^J-\beta_d^3\langle \vw_j^{(t)}, \vv_d\rangle^3 - y_i\langle \vw_j^{(t)}, \bm{\phi}_i\rangle^3\right) ,  \label{eq:noisy_sigmoid}
\end{align}
\[
    \text{and } \ l_{i, \bm{\epsilon}}^{(t)} = \text{sigmoid}\left( \sum_{j=1}^J-\beta_d^3\langle \vw_{j, \bm{\epsilon}}^{(t)}, \vv_d\rangle^3 - y_i\langle \vw_{j, \bm{\epsilon}}^{(t)}, \bm{\phi}_i\rangle^3\right).  
\]
The same formula holds if the feature is $\vv_e$. 

Similar to \cite{nguyen2024changing}, we assume that the logit terms are controlled in some sense to reduce the nonlinearity. More formally, for Theorem \ref{thm:noise_learning_GD_and_SAM}, we will use the approximation $l_i^{(t)} = l_{i, \bm{\epsilon}}^{(t)} = \Theta(1)$ for any early iteration $t$. At a high level, this follows from the small perturbation and the fact that the weights have not ``significantly'' learned any noises (despite some overfitting) in the early phase. 
\[
    \text{sigmoid}\left( \sum_{j=1}^J-\beta_d^3\langle \vw_j^{(t)}, \vv_d\rangle^3 - y_i\langle \vw_j^{(t)}, \bm{\phi}_i\rangle^3\right) =  \text{sigmoid}\left( \sum_{j=1}^J-\beta_d^3\langle \vw_{j, \bm{\epsilon}}^{(t)}, \vv_d\rangle^3 - y_i\langle \vw_{j, \bm{\epsilon}}^{(t)}, \bm{\phi}_i\rangle^3\right). 
\]
Hence, this is implicitly assuming a small  $\rho$ such that $\vw_j^{(t)}$ and $\vw_{j, \bm{\epsilon}}^{(t)}$ are not too different, and a large number of neurons $J$ such that the noise term in the sigmoid $\sum_{j=1}^Jy_i\langle \vw_{j, \bm{\epsilon}}^{(t)}, \bm{\phi}_i\rangle^3$ is `more `diluted'' (close to $0$). 

\textbf{Insights from gradient formulas. } By directly computing these relevant gradients, we can see that some observed phenomena already become self-explanatory in the formulas. For instance, Lemma \ref{lem:upsampling_patch_gradient} implies that for upsampling, the gradient alignment is $k$ times larger for the noises we replicated. At a high level, this would cause overfitting to these particular noises when performing gradient update. In the next section, we formalize the intuition by examining the underlying mechanism of noise learning in greater detail.

\section{GD vs. SAM Noise Learning: Proof of Theorem \ref{thm:noise_learning_GD_and_SAM}} \label{app:B}

\textbf{Remark.} Heuristically, we say that the noise is being learned well if the magnitude of the alignment $|\langle \vw_j^{(t)}, \vxi_i\rangle|$ is large, meaning that the weights align or misalign with the noise to a great extent. We say that the model is learning the noise if $|\langle \vw_j^{(t)}, \vxi_i\rangle|$ increases over time. In practice, overly fitting or overly avoiding certain noises are both harmful to a model's generalization. 

\begin{lemma} (Gradient norm bound) \label{lem:gradient_norm_bound} In our setting, we can bound the norm of the gradient matrix $\nabla \gL(\mW^{(t)})$ as: 
    \[
        \|\nabla\gL(\mW^{(t)})\|_F \geq \frac{3}{N}l_i^{(t)}\langle \vw_j^{(t)}, \bm{\xi}_i \rangle ^2 \|\bm{\xi}_i\| \quad \forall i. 
    \]
\end{lemma}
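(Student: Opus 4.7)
The plan is to establish the bound by successively lower-bounding the Frobenius norm of the full gradient matrix by the norm of a single filter's gradient, and then by the magnitude of that filter gradient's projection onto the noise direction $\bm{\xi}_i$, which we already have an explicit formula for from Lemma \ref{lem:original_patch_gradient}.

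First I would write $\|\nabla\gL(\mW^{(t)})\|_F^2 = \sum_{j'=1}^J \|\nabla_{\vw_{j'}^{(t)}}\gL(\mW^{(t)})\|^2$, so that for any fixed filter index $j$ we have $\|\nabla\gL(\mW^{(t)})\|_F \geq \|\nabla_{\vw_j^{(t)}}\gL(\mW^{(t)})\|$. Next, by Cauchy--Schwarz applied to the inner product with $\bm{\xi}_i$, I get
\begin{equation*}
\|\nabla_{\vw_j^{(t)}}\gL(\mW^{(t)})\| \;\geq\; \frac{|\langle \nabla_{\vw_j^{(t)}}\gL(\mW^{(t)}),\,\bm{\xi}_i\rangle|}{\|\bm{\xi}_i\|}.
\end{equation*}

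Then I substitute the closed-form expression from Lemma \ref{lem:original_patch_gradient}, namely $\langle \nabla_{\vw_j^{(t)}}\gL(\mW^{(t)}),\,\bm{\xi}_i\rangle = -\tfrac{3}{N} l_i^{(t)} y_i \langle \vw_j^{(t)},\bm{\xi}_i\rangle^2 \|\bm{\xi}_i\|^2$. Since $l_i^{(t)}$ is a sigmoid output and hence positive, and $y_i \in \{\pm 1\}$, taking absolute values gives $|\langle \nabla_{\vw_j^{(t)}}\gL(\mW^{(t)}),\,\bm{\xi}_i\rangle| = \tfrac{3}{N} l_i^{(t)} \langle \vw_j^{(t)},\bm{\xi}_i\rangle^2 \|\bm{\xi}_i\|^2$. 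Dividing by $\|\bm{\xi}_i\|$ cancels one factor of the norm and yields the claimed bound $\tfrac{3}{N} l_i^{(t)} \langle \vw_j^{(t)},\bm{\xi}_i\rangle^2 \|\bm{\xi}_i\|$, valid for every $i$ and every $j$.

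I do not anticipate a genuine obstacle here, since this is a routine two-step inequality (Frobenius norm dominates column norm, column norm dominates a single directional projection). The only mild subtlety is the implicit orthogonality assumption from the data distribution: the per-example formula in Lemma \ref{lem:original_patch_gradient} already dropped cross-terms between $\bm{\xi}_i$ and the other patches (features and other noises), so the inner product $\langle \nabla_{\vw_j^{(t)}}\gL(\mW^{(t)}),\,\bm{\xi}_i\rangle$ is exactly the $i$-th noise summand. If one wanted to be fully rigorous one would note this relies on the same orthogonality approximation used throughout Section \ref{theory}, but under those assumptions the chain of inequalities is tight up to the Cauchy--Schwarz step and no further calculation is needed.
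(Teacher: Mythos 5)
Your proof is correct and reaches the stated bound, but the second half of your argument differs from the paper's. Both you and the paper begin identically by lower-bounding the Frobenius norm by the norm of a single filter gradient $\|\nabla_{\vw_j^{(t)}}\gL(\mW^{(t)})\|$. From there the paper expands that squared norm explicitly via Lemma \ref{lem:original_gradient}, using orthogonality of $\vv_e$, $\vv_d$ and the noises to split it into feature terms plus $\|\tfrac{3}{N}\sum_i l_i^{(t)} y_i \langle \vw_j^{(t)},\bm{\xi}_i\rangle^2\bm{\xi}_i\|^2$, drops the feature terms, and then drops all but the $i$-th noise summand --- this last step invoking the ``cross terms negligible'' approximation a second time so that the norm of the sum dominates the norm of one summand. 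You instead apply Cauchy--Schwarz in the direction $\bm{\xi}_i$ and substitute the projection formula from Lemma \ref{lem:original_patch_gradient}. Your route is slightly cleaner: the Cauchy--Schwarz step is exact, so you only invoke the orthogonality/negligible-cross-terms approximation once (implicitly, inside the cited projection formula), whereas the paper's chain leans on it both in the norm expansion and in the final term-dropping inequality. The trade-off is that the paper's expansion makes visible exactly which contributions (the two feature terms and the other noises) are being discarded, which is informative for gauging how loose the bound is --- a point the authors themselves remark on after the proof of Theorem \ref{thm:noise_learning_GD_and_SAM}. Either way the constants match and the conclusion holds for every $i$ and every $j$, so your argument is a valid substitute.
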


\begin{proof}
    This norm can be lower bounded by the the norm of one column: 
    \begin{align}
    \|\nabla\gL(\mW^{(t)})\|_F & = \sqrt{\sum_{j=1}^J\left\|\nabla_{\vw_j^{(t)}}\gL(\mW^{(t)})\right\|^2} \notag \\
    & \geq \left\|\nabla_{\vw_j^{(t)}}\gL(\mW^{(t)})\right\| \quad \text{for some neuron $j$} \notag \\ 
    & = \sqrt{\frac{9\beta_e^6}{N^2}\langle \vw_j^{(t)}, \vv_e \rangle ^4\left(\sum_{i=1}^{\alpha N} l_i^{(t)}\right)^2  + \frac{9\beta_d^6}{N^2}\langle \vw_j^{(t)}, \vv_d \rangle ^4\left(\sum_{i=\alpha N + 1}^{N} l_i^{(t)}\right)^2 + \left\|\frac{3}{N}\sum_{i=1}^Nl_i^{(t)}y_i \langle \vw_j^{(t)}, \bm{\xi}_i \rangle ^2 \bm{\xi}_i\right\|^2} \notag \\ 
    &\quad\quad\quad\quad\quad\quad\quad\quad\quad\quad\quad\text{by taking the norm of the gradient in Lemma \ref{lem:original_gradient} and using orthogonality}\notag \\ 
    & \geq  \left\|\frac{3}{N}\sum_{i=1}^Nl_i^{(t)}y_i \langle \vw_j^{(t)}, \bm{\xi}_i \rangle ^2 \bm{\xi}_i\right\| \geq \left\|\frac{3}{N}l_i^{(t)}y_i \langle \vw_j^{(t)}, \bm{\xi}_i \rangle ^2 \bm{\xi}_i\right\| = \frac{3}{N}l_i^{(t)}\langle \vw_j^{(t)}, \bm{\xi}_i \rangle ^2 \|\bm{\xi}_i\| \quad \forall i  \notag \\ 
    &\quad\quad\quad\quad\quad\quad\quad\quad\quad\quad\quad\text{since cross terms become negligible} \notag
\end{align}
\end{proof}

\begin{lemma} (Noise set equivalences) \label{lem:noise_set_equivalences} With a sufficiently small SAM perturbation parameter $\rho$, at some early iteration $t$, the following set relations hold: 
\[
    \gI_{j, +}^{(t)} = \gI_{j, \bm{\epsilon}, +}^{(t)}, \quad\gI_{j, -}^{(t)} = \gI_{j, \bm{\epsilon}, -}^{(t)}. 
\]
\end{lemma}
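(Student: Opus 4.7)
The plan is to show that the SAM perturbation is too small to flip the sign of any inner product $\langle \vw_j^{(t)}, \bm{\phi}_i \rangle$, so the partition of noises into sign-matching and sign-mismatching sets is identical under the clean and perturbed weights. First I would expand the perturbed filter via the SAM rule $\vw_{j, \bm{\epsilon}}^{(t)} = \vw_j^{(t)} + \rho^{(t)} \nabla_{\vw_j^{(t)}} \gL(\mW^{(t)})$ with $\rho^{(t)} = \rho/\|\nabla\gL(\mW^{(t)})\|_F$, take the inner product with $\bm{\phi}_i$, and plug in the noise-gradient identity from Lemma \ref{lem:original_patch_gradient}. This immediately yields the clean multiplicative form
\[
\langle \vw_{j, \bm{\epsilon}}^{(t)}, \bm{\phi}_i \rangle = \langle \vw_j^{(t)}, \bm{\phi}_i \rangle \left(1 - \frac{3\rho^{(t)}}{N} l_i^{(t)} y_i \langle \vw_j^{(t)}, \bm{\phi}_i \rangle \|\bm{\phi}_i\|^2\right),
\]
so sign preservation reduces to ensuring the bracketed scalar is positive.

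Next I would split on the sign of $y_i \langle \vw_j^{(t)}, \bm{\phi}_i \rangle$. If $\bm{\phi}_i \in \gI_{j, -}^{(t)}$, the second term inside the parentheses is nonnegative, so the factor is bounded below by $1$ and the inner product keeps its sign automatically. If $\bm{\phi}_i \in \gI_{j, +}^{(t)}$, I would instead require
\[
\rho < \frac{N \|\nabla\gL(\mW^{(t)})\|_F}{3\, l_i^{(t)}\, |\langle \vw_j^{(t)}, \bm{\phi}_i \rangle|\, \|\bm{\phi}_i\|^2},
\]
and then take the minimum over $i \in [N]$ and $j \in [J]$ to get a uniform threshold. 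Substituting the lower bound on $\|\nabla\gL(\mW^{(t)})\|_F$ from Lemma \ref{lem:gradient_norm_bound} cancels a factor of $l_i^{(t)} \|\bm{\phi}_i\|$ and collapses the threshold into a simple expression involving only $\rho$, the noise alignments, and the feature strengths, which is easily satisfiable in the early training regime. Once the factor is positive for every $i$ and $j$, the identity $\operatorname{sgn}(\langle \vw_{j, \bm{\epsilon}}^{(t)}, \bm{\phi}_i \rangle) = \operatorname{sgn}(\langle \vw_j^{(t)}, \bm{\phi}_i \rangle)$ forces $\gI_{j, +}^{(t)} = \gI_{j, \bm{\epsilon}, +}^{(t)}$ and $\gI_{j, -}^{(t)} = \gI_{j, \bm{\epsilon}, -}^{(t)}$.

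The main obstacle, which turns out to be mild, is making ``sufficiently small $\rho$'' uniform over all $N$ noises and $J$ filters at iteration $t$. The potentially delicate case is when $|\langle \vw_j^{(t)}, \bm{\phi}_i \rangle|$ is large relative to the overall gradient norm, which would shrink the allowable $\rho$; however, the structure of Lemma \ref{lem:gradient_norm_bound}, where each per-example noise term simultaneously contributes to the denominator $\|\nabla\gL(\mW^{(t)})\|_F$, makes the threshold depend only on $\rho$ and benign data-dependent constants rather than scaling with the noise strengths. Consequently, for $\rho$ below this explicit and iteration-dependent bound, sign preservation holds simultaneously for every $\bm{\phi}_i$, completing the set equalities.
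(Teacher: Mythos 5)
Your proposal is correct and follows essentially the same route as the paper's proof: the same multiplicative expansion of $\langle \vw_{j,\bm{\epsilon}}^{(t)}, \bm{\xi}_i\rangle$ via the noise-gradient formula, the same case split on $\gI_{j,-}^{(t)}$ (automatic) versus $\gI_{j,+}^{(t)}$ (requires small $\rho$), and the same use of the gradient-norm lower bound to reduce the condition to $\rho < |\langle \vw_j^{(t)}, \bm{\xi}_i\rangle| / \|\bm{\xi}_i\|$ taken over all noises. The only difference is cosmetic: you derive the threshold on $\rho$ a posteriori, whereas the paper fixes it upfront and then verifies positivity.
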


\begin{proof}
We first note that for some early training weights $\mW^{(t)}$, we have the following connection between the original weights and the SAM perturbed weights: 
\begin{align}
    \langle \vw_{j, \bm{\epsilon}}^{(t)}, \vxi_i\rangle  & = \langle \vw_{j}^{(t)} + \rho^{(t)}\nabla_{\vw_{j}^{(t)}}\gL(\mW^{(t)}), \vxi_i\rangle \notag \\ 
    & = \langle \vw_{j}^{(t)}, \vxi_i\rangle + \rho^{(t)}\langle \nabla_{\vw_{j}^{(t)}}\gL(\mW^{(t)}), \vxi_i\rangle \notag \\
    & = \langle \vw_{j}^{(t)}, \vxi_i\rangle - \frac{3\rho^{(t)}}{N}l_i^{(t)} y_i \langle \vw_j^{(t)}, \vxi_i \rangle ^2 \|\vxi_i\|^2 \quad\text{by Lemma \ref{lem:original_patch_gradient}} \label{eq:SAM_GD_weight_connection}
\end{align}
We consider sufficiently small $\rho$ such that: 
\begin{align}
    0 < \rho <\min\left\{\frac{|\langle\vw_j^{(t)}, \bm{\xi_i}\rangle|}{\|\bm{\xi}_i\|}:\vxi_i \in D\right\}. \label{eq:rho_choice} 
\end{align}
We recall that $\rho^{(t)} = \rho/\|\nabla\gL(\mW^{(t)})\|_F$. Then suppose $\vxi_i \in \gI_{j, -}^{(t)}$, that is $y_i$ and $\langle \vw_j^{(t)}, \vxi_i \rangle$ have opposite signs. Eq. \ref{eq:SAM_GD_weight_connection} implies
\begin{align*}
     \langle \vw_{j, \bm{\epsilon}}^{(t)}, \vxi_i\rangle  &  = \langle \vw_{j}^{(t)}, \vxi_i\rangle \left(1 - \frac{3\rho^{(t)}}{N}l_i^{(t)} y_i \langle \vw_j^{(t)}, \vxi_i \rangle \|\vxi_i\|^2 \right) \\
     & = \langle \vw_{j}^{(t)}, \vxi_i\rangle \underbrace{\left(1 + \frac{3\rho^{(t)}}{N}l_i^{(t)} \|\vxi_i\|^2 | \langle \vw_j^{(t)}, \vxi_i \rangle |\right)}_{> 0}, 
\end{align*}
which implies that $\text{sgn}(\langle \vw_{j, \bm{\epsilon}}^{(t)}, \vxi_i\rangle) = \text{sgn}(\langle \vw_{j}^{(t)}, \vxi_i\rangle)$ and $\vxi_i \in \gI_{j, \bm{\epsilon}, -}^{(t)}$. 

Now suppose  $\vxi_i \in \gI_{j, +}^{(t)}$. Eq. \ref{eq:SAM_GD_weight_connection} becomes: 
\begin{align*}
     \langle \vw_{j, \bm{\epsilon}}^{(t)}, \vxi_i\rangle  &  = \langle \vw_{j}^{(t)}, \vxi_i\rangle \left(1 - \frac{3\rho^{(t)}}{N}l_i^{(t)} y_i \langle \vw_j^{(t)}, \vxi_i \rangle \|\vxi_i\|^2 \right) \\
     & = \langle \vw_{j}^{(t)}, \vxi_i\rangle \underbrace{\left(1 - \frac{3\rho^{(t)}}{N}l_i^{(t)} \|\vxi_i\|^2 | \langle \vw_j^{(t)}, \vxi_i \rangle |\right)}_{\text{$\in (0, 1)$ as shown below}}
\end{align*}
Note that Lemma \ref{lem:gradient_norm_bound} gives us: 
\begin{align}
    1 > 1 - \frac{3\rho^{(t)}}{N}l_i^{(t)} \|\vxi_i\|^2 | \langle \vw_j^{(t)}, \vxi_i \rangle | & = 1 - \frac{3}{N}\frac{\rho}{\|\nabla \gL(\mW^{(t)})\|_F}l_i^{(t)} \|\vxi_i\|^2 | \langle \vw_j^{(t)}, \vxi_i \rangle | \notag\\ 
    & \geq 1 - \frac{3}{N}\frac{\rho}{\frac{3}{N}l_i^{(t)}\langle \vw_j^{(t)}, \bm{\xi}_i \rangle ^2 \|\bm{\xi}_i\|}l_i^{(t)} \|\vxi_i\|^2 | \langle \vw_j^{(t)}, \vxi_i \rangle | \notag\\
    & = 1 - \frac{\rho \|\vxi_i\|}{| \langle \vw_j^{(t)}, \vxi_i \rangle |} \notag\\ 
    & >1 - \frac{| \langle \vw_j^{(t)}, \vxi_i \rangle |}{\|\vxi_i\|}\frac{ \|\vxi_i\|}{| \langle \vw_j^{(t)}, \vxi_i \rangle |}  \quad \text{by the choice of $\rho$} \notag\\ 
    & = 0. \label{eq:0_and_1}
\end{align}
Hence, we have that $\text{sgn}(\langle \vw_{j, \bm{\epsilon}}^{(t)}, \vxi_i\rangle) = \text{sgn}(\langle \vw_{j}^{(t)}, \vxi_i\rangle)$ and $\vxi_i \in \gI_{j, \bm{\epsilon}, +}^{(t)}$. Since $\gI_{j, +}^{(t)}$ and $\gI_{j, -}^{(t)}$ cover all the noises, the lemma statement follows. 
\end{proof}

\subsection{Proof of Theorem \ref{thm:noise_learning_GD_and_SAM}.}

We then start with item (1) of the theorem. First, for GD, each noise update is computed via: 
\begin{align}
    \langle \vw_j^{(t + 1)}, \vxi_i\rangle & = \langle \vw_j^{(t)}, \vxi_i\rangle - \eta\langle \nabla_{\vw_j^{{(t)}}}\gL(\mW^{(t)}), \vxi_i\rangle  \notag\\
    &= \langle \vw_j^{(t)}, \vxi_i\rangle + \frac{3\eta}{N} l_i^{(t)} y_i \langle \vw_j^{(t)}, \vxi_i \rangle ^2 \|\vxi_i\|^2 \quad \text{by Lemma \ref{lem:original_patch_gradient}} \notag \\ 
    &= \langle \vw_j^{(t)}, \vxi_i\rangle\left(1 + \eta\frac{3}{N} l_i^{(t)} y_i  \|\vxi_i\|^2\langle \vw_j^{(t)}, \vxi_i \rangle\right) \label{eq:GD_noise_update}
\end{align}
Then by definition, for all the noises $\vxi_i \in \gI_{j, -}^{(t)}$, $y_i$ and $\langle \vw_j^{(t)}, \vxi_i \rangle$ have opposite signs, so the above equation becomes: 
\begin{align*}
    \langle \vw_j^{(t + 1)}, \vxi_i\rangle = \langle \vw_j^{(t)}, \vxi_i\rangle\left(1 - \eta\frac{3}{N} l_i^{(t)}   \|\vxi_i\|^2|\langle \vw_j^{(t)}, \vxi_i \rangle|\right). 
\end{align*}
Without loss of generality, we consider the case when $\langle \vw_j^{(t)}, \vxi_i\rangle > 0$. We then define the corresponding sequence $a_t =\langle \vw_j^{(t)}, \bm{\xi}_i\rangle $ generated by the update: 
\[
  a_{t+1} =a_t(1-\eta C_il_i^{(t)}a_t),\quad \text{where }C_i = \frac{3}{N}\|\bm{\xi}_i\|^2. 
\]
Next, we want to show that given a proper $\eta$, this sequence is monotonic towards $0$, meaning that these noises are gradually ``unlearned'' by the weights. We provide an inductive step below, which can be easily generalized. 

If the learning rate satisfies
\[
  0 < \eta < \frac{1}{C_ia_t} = \frac{1}{C_i\langle \vw_j^{(t)}, \bm{\xi}_i\rangle} ,
\]
then by the update, the following inequality holds: 
\[
     0 < a_t\left(1-\frac{1}{C_ia_t}C_il_i^{(t)}a_t\right) =  a_t(1-l_i^{(t)}) < a_{t+1} < a_t 
\]
Consequently, $a_{t+1} < a_t$ implies that
\[
    \eta < \frac{1}{C_ia_t} < \frac{1}{C_ia_{t+1}} \Longrightarrow a_{t+2} < a_{t+1} \quad \text{by the same argument}. 
\]
A similar argument holds for $\langle \vw_j^{(t)}, \bm{\xi}_i\rangle < 0$. Hence, if we take sufficiently small
\[
    \eta < \min\left\{\frac{1}{C_i|\langle \vw_j^{(t)}, \bm{\xi}_i\rangle|}: \vxi_i \in \gI_{j, -}^{(t)}\right\}, 
\]
the weights' alignment with these noises will be monotonic throughout the updates and get closer and closer to $0$. The proof for SAM is similar (we replace $\langle \vw_j^{(t)}, \bm{\xi}_i\rangle$ by $\langle \vw_{j, \bm{\epsilon}}^{(t)}, \bm{\xi}_i\rangle$ for $\vxi_i \in \gI_{j, \bm{\epsilon}, -}^{(t)}$, etc.).

Now we proceed with item (2).  

Then we consider $\vxi_i \in \gI_{j, +}^{(t)} = \gI_{j, \bm{\epsilon}, +}^{(t)}$ in the setting of Lemma \ref{lem:noise_set_equivalences}. Eq. \ref{eq:GD_noise_update} now becomes: 
\begin{align}
    \langle \vw_j^{(t + 1)}, \vxi_i\rangle & = \langle \vw_j^{(t)}, \vxi_i\rangle\left(1 + \eta\frac{3}{N} l_i^{(t)} y_i  \|\vxi_i\|^2\langle \vw_j^{(t)}, \vxi_i \rangle\right) \notag \\
    & =  \langle \vw_j^{(t)}, \vxi_i\rangle\left(1 + \eta\frac{3}{N} l_i^{(t)} \|\vxi_i\|^2|\langle \vw_j^{(t)}, \vxi_i \rangle|\right) \Longrightarrow  |\langle \vw_j^{(t + 1)}, \vxi_i\rangle| > |\langle \vw_j^{(t)}, \vxi_i \rangle| \label{eq:GD_inequality}.
\end{align}
And a similar expression holds for SAM. Consequently, the alignments will be monotonically away from $0$, continually being learned through iterations. Hence, we want to compare how the gradients align with noises: $|\langle \nabla_{\vw_{j, \bm{\epsilon}}}\gL(\mW^{(t)}\! + \bm{\epsilon}^{(t)}), \bm{\xi}_i\rangle|$ vs. $|\langle \nabla_{\vw_{j}}\gL(\mW^{(t)} ), \bm{\xi}_i\rangle|$. By Lemma \ref{lem:original_patch_gradient}, we have that
\begin{align}
           \text{For \textbf{GD:}}& \quad|\langle \nabla_{\vw_{j}}\gL(\mW^{(t)} ), \bm{\xi}_i\rangle| = \frac{3}{N}l_{i}^{(t)}\langle \vw_{j}^{(t)}, \bm{\xi}_i\rangle^2\|\bm{\xi}_i\|^2. \nonumber
\end{align}
\begin{align}
           \text{For \textbf{SAM:}} \ \ \  |\langle \nabla_{\vw_{j, \bm{\epsilon}}}\gL(\mW^{(t)}\! + \bm{\epsilon}^{(t)}), \bm{\xi}_i\rangle| & = \frac{3}{N}l_{i, \bm{\epsilon}}^{(t)}\langle \vw_{j, \bm{\epsilon}}^{(t)}, \bm{\xi}_i\rangle^2\|\bm{\xi}_i\|^2\nonumber\\
           & \overset{(i)}{=}  \frac{3}{N}l_{i, \bm{\epsilon}}^{(t)} \left(\langle \vw_{j}^{(t)}, \vxi_i\rangle - \frac{3\rho^{(t)}}{N}l_i^{(t)} y_i \langle \vw_j^{(t)}, \vxi_i \rangle ^2 \|\vxi_i\|^2\right)^2\|\bm{\xi}_i\|^2\nonumber\\
           & = \frac{3}{N}l_{i, \bm{\epsilon}}^{(t)}\langle\vw_{j}^{(t)}, \vxi_i\rangle^2 \left(1 - \frac{3\rho^{(t)}}{N}l_i^{(t)} y_i \langle \vw_j^{(t)}, \vxi_i \rangle \|\vxi_i\|^2\right)^2\|\bm{\xi}_i\|^2 \nonumber\\ 
           & \overset{(ii)}{=} \frac{3}{N}l_{i}^{(t)}\langle\vw_{j}^{(t)}, \vxi_i\rangle^2 \left(1 - \frac{3\rho^{(t)}}{N}l_i^{(t)}  \|\vxi_i\|^2|\langle \vw_j^{(t)}, \vxi_i \rangle|\right)^2\|\bm{\xi}_i\|^2 \label{eq:SAM_inequality}, 
\end{align}
where (i) follows from Eq. \ref{eq:SAM_GD_weight_connection} and (ii) uses the approximation for the logits in early training ($l_i^{(t)} = l_{i, \bm{\epsilon}}^{(t)}$). Now with a sufficiently small $\rho$ that satisfies Equation \ref{eq:noisy_sigmoid}, the bound in \ref{eq:0_and_1} yields
\begin{align*}
    |\langle \nabla_{\vw_{j, \bm{\epsilon}}}\gL(\mW^{(t)}\! + \bm{\epsilon}^{(t)}), \bm{\xi}_i\rangle| & = \frac{3}{N}l_{i}^{(t)}\langle\vw_{j}^{(t)}, \vxi_i\rangle^2 \underbrace{\left(1 - \frac{3\rho^{(t)}}{N}l_i^{(t)}  \|\vxi_i\|^2|\langle \vw_j^{(t)}, \vxi_i \rangle|\right)^2}_{\text{$\in (0, 1)$}}\|\bm{\xi}_i\|^2\\
    & < \frac{3}{N}l_{i}^{(t)}\langle\vw_{j}^{(t)}, \vxi_i\rangle^2 \|\bm{\xi}_i\|^2 =  |\langle \nabla_{\vw_{j}}\gL(\mW^{(t)}), \bm{\xi}_i\rangle|. 
\end{align*}
Since this inequality holds for all $\vxi_i \in \gI_{j, +}^{(t)} = \gI_{j, \bm{\epsilon}, +}^{(t)}$, we compute the average over these noises and have that: 
\[
    \textbf{NoiseAlign}(\gI_{j, \bm{\epsilon}, +}^{(t)}, \vw_{j, \bm{\epsilon}}^{(t)})  = \textbf{NoiseAlign}(\gI_{j, +}^{(t)}, \vw_{j, \bm{\epsilon}}^{(t)}) < \textbf{NoiseAlign}(\gI_{j, +}^{(t)}, \vw_j^{(t)}). 
\]
The special case of the theorem can be proved by directly setting the early training weights to be the initialization $\mW^{(0)}$. Since $\mW^{(0)} \sim \mathcal{N}(0, \sigma_0^2)$ and $\bm{\xi}_i \sim \mathcal{N}(0, \sigma_p^2/d)$, with a large $N$, $\gI_{j, +}^{(0)}$ and $\gI_{j, -}^{(0)}$ will each contain roughly half of the noises, as $\text{sgn}(\langle\vw_{j}^{(0)}, \bm{\xi}_i\rangle) = \text{sgn}(y_i)$ has probability $0.5$ in this case. 

\textbf{Remark.} The theory matches our insights that the noises tend to be fitted in general for the upsampled dataset. We do note that the bound in Lemma \ref{lem:gradient_norm_bound} can be loose, as it covers very extreme cases, and the actual choice of $\rho$ could be much larger in general without breaking the logic of our argument. 

\section{Generation vs. Upsampling: Bias Perspective and Proof of Theorem \ref{thm:noise_overfitting}}

Item (1) of this theorem can be proved in a similar fashion as for Theorem \ref{thm:noise_learning_GD_and_SAM} in Appendix \ref{app:B}. 

For item (2), the update rules follow directly from Lemmas \ref{lem:upsampling_patch_gradient}, \ref{lem:generation_patch_gradient}. Again these formulas themselves can partially explain what happens: the ``generation'' gradient learns every noise in the same manner, but the ``upsampling'' gradient learns the replicated noises $k$ times more in one iteration. 

\begin{lemma} (Static noise sets) \label{lem:static_noise_sets} In our setting, with a sufficiently small learning rate $\eta$, for all $j \in [J]$ and all the early iterations $t = 0, \dots, T$, we have that: 
\[
    \gI_{j, +}^{G, (0)} =  \gI_{j, +}^{G, (1)} = \dots = \gI_{j, +}^{G, (t)} = \dots=  \gI_{j, +}^{G, (T)}, 
\]
\[
    \gI_{j, +}^{U, (0)} =  \gI_{j, +}^{U, (1)} = \dots = \gI_{j, +}^{U, (t)} = \dots=  \gI_{j, +}^{U, (T)}. 
\]
The same holds for the corresponding sets $\gI_{j, -}^{G, (t)}$ and $\gI_{j, -}^{U, (t)}$. 
\end{lemma}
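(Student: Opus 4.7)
The plan is to mirror the monotonicity cascade from the proof of Theorem~\ref{thm:noise_learning_GD_and_SAM}, but to track \emph{signs} rather than convergence to zero. Since each set $\gI_{j,\pm}^{\cdot,(t)}$ is defined purely by whether $\mathrm{sgn}(\langle \vw_j^{(t)}, \bm{\phi}_i\rangle)$ matches $\mathrm{sgn}(y_i)$, it suffices to show that for a sufficiently small $\eta$, the sign of each alignment is preserved from $t$ to $t+1$, simultaneously for every neuron $j$ and every noise in either multi-set. Induction on $t \le T$ and a finite union bound then give the four stated set equalities.

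For the per-step argument, I would invoke Lemmas~\ref{lem:upsampling_patch_gradient} and~\ref{lem:generation_patch_gradient} to write the update, for any noise $\bm{\phi}_i$ in $D_G$ or $D_U$, as
\begin{equation*}
  \langle \vw_j^{(t+1)}, \bm{\phi}_i\rangle \;=\; \langle \vw_j^{(t)}, \bm{\phi}_i\rangle \,\bigl(1 + c_i^{(t)} \, y_i \, \langle \vw_j^{(t)}, \bm{\phi}_i\rangle\bigr),
\end{equation*}
where $c_i^{(t)} = \tfrac{3\eta}{N_{\mathrm{new}}} l_i^{(t)}\|\bm{\phi}_i\|^2$ for generation (and for the non-replicated half of upsampling) and $c_i^{(t)} = \tfrac{3k\eta}{N_{\mathrm{new}}} l_i^{(t)}\|\bm{\phi}_i\|^2$ for the replicated slow-learnable half of upsampling. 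Splitting into the two possible set memberships: if $\bm{\phi}_i \in \gI_{j,+}^{\cdot,(t)}$ the parenthesized factor is strictly $>1$, so the sign is automatically preserved (and the magnitude grows, exactly as in~\eqref{eq:GD_inequality}); if $\bm{\phi}_i \in \gI_{j,-}^{\cdot,(t)}$ the factor equals $1 - c_i^{(t)}|\langle \vw_j^{(t)}, \bm{\phi}_i\rangle|$, which lies in $(0,1)$ whenever $c_i^{(t)}|\langle \vw_j^{(t)}, \bm{\phi}_i\rangle| < 1$, again preserving the sign while shrinking the magnitude.

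The induction then closes by a uniform choice of $\eta$. Concretely, I would set
\begin{equation*}
  \eta \;<\; \min_{j,\,\bm{\phi}_i}\; \frac{N_{\mathrm{new}}}{3k\,\|\bm{\phi}_i\|^2\,|\langle \vw_j^{(0)}, \bm{\phi}_i\rangle|},
\end{equation*}
where the factor $k$ absorbs the worst case (the replicated upsampling noises) and $l_i^{(t)} \le 1$ is absorbed for free. This bound guarantees the sign-preservation condition at $t=0$; for $t \ge 1$ the $\gI_{j,+}$-case needs nothing extra, while in the $\gI_{j,-}$-case magnitudes are non-increasing by the previous step, so the threshold only relaxes with $t$. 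A direct adaptation of the $a_{t+1}<a_t$ cascade from the proof of Theorem~\ref{thm:noise_learning_GD_and_SAM} therefore propagates the bound through all $t \le T$.

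The main obstacle is precisely this induction step in the $\gI_{j,-}$ case: one must verify that the multiplicative factor $1 - c_i^{(t)}|\langle \vw_j^{(t)}, \bm{\phi}_i\rangle|$ never reaches $0$ or becomes negative (which would flip the sign and kick $\bm{\phi}_i$ out of $\gI_{j,-}^{\cdot,(t)}$). Because alignment magnitudes are monotonically non-increasing on this set, the worst case for the threshold occurs at $t=0$, which is exactly what the uniform $\eta$ above controls. Everything else is bookkeeping: taking the (finite) minimum over neurons $j \in [J]$ and noise indices $i \in [N_{\mathrm{new}}]$, and separately applying the resulting $\eta$ to $D_G$ and to $D_U$.
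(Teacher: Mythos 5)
Your proposal is correct and follows essentially the same route as the paper, which disposes of this lemma by citing the monotonicity cascade from the proof of Theorem~\ref{thm:noise_learning_GD_and_SAM}: under a small $\eta$, alignments in $\gI_{j,-}$ shrink toward $0$ without crossing it while those in $\gI_{j,+}$ grow, so no sign ever flips. Your write-up is in fact more explicit than the paper's one-line proof, correctly adapting the update factor to $N_{\mathrm{new}}$ and absorbing the worst-case factor $k$ for the replicated upsampled noises into the uniform learning-rate bound.
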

\begin{proof}
    This directly follows from the previous proofs, where starting from the intializations, small learning rate ensures that certain noise alignments move towards $0$ and others move away from $0$ early in training. Hence, in this process, the signs do not change, and these sets always contain the same noises as $t$ progresses
\end{proof}

We now focus on the special case of the theorem and show that in expectation, our requirement on the generation noise prevents $\textbf{NoiseAlign}(\gI_{j, +}^{G, (t)}, \vw_{j}^{(t)}) $ from getting too large. 

Lemma \ref{lem:static_noise_sets} implies the sets of noises to which we overfit or do not overfit remain unchanged during early training. 
Hence, we eliminate the possibility that noises move from one set to another, which simplifies the big picture. The technique is similar to what we did in Theorem \ref{thm:noise_learning_GD_and_SAM}, and we show that this also holds true for data augmented via generation. 

For Theorem \ref{thm:noise_overfitting}, the expectation is taken with respect to the underlying data (and noise) distributions. We first start with computing $\mathbb{E}\left[\textbf{NoiseAlign}(\gI_{j, +}^{G, (t)}, \vw_{j}^{(t)})\right] $. Expanding using the definition of \textbf{NoiseAlign}, we have that this quantity equals: 

\begin{align} \label{eq:NoiseAlign_first_expansion}
   & \mathbb{E}\left[\frac{1}{|\gI_{j, +}^{G, (t)}|}\left(\sum_{\substack{i = 1, \\ \bm{\xi}_i \in \gI_{j, +}^{G, (t)}}}^{N }\left|\langle \nabla_{\vw_{j}^{(t)}} \gL(\mW^{(t)}), \vxi_i\rangle\right|  + \sum_{\substack{i = N+1, \\ \bm{\gamma}_i \in \gI_{j, +}^{G, (t)}}}^{N_{\text{new}} }\left|\langle \nabla_{\vw_{j}^{(t)}} \gL(\mW^{(t)}), \bm{\gamma}_i\rangle\right|\right)\right] \notag \\ 
   = \ & \mathbb{E}\left[\left|\langle \nabla_{\vw_{j}^{(t)}} \gL(\mW^{(t)}), \bm{\phi}_i\rangle\right| : \bm{\phi}_i \in \gI_{j, +}^{G, (t)}\right] \notag \\
   = \ & p\,\mathbb{E}\left[\left|\langle \nabla_{\vw_{j}^{(t)}} \gL(\mW^{(t)}), \vxi_i\rangle\right| : \vxi_i \in \gI_{j, +}^{G, (t)}\right]  + (1-p)\,\mathbb{E}\left[\left|\langle \nabla_{\vw_{j}^{(t)}} \gL(\mW^{(t)}), \bm{\gamma}_i\rangle\right| : \bm{\gamma}_i\in \gI_{j, +}^{G, (t)}\right] \notag \\ 
   = \ & p\,\mathbb{E}\left[\left|\langle \nabla_{\vw_{j}^{(t)}} \gL(\mW^{(t)}), \vxi\rangle\right| \right]  + (1-p)\,\mathbb{E}\left[\left|\langle \nabla_{\vw_{j}^{(t)}} \gL(\mW^{(t)}), \bm{\gamma}\rangle\right|\right], 
\end{align}
where $p = \frac{N}{N_{new}} = \frac{1}{\alpha + k(1-\alpha)}$ measures the probability that the noise belongs to the original data instead of the synthetic data. The last equality is due to the fact that $\bm{\phi}_i \in \gI_{j, +}^{G, (t)}$ are generated i.i.d and follow the same noise distribution $\gD$ or $\gD_{\bm{\gamma}}$. 

Recall the following assumption: 
\begin{align*}
    \mathbb{E}\left[l_{\bm{\gamma}}(t)\langle \vw_{j}^{(t)},  \bm{\gamma}\rangle^2\|\bm{\gamma}\|^2\right] & < (k+1)\mathbb{E}\left[l_{\bm{\xi}}(t)\langle \vw_{j}^{(t)},  \bm{\xi}\rangle^2\|\bm{\xi}\|^2\right] \\
    \mathbb{E}\left[\frac{3}{N}l_{\bm{\gamma}}(t)\langle \vw_{j}^{(t)},  \bm{\gamma}\rangle^2\|\bm{\gamma}\|^2\right] & < (k+1)\mathbb{E}\left[\frac{3}{N}l_{\bm{\xi}}(t)\langle \vw_{j}^{(t)},  \bm{\xi}\rangle^2\|\bm{\xi}\|^2\right] \\ 
    \mathbb{E}\left[\left|\langle \nabla_{\vw_{j}^{(t)}} \gL(\mW^{(t)}), \bm{\gamma}\rangle\right|\right] & < (k+1)\mathbb{E}\left[\left|\langle \nabla_{\vw_{j}^{(t)}} \gL(\mW^{(t)}), \vxi\rangle\right|\right], \quad \bm{\gamma} \sim \gD_{\bm{\gamma}}, \ \ \vxi \sim \gD
\end{align*}
where the last inequality follows the expression of noise gradient. Again, note that $\gI_{j, +}^{G, (t)}$ contains synthetic noises that follow the distribution $\gD_{\bm{\gamma}}$ since the synthetic data points are generated independently. However, the noises in $\gI_{j, +}^{U, (t)}$ are not independent due to upsampling and therefore do not follow the noise distribution in $\gD$. 

To tackle this issue, we use the gradient alignment bound above and  continue with Equation \ref{eq:NoiseAlign_first_expansion}
\begin{align*}
  < \  &p\,\mathbb{E}\left[\left|\langle \nabla_{\vw_{j}^{(t)}} \gL(\mW^{(t)}), \vxi\rangle\right| \right]  + (1-p)(k-1)\,\mathbb{E}\left[\left|\langle \nabla_{\vw_{j}^{(t)}} \gL(\mW^{(t)}),\vxi\rangle\right|\right] \\ 
 = \ &  \frac{1}{\alpha + k(1-\alpha)}\,\mathbb{E}\left[\left|\langle \nabla_{\vw_{j}^{(t)}} \gL(\mW^{(t)}), \vxi\rangle\right| \right]  + \frac{(k^2-1)(1-\alpha)}{\alpha + k(1-\alpha)}\,\mathbb{E}\left[\left|\langle \nabla_{\vw_{j}^{(t)}} \gL(\mW^{(t)}),\vxi\rangle\right|\right] \\ 
 = \ &  \frac{\alpha + (1 - \alpha) }{\alpha + k(1-\alpha)}\,\mathbb{E}\left[\left|\langle \nabla_{\vw_{j}^{(t)}} \gL(\mW^{(t)}), \vxi\rangle\right| \right]  + \frac{(k^2-1)(1-\alpha)}{\alpha + k(1-\alpha)}\,\mathbb{E}\left[\left|\langle \nabla_{\vw_{j}^{(t)}} \gL(\mW^{(t)}),\vxi\rangle\right|\right] \\ 
= \ & \frac{\alpha}{\alpha + k(1-\alpha)}\,\mathbb{E}\left[\left|\langle \nabla_{\vw_{j}^{(t)}} \gL(\mW^{(t)}), \vxi\rangle\right| \right]  + \frac{k^2(1-\alpha)}{\alpha + k(1-\alpha)}\,\mathbb{E}\left[\left|\langle \nabla_{\vw_{j}^{(t)}} \gL(\mW^{(t)}),\vxi\rangle\right|\right]  \\  
= \ & \mathbb{E}\left[\frac{1}{|\gI_{j, +}^{U, (t)}|}\left(\sum_{\substack{i = 1, \\ \bm{\xi}_i \in \gI_{j, +}^{U, (t)}}}^{\alpha N }\left|\langle \nabla_{\vw_{j}^{(t)}} \gL(\mW^{(t)}), \vxi_i\rangle\right|  + k\sum_{\substack{i = \alpha N+1, \\ \bm{\xi}_i \in \gI_{j, +}^{U, (t)}}}^{N} k\left|\langle \nabla_{\vw_{j}^{(t)}} \gL(\mW^{(t)}), \bm{\xi}_i\rangle\right|\right)\right] \\ 
= \ &\mathbb{E}\left[\textbf{NoiseAlign}(\gI_{j, +}^{U, (t)}, \vw_j^{(t)})\right]. 
\end{align*}

The last equality follows from the distribution of noises in the upsampled dataset: the slow-learnable $(1-\alpha)N$ data points are copied $k$ times, and from the gradient computation (first part of this theorem), each is learned $k$ times more, giving rise to the two $k$'s. The noises in each summation term follow $\gD$, as they are independent within the subset of summands.  

Hence, we have shown the desired result: 
\[
    \mathbb{E}\left[\textbf{NoiseAlign}(\gI_{j, +}^{G, (t)}, \vw_{j}^{(t)})\right]  < \mathbb{E}\left[\textbf{NoiseAlign}(\gI_{j, +}^{U, (t)}, \vw_j^{(t)})\right]. 
\]
At a high level, this suggests that starting at the weight $\vw_j^{(t)}$, as long as the notion of small synthetic noise is satisfied, the expected gradient alignment with noises in the set where overfitting occurs is strictly lower when the dataset is augmented through generation. This reduced expected \textbf{NoiseAlign} leads to gradient updates that place less emphasis on fitting noise, thereby enabling the model to focus more effectively on learning the true features.

Consequently, although both upsampling and generation promote more uniform learning, their distinct interactions with noise critically influence generalization performance. This highlights the necessity of incorporating synthetic data and motivates our method. 

\section{Generation vs. Upsampling: Variance Perspective and Proof of Theorem \ref{thm:mini_batch_variance}}

For generation, we have a fully independent dataset, so given the per-gradient variance bound $\sigma_G^2$, the variance of the mini-batch gradient $\hat{\vg}_G$ can be computed as: 
\begin{align}
    \mathbb{E}\left[\| \hat{\vg}_G - \bar{\vg}\|^2\right] = \mathbb{E}\left[\| \frac{1}{B}\sum_{i=1}^B\vg_i - \bar{\vg}\|^2\right] & =  \mathbb{E}\left[\| \frac{1}{B}\sum_{i=1}^B\vg_i - \frac{1}{B}B\bar{\vg}\|^2\right] \notag\\
    & = \frac{1}{B^2}\mathbb{E}\left[\| \sum_{i=1}^B\left(\vg_i - \bar{\vg}\right)\|^2\right] \notag\\
    & = \frac{1}{B^2}\sum_{i=1}^B\mathbb{E}\left[\|\vg_i - \bar{\vg}\|^2\right]  \notag\\
    & \leq \frac{1}{B^2}B\sigma^2  \notag \\
    & = \frac{\sigma^2}{B}  \label{eq:independent_variance}, 
\end{align}
where we can directly take the summation out of expectation due to independence among the data. 

For the upsampled dataset, we instead have two parts: 
\begin{enumerate}
    \item \textbf{Fully independent part:} The first $\alpha N$ data points that contain the fast-learnable feature are independent from each other and the rest. 
    \item \textbf{Replicated part:} The rest of the $k(1-\alpha)N$ data points that contain duplicates. This part introduces the risk that the same data points might be selected into the same batch.
\end{enumerate} 
With this motivation, we let $B = B_1 + B_2$, where $B_1$ denotes the number of data points selected from the fully independent part, and $B_2$ denotes the number selected from the replicated part. 

For theoretical simplicity, we consider a stratified mini-batch gradient, where $B_1$ and $B_2$ are fixed numbers in proportion to the sizes of the two parts. We assume that $B_1$, $B_2$ are both integers such that they follow the proportion: 
\begin{equation} 
    B_1 = B\frac{\alpha N}{\alpha N + k(1-\alpha)N} = B\frac{\alpha }{\alpha  + k(1-\alpha)}. 
\end{equation}
\begin{equation} 
    B_2 = B\frac{k(1-\alpha)N}{\alpha N + k(1-\alpha)N} = B\frac{k(1-\alpha)}{\alpha + k(1-\alpha)}. 
\end{equation}

Note that this removes one source of variance since $B_1$ and $B_2$ should themselves be random variables in the actual gradient. Hence, the actual variance should be larger than that of the stratified version. 

The variance of the stratefied mini-batch gradient $\hat{\vg}_U$ after upsampling can be split as:
\begin{equation} \label{eq:upsampled_variance}
   \mathbb{E}\left[\| \hat{\vg}_U - \bar{\vg}\|^2\right]= \mathbb{E}\left[\| \frac{1}{B}\sum_{i \in B}\vg_i - \frac{1}{B}B\bar{\vg}\|^2\right] =\frac{1}{B^2} \mathbb{E}\left[\|\sum_{i \in B_1}\left(\vg_i - \bar{\vg}\right)\|^2\right] +  \frac{1}{B^2} \mathbb{E}\left[\|\sum_{i \in B_2}\left(\vg_i - \bar{\vg}\right)\|^2\right]
\end{equation}
due to independence and unbiasedness. Here with a slight abuse of notation, $i \in B, B_1, B_2$ represents the indices of data in each portion.
Similar to Equation \ref{eq:independent_variance}, we have for the independent part, 
\begin{equation} \label{eq:upsampled_variance_pt1}
\frac{1}{B^2} \mathbb{E}\left[\|\sum_{i \in B_1}\left(\vg_i - \bar{\vg}\right)\|^2\right] \leq \frac{\sigma^2}{B^2}B_1= \frac{\sigma^2}{B}\frac{\alpha}{\alpha + k(1-\alpha)} . 
\end{equation}
For the replicated part, since we may select copied data points, we first rewrite the summation as: 
\[
    \sum_{i \in B_2}\vg_i = \sum_{i = 1}^{(1-\alpha)N} Y_i\vg_i, 
\]
where we index over all the unique data points $i \in B_2$ and introduce the random variable $Y_i \geq 0$ representing the number of copies of each unique point, subject to $\sum_{i = 1}^{(1-\alpha)N}Y_i = B_2$. 

Clearly, as we are selecting $B_2$ data points from a total of $k(1-\alpha)N$ points ($(1-\alpha)N$ unique ones each with $k$ copies), $Y_i$ follows a hypergeometric distriution. From classical probability theory, we have: 
\begin{enumerate}
    \item[1. ] Mean: 
    \[
    \mathbb{E}[Y_i] = B_2\frac{k}{k(1-\alpha)N} = B\frac{k(1-\alpha)}{\alpha+ k(1-\alpha)}\frac{k}{k(1-\alpha)N} = B\frac{k}{\alpha N + k(1-\alpha)N}. 
    \]
    \item[2. ] Variance: 
    \begin{align*}
    \mathrm{Var}(Y_i) & = \mathbb{E}[Y_i]\frac{k(1-\alpha)N - k}{k(1-\alpha)N}\frac{k(1-\alpha)N - B_2}{k(1-\alpha)N - 1} \\
    & = \mathbb{E}[Y_i]\frac{(1-\alpha)N - 1}{(1-\alpha)N}\frac{k(1-\alpha)N - B\frac{k(1-\alpha)}{\alpha + k(1-\alpha)}}{k(1-\alpha)N - 1} \\
    & = \mathbb{E}[Y_i]\frac{(1-\alpha)N - 1}{(1-\alpha)N}\frac{k(1-\alpha)(\alpha N + k(1-\alpha)N - B)}{(k(1-\alpha)N - 1)(\alpha + k(1-\alpha))} \\
    & = \mathbb{E}[Y_i]\frac{(1-\alpha)N - 1}{N}\frac{k(\alpha N + k(1-\alpha)N - B)}{(k(1-\alpha)N - 1)(\alpha + k(1-\alpha))} \\
    & = \mathbb{E}[Y_i]\frac{k(1-\alpha)N - k}{k(1-\alpha)N - 1}\frac{\alpha N + k(1-\alpha)N - B}{\alpha N + k(1-\alpha)N} \\
    & \leq  \mathbb{E}[Y_i]\frac{\alpha N + k(1-\alpha)N - B}{\alpha N + k(1-\alpha)N} \\
    & = B\frac{k}{\alpha N + k(1-\alpha)N}\left(1 - \frac{B}{\alpha N + k(1-\alpha)N} \right). 
\end{align*}
\item[3.] Second Moment: 
\begin{align}
    \mathbb{E}[Y_i^2] & = \mathrm{Var}(Y_i) + \mathbb{E}[Y_i]^2 \notag \\
    & \leq B\frac{k}{\alpha N + k(1-\alpha)N}\left(1 - \frac{B}{\alpha N + k(1-\alpha)N} \right) + \left( B\frac{k}{\alpha N + k(1-\alpha)N}\right)^2 \notag \\
    & = B\frac{k}{\alpha N + k(1-\alpha)N}\left(1 - \frac{B}{\alpha N + k(1-\alpha)N} + \frac{Bk}{\alpha N + k(1-\alpha)N}\right) \notag \\
    & = B\frac{k}{\alpha N + k(1-\alpha)N}\left(1 + \frac{(k-1)B}{\alpha N + k(1-\alpha)N}\right) \label{eq:second_moment_Y_i}. 
\end{align}
\end{enumerate}
With these quantities, we can now compute: 
\begin{align} \label{eq:upsampled_variance_pt2}
    \frac{1}{B^2}\mathbb{E}\left[\|\sum_{i \in B_2}\left(\vg_i - \bar{\vg}\right)\|^2\right] & = \frac{1}{B^2} \mathbb{E}\left[\|\sum_{i=1}^{(1-\alpha)N}Y_i(\vg_i - \bar{\vg})\|^2\right] \quad \text{since $\sum_{i=1}^{(1-\alpha)N}Y_i = B_2$} \notag \\
    & =\frac{1}{B^2} \sum_{i=1}^{(1-\alpha)N}\mathbb{E}\left[Y_i^2\left\|\vg_i - \bar{\vg}\right\|^2\right]  \notag \\ 
    & \leq \frac{\sigma^2}{B^2} \sum_{i=1}^{(1-\alpha)N}\mathbb{E}\left[Y_i^2\right] \notag \\
    & = \frac{\sigma^2}{B}\frac{k(1-\alpha)N}{\alpha N + k(1-\alpha)N}\left(1 + \frac{(k-1)B}{\alpha N + k(1-\alpha)N}\right)  \quad \text{by Equation \ref{eq:second_moment_Y_i}}. 
\end{align}
We plug in Equations \ref{eq:upsampled_variance_pt1}, \ref{eq:upsampled_variance_pt2} into Equation \ref{eq:upsampled_variance} to obtain: 
\begin{align*}
    \mathbb{E}\left[\| \hat{\vg}_U - \bar{\vg}\|^2\right] & \leq \frac{\sigma^2}{B}\frac{\alpha}{\alpha + k(1-\alpha)} +\frac{\sigma^2}{B}\frac{k(1-\alpha)}{\alpha + k(1-\alpha)}\left(1 + \frac{(k-1)B}{\alpha N + k(1-\alpha)N}\right) \\
    & = \frac{\sigma^2}{B}\left(1+\frac{k(k-1)(1-\alpha)}{(\alpha + k(1-\alpha))^2}\frac{B}{N}\right) = I. 
\end{align*}. 

In the theorem statement, we use $\sigma_U^2(k)$ and $\sigma_G^2(k)$ to differentiate the two and emphasize the dependence on the augmenting factor $k$. 

\section{Pseudocode}\label{app:pseudocode}
Algorithm \ref{alg:Method} illustrates our method. We include Table \ref{tab:comparison} to compare it with other augmentation methods, including classical ones, naive diffusion-based generation, and mix-based methods such as mixup (\cite{zhang2017mixup}) and CutMix (\cite{yun2019cutmix}). 

\begin{table}[t]
\caption{Comparison with other augmentation methods.}
\label{tab:comparison}
\begin{center}
\begin{tabular}{p{3.1cm} p{4.6cm} p{4.6cm}}
\toprule
\textbf{Feature} & \textbf{Classical Aug (Flip, Crop, etc.)} & \textbf{Mix-based (mixup/CutMix)} \\
\midrule
\textbf{Core Objective} & Learn basic invariances & Model regularization \\
\textbf{Selection Strategy} & Applied randomly to all images & Random pairing of images; augments all samples \\
\textbf{Mechanism} & Geometric/color transformations & Mix patches between real images \\
\textbf{Computation} & Negligible: native GPU operations & Low: simple arithmetic operations \\
\textbf{Guarantee (Theory)} & No & Yes \\
\textbf{Primary Application} & Image classification; transfer learning & Image classification; transfer learning \\
\textbf{Key Advantage} & Fast, effective baseline & Cheap and effective regularizer \\
\midrule
\textbf{Feature} & \textbf{Diffusion-based} & \textbf{\alg} \\
\midrule
\textbf{Core Objective} & Increase data quantity and diversity & Reduce simplicity bias; promote balanced feature learning \\
\textbf{Selection Strategy} & --- & Selects samples not learned early in training \\
\textbf{Mechanism} & Generates synthetic images and mixes them with real images & Generates synthetic images close to real images \\
\textbf{Computation} & High: generates multiple times for different conditional prompts & Moderate: only generates a subset (30--40\% of dataset) \\
\textbf{Guarantee (Theory)} & No & Yes \\
\textbf{Primary Application} & Image classification; transfer learning & Image classification; transfer learning \\
\textbf{Key Advantage} & Can boost performance at a large compute cost & Efficient: high gain from few samples; can be combined with prior methods \\
\bottomrule
\end{tabular}
\end{center}
\end{table}

\begin{algorithm}[t]
\caption{\alg~(\tda)}
\label{alg:Method}
\begin{algorithmic}[1]
\STATE \textbf{Input:} Original dataset $D$, Model $f(\cdot, W^{(0)})$, Separation epoch $t$, Total epochs $T$, Time steps $t_\star$, GLIDE model $G(\mu_\theta,\Sigma_\theta)$
\STATE \textbf{Step 1: Initial Training}
\STATE Train the model $f(\cdot, W^{(0)})$ on dataset $D$ for $t$ epochs 
\STATE \textbf{Step 2: Clustering and Data Selection Based on Loss}
\FOR{each class $c \in D$}
    \STATE $\{C_1, C_2\} \gets \text{k-means}(f(x_j; W^{(t)}))$ \COMMENT{Cluster data into $C_1$ and $C_2$}
    \STATE \textbf{Step 3: Image Generation from Selected Data}
    \STATE Obtain text prompt $l$ \COMMENT{e.g. For class \textit{dog}, $l$ is \textit{a photo of dog}}
    \FOR{each data point $x^{\text{ref}} \in C_2$} 
        \STATE Initialize random noise $\epsilon \sim \mathcal{N}(0, I)$
        \STATE Generate initial noisy image $x_{t_\star} \gets \sqrt{\bar{\alpha}_{t_\star}}x^{\text{ref}}+\sqrt{1-\bar{\alpha}_{t_\star}} \epsilon $
        \FOR{$s$ from $t_\star$ to 1}
            \STATE $\mu, \Sigma \gets \mu_{\theta}(x_s,s,l),\Sigma_{\theta}(x_s,s,l)$
            \STATE $x_{s-1} \gets \text{sample from } \mathcal{N}(\mu, \Sigma)$
        \ENDFOR
        \STATE $x_{\text{generated}} = x_0$
        \STATE $D = D \cup x_{\text{generated}}$ \COMMENT{Add the generated image to the dataset}
    \ENDFOR
\ENDFOR
\STATE \textbf{Step 4: Retraining the Model}
\STATE Train the model $f(\cdot, W)$ on updated dataset $D$ for $T$ epochs
\STATE \textbf{Output:} Final model $f(\cdot, W(T))$
\end{algorithmic}
\end{algorithm}

\section{Additional Experiments}
\subsection{Additional experimental settings}\label{app:add_exp_settings}
\textbf{Datasets.}  The CIFAR10 dataset consists of 60,000 32 × 32 color images in 10 classes, with 6000 images per class. The CIFAR100 dataset is just like the CIFAR10, except it has 100 classes containing 600 images each. For both of these datasets, the training set has 50,000 images (5,000 per class for CIFAR10 and 500 per class for CIFAR100) with the test set having 10,000 images. Tiny-ImageNet comprises 100,000 images distributed across 200 classes of ImageNet~\citep{deng2009imagenet}, with each class containing 500 images. These images have been resized to 64×64 dimensions and are in color. The dataset consists of 500 training images, 50 validation images, and 50 test images per class. {For transfer learning experiments, we also used three fine-grained classification datasets. Flowers-102~\citep{nilsback2008automated} contains 8,189 images of flowers from 102 categories, with between 40 and 258 images per class. Aircraft~\citep{maji13fine-grained} consists of 10,000 images across 100 aircraft model variants, with roughly 100 images per class. Stanford Cars~\citep{krause20133d} contains 16,185 images of 196 classes of cars, with classes defined at the level of make, model, and year.}

\textbf{Training on different datasets.} From the setting in \citep{andriushchenko2022towards}, we trained Pre-Activation ResNet18 on all datasets for 200 epochs with a batch size of 128. We used SGD with the momentum parameter 0.9 and set weight decay to 0.0005. We also fixed $\rho = 0.1$ for SAM in all experiments unless explicitly stated. We used a linear learning rate schedule starting at 0.1. The learning rate is decayed by a factor of 10 after 50\% and 75\% epochs, i.e., we set the learning rate to 0.01 after 100 epochs and to 0.001 after 150 epochs. {For transfer learning experiments, we fine-tuned a pre-trained ResNet18 on ImageNet-1K. On Flowers-102, we trained for 200 epochs with an initial learning rate of 0.001. On Aircraft and Stanford Cars, we trained for 100 epochs with an initial learning rate of 0.01.}

\textbf{Training with different architectures.} We used the same training procedures for Pre-Activation ResNet18, VGG19, and DenseNet121. We directly used the official Pytorch \citep{paszke_pytorch_2019} implementation for VGG19 and DenseNet121. For ViT \citep{Yuan_2021_ICCV}, we adopted a Pytorch implementation at \url{https://github.com/lucidrains/vit-pytorch}. In particular, the hidden size, the depth, the number of attention heads, and the MLP size are set to 768, 8, 8, and 2304, respectively. We adjusted the patch size to 4 to fit the resolution of CIFAR10 and set both the initial learning rate and $\rho$ to 0.01. {For both ConvNeXt-T and Swin-T, we used their official implementations but trained with SGD instead of Adam. The (learning rate, $\rho$) is set to (0.1, 0.05) for ConvNeXt-T and (0.01, 0.001) for Swin-T.}

\textbf{Hyperparameters.} For \useful~\citep{nguyen2024changing}, we adopt their default hyper-parameters for separating epochs and set the upsampling factor $k$ to 2 as it yields the best performance. For our method, the upsampling factor $k$ is set to $5$ for CIFAR10 and CIFAR100 while that of TinyImageNet is set to $4$. Table~\ref{tab:detail_k} summarizes the upsampling factor along with the denoising steps at which the synthetic images are used to augment the base training set. {For transfer learning experiments, we set $k$ to 2.}

\begin{table}[!t]
    \centering
    \caption{Details of the upsampling factor $k$ for different methods and datasets.}
    \label{tab:detail_k}
    \vskip 0.15in
    \begin{tabular}{c|c|c|c}
        \toprule
        Method & Dataset & k & Denoising steps \\
        \midrule
        \useful & CIFAR10 & 2 & \\
        & CIFAR100 & 2 & \\
        & Tiny-ImageNet & 2 & \\
        \midrule
        \alg & CIFAR10 & 5 & 40, 50, 70, 80  \\
        & CIFAR100 & 5 & 40, 50, 70, 80 \\
        & Tiny-ImageNet & 4 & 50, 70, 80 \\
    \end{tabular}
\end{table}

\textbf{Generating synthetic data with diffusion model.} We use an open-source text-to-image diffusion model, GLIDE \citep{nichol2021glide} as our baseline for image generation. We directly used the official Pytorch implementation at \url{https://github.com/openai/glide-text2im}. While generating, we set the time steps as 100 and the guidance scale as 3.0. For a k-way classification, we input the class names $C=\{c_1,\dots,c_k\}$ with prompt $l='\text{a photo of}\{c_i\}'$. {For DiffuseMix, we used their official implementation with a single conditional prompt (“Autumn”).}

\textbf{Computational resources.} We used 1 NVIDIA RTX 3090 GPU for model training and 4 NVIDIA RTX A5000 GPUs for generating.

\begin{figure*}[t!]
    \centering
    \includegraphics[width=\textwidth]{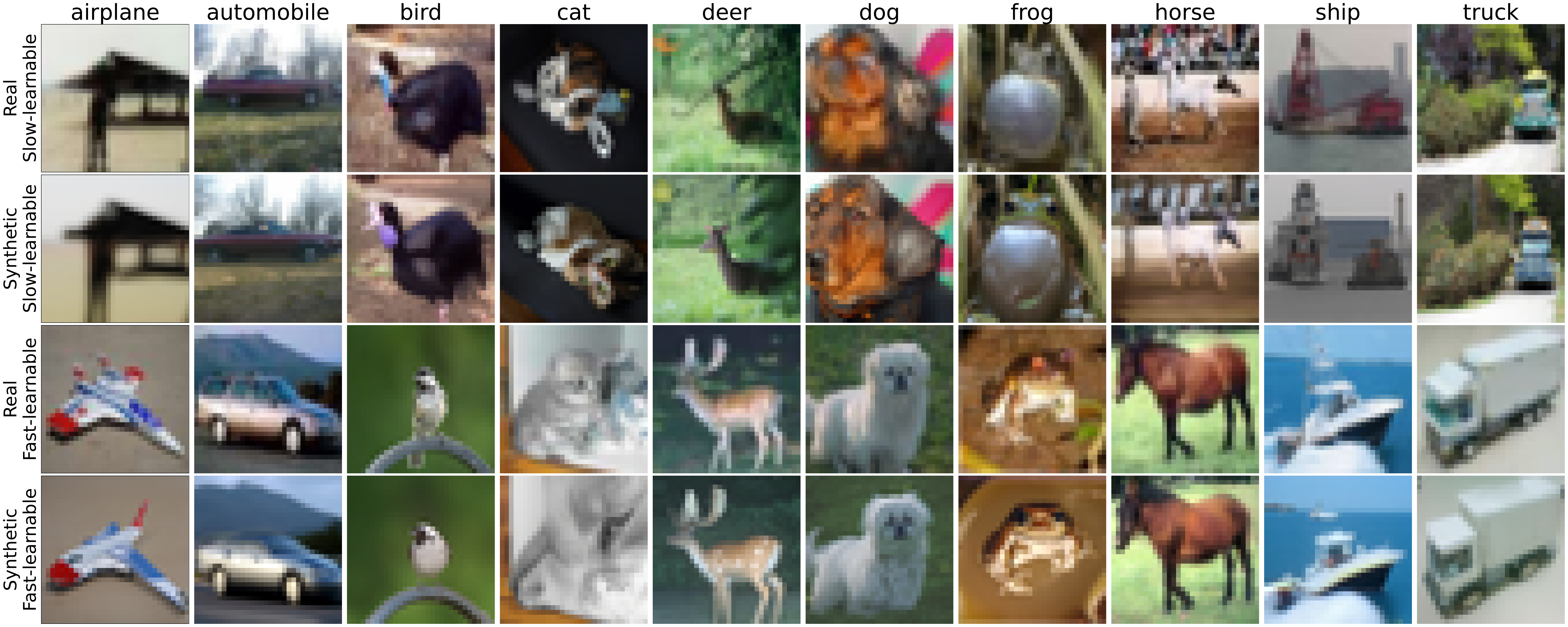}
    \caption{{Examples of slow-learnable (row 1) and fast-learnable (row 3) samples in CIFAR-10. Rows 2 and 4 show the corresponding synthetic images generated with GLIDE (Section~\ref{subsec:diffusion}) using 50 denoising steps (FID = 10.67).}
    }
    \label{fig:qualitative_results_full}
\end{figure*}

\subsection{Additional experimental results}\label{app:add_exp_results}

{\textbf{State-of-the-art architectures.} To further demonstrate the effectiveness of our method, we evaluated it on two widely used modern vision backbones: ConvNeXt-T~\citep{liu2022convnet}, a convolutional network that revisits ConvNet design with architectural refinements inspired by transformers, and Swin-T~\citep{liu2021swin}, a hierarchical vision transformer that introduces shifted windows for efficient and scalable self-attention. Both models were trained on CIFAR-10 using SGD with $k = 2$. As shown in Table~\ref{tab:sota_arch}, our method consistently outperforms both the Original (no augmentation) and \useful. In particular, for ConvNeXt-T, our approach achieves a nearly 7\% reduction in test error compared to the second-best method, highlighting its strong compatibility with modern architectures.}

\textbf{Qualitative results.} {Figure~\ref{fig:qualitative_results_full} presents examples of slow-learnable (top) and fast-learnable (bottom) samples in CIFAR-10 where the slow-learnable samples are specifically targeted in our synthetic data augmentation. Slow-learnable samples are often visually ambiguous: the object may be partially visible, relatively small compared to the background, or blended with clutter, making them harder to recognize. In contrast, fast-learnable samples are clear and representative of their class, with the object occupying most of the image and little background interference. These characteristics explain why slow-learnable samples are acquired later in training, whereas fast-learnable ones are learned quickly. Figure~\ref{fig:qualitative_results_full} shows that our synthetic examples preserve key visual content (features)—such as object shape, structure, and spatial arrangement while introducing subtle variations in texture or color (noise). 
For instance, an image of a bird in the second row is reproduced with the same layout and pose, but with different color for the neck part.
Such nuanced transformations are difficult to achieve with standard augmentations like random cropping or flipping, highlighting the value of generative augmentation in our approach.}

\begin{table}[!t]
    \centering
    \caption{\textbf{Targeted augmentation with non-diffusion and diffusion methods.} Test classification error (\%) of training ResNet18 on CIFAR10. Here, TrivialAugment (TA) is applied only to the upsampled subset (either slow-learnable or full data) before being added back to the original dataset, which differs from Figure~\ref{fig:upsampling_strategy} where TA is applied online to all training data. Targeted TA augmentation outperforms full-data TA augmentation, and combining targeted TA augmentaiton with diffusion-generated images yields the best performance.}
    \label{tab:non_diff_augmentation}
    \vskip 0.15in
    \scalebox{0.9}{
    \begin{tabular}{l|c}
        \toprule
        Method & Test Error (\%) \\
        \midrule
        Original & 5.07 $\pm$ 0.04 \\
        \midrule
        TA on slow-learnable examples only & \textbf{4.26} $\pm$ \textbf{0.02} \\
        TA on all examples & 4.48 $\pm$ 0.08 \\
        \midrule
        TA + Diffusion on slow-learnable examples only & \textbf{3.81} $\pm$ \textbf{0.10} \\
        TA + Diffusion on all examples & 4.20 $\pm$ 0.13 \\
        \bottomrule
    \end{tabular}
    }
\end{table}

\textbf{Application to non-diffusion augmentation methods.} Our findings on which examples should be augmented generalize beyond diffusion-based augmentation. To verify this, we conducted additional experiments using TrivialAugment (TA)~\citep{Muller_2021_ICCV}, a widely used non-diffusion augmentation method. Specifically, we either augment with only the slow-learnable examples or with the full dataset, apply TA to the augmented copies, and then add them back to the original training set. As shown in Table\ref{tab:non_diff_augmentation}, augmenting only slow-learnable examples with TA outperforms applying TA to all examples. Furthermore, combining TA with targeted diffusion-based augmentation yields additional improvements and surpasses full-data augmentation with both TA and synthetic data.

We note that this setting differs from the experiments in Figure~\ref{fig:upsampling_strategy}. In Figure~\ref{fig:upsampling_strategy}, TA is applied online to both original and augmented data during training. In contrast, in Table~\ref{tab:non_diff_augmentation}, TA is applied only to the augmented subset (either slow-learnable or full data) before adding it to the original dataset, allowing us to isolate the benefit of targeted example selection.

\textbf{Effect of the upsampling factors.} Table~\ref{tab:up_factor} illustrates the performance of our method and \useful~when varying the upsampling factors. As discussed in Section~\ref{theory}, \useful~achieves the best performance at $k = 2$ due to overfitting noise at larger $k$. On the other hand, using synthetic images, our method benefits from larger values of $k$, yielding the best performance at $k = 5$ for the CIFAR datasets and $k = 4$ for the Tiny-ImageNet dataset. This empirical result reinforces our theoretical findings in Section~\ref{subsec:syn_amplify}.

\begin{table}[!t]
\caption{Training ResNet18 on CIFAR10 with different upsampling factor (k).}
\label{tab:up_factor}
\vskip 0.15in
\begin{center}
\begin{small}
\begin{sc}
\begin{tabular}{lcccc}
\toprule
k & Cifar10 (\useful) & Cifar10 (\alg) & Cifar100 (\alg) & Tiny-ImageNet (\alg) \\
\midrule
2   & \textbf{4.79} & 4.57  & 21.13 & 32.90 \\
3   & 5.01 & 4.52 & 21.27 & 30.88 \\
4   & 5.04 & 4.45 & 20.26 & \textbf{30.64} \\
5   & 4.98 & \textbf{4.42} & \textbf{20.00} & 31.28 \\
\bottomrule
\end{tabular}
\end{sc}
\end{small}
\end{center}
\vskip -0.1in
\end{table}

\begin{table}[!t]
\caption{FID and test classification error of ResNet18 on CIFAR10 when augmenting with synthetic images from different denoising steps. The upsampling factor $k$ is set to 2 here.}
\label{tab:fid}
\begin{center}
\begin{small}
\begin{sc}
\begin{tabular}{lcccr}
\toprule
Steps & FID & ERM & SAM \\
\midrule
0(original images)   &       & 5.07 & 4.01 \\
10                   & 5.36  & 4.94 & 4.03 \\
20                   & 6.86  & 4.85 & 3.92 \\
30                   & 7.48  & 4.83 & 3.79 \\
40                   & 8.52  & 4.73 & 3.92 \\
50                   & 10.67 & \textbf{4.56} & \textbf{3.69} \\
60                   & 13.25 & 4.68 & 3.72 \\
70                   & 17.46 & 4.91 & 3.84 \\
80                   & 24.27 & 4.83 & 3.98 \\
90                   & 34.69 & 4.87 & 3.86 \\
100                  & 47.35 & 4.99 & 4.28 \\
\bottomrule
\end{tabular}
\end{sc}
\end{small}
\end{center}
\vskip -0.1in
\end{table}

\begin{table}[!t]
\caption{Training ResNet18 on CIFAR10 with different data selection strategy. We used SGD and set the upsampling factor $k$ to 5 here.}
\label{tab:select_strategy}
\vskip 0.15in
\begin{center}
\begin{small}
\begin{sc}
\begin{tabular}{lcccc}
\toprule
Method & Misclassification & High loss & Slow-learnable (\useful~and \alg) \\
\midrule
Test error   & {4.87} $\pm$ 0.10 & 4.77 $\pm$ 0.06  & \textbf{4.42} $\pm$ \textbf{0.04} \\
\bottomrule
\end{tabular}
\end{sc}
\end{small}
\end{center}
\vskip -0.1in
\end{table}

\textbf{Effect of the number of denoising steps.} Table~\ref{tab:fid} illustrates the impact of the number of denoising steps on both the FID score and the downstream performance of our method. When initializing the denoising process with real images, increasing the number of steps leads to a higher FID score. Unlike prior works in synthetic data augmentation, we observe that minimizing the FID score between real and synthetic images does not necessarily correlate with improved performance. Specifically, using fewer steps generates images that are overly similar to real images, potentially amplifying the inherent noise present in the original data. Conversely, too many denoising steps introduce excessive noise, which also degrades performance. Empirically, we find that using 50 denoising steps strikes the best balance, yielding optimal results for both SGD and SAM optimizers.

{\textbf{Alternative selection strategies.} Our approach differs from core-set selection methods~\cite{guo2022deepcore}, which aim to reduce dataset size while matching the performance of training on the full data. Such methods either require training the model (or a smaller proxy) and use its statistics to find the coreset or update the coreset during the training. In contrast to coreset selection, the goal of our approach is to improve the generalization performance over that of full data, by reducing the simplicity bias of training. We showed that this can be done by augmenting the data with faithful synthetic examples corresponding to the slow-learnable part of the data. Motivated by theory, we found such examples by clustering the model output early in training (without training a proxy model or updating the subset during training). However, other more heuristic approaches can be also used to find slow-learnable examples. In this section, we conducted new experiments to find slow-learnable examples based on (1) high loss and (2) misclassification, at the same checkpoint as used in our experiments in the paper. As shown Table~\ref{tab:select_strategy}, our method outperforms both heuristic-based selection strategies. Corset selection methods can be applied on top of our synthetically augmented data to further improve data-efficiency. This is an exciting direction we leave for future work.}


\begin{figure}[!t]
\begin{center}
\centerline{\includegraphics[width=0.8\columnwidth]{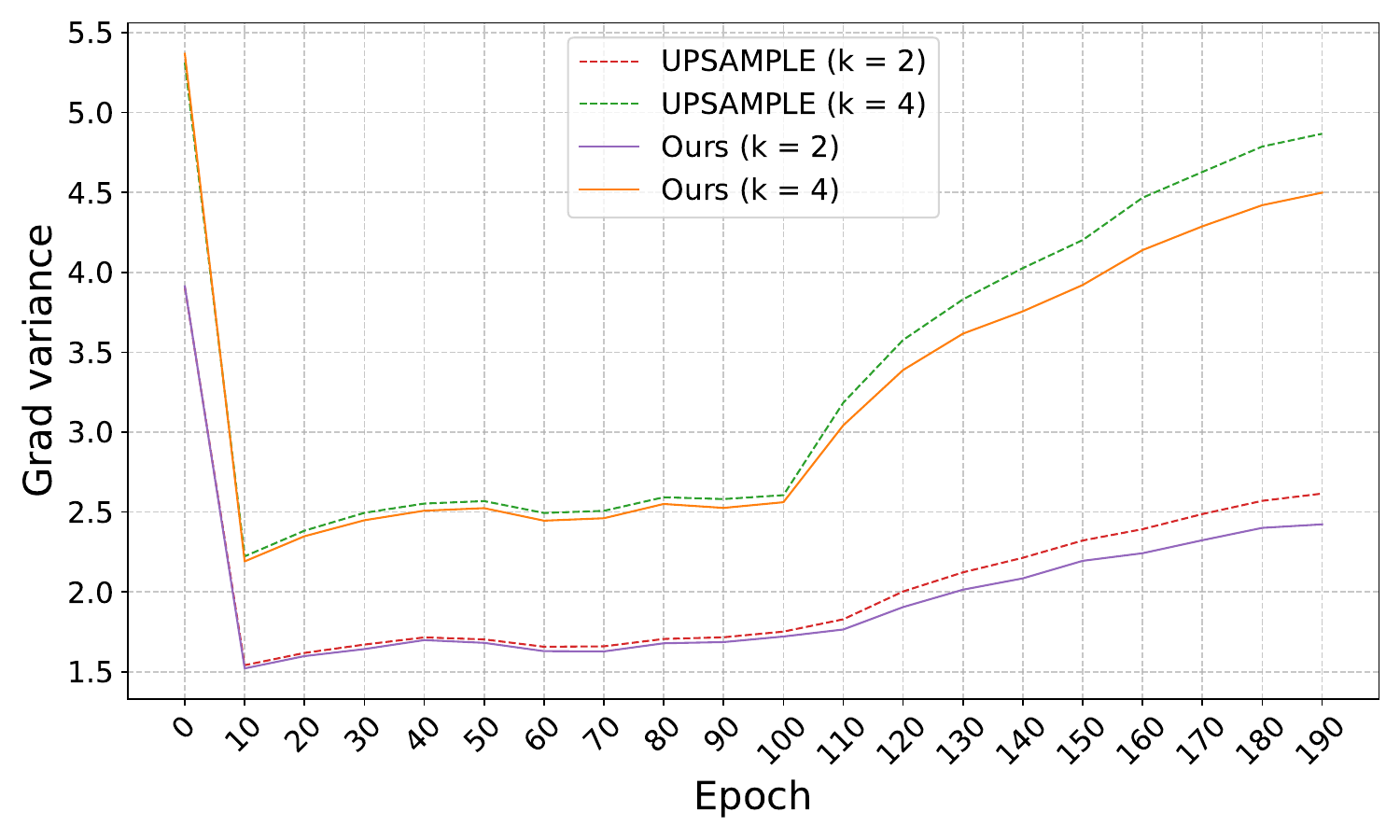}}
\caption{
Variance of mini-batch gradients of ResNet18 with SGD on the augmented CIFAR10 dataset found by~\useful~and \alg~when varying the upsampling factor $k$. Our method yields lower gradient variance throughout the entire training process.
}
\label{fig:grad_variance}
\end{center}
\end{figure}

\textbf{Generation time.} Using 4 GPUs, the generation time for the entire CIFAR-10, CIFAR-100, and Tiny ImageNet datasets is approximately 12, 12, and 26 hours, respectively. In contrast, our method requires generating only about 30\%, 40\%, and 60\% of the total dataset size, reducing the generation time to just 3.6, 4.8, and 15.6 hours, respectively. This represents a substantial improvement in efficiency compared to prior approaches to synthetic data generation, which typically require producing 10–30 times more samples than the original dataset size.

\textbf{Variance of mini-batch gradients.} Figure~\ref{fig:grad_variance} presents the mini-batch gradient variance during training of ResNet18 using SGD on the augmented CIFAR-10 datasets selected by \useful~and our proposed method. Across all upsampling factors $k$, our method consistently yields lower gradient variance throughout the entire training process. This indicates that our synthetic data provides a more stable and consistent learning signal, which can contribute to more effective optimization. Notably, the variance gap between our method and \useful~widens significantly after epochs 100 and 150—key points at which the learning rate is decayed. This suggests that our method enables the model to adapt more smoothly to changes in the learning rate, likely due to the benign noise introduced in the faithful synthetic images. Lower gradient variance is often associated with more stable convergence and improved generalization, highlighting the advantage of our approach.

\begin{figure}[!t]
\begin{center}
\centerline{\includegraphics[width=0.8\columnwidth]{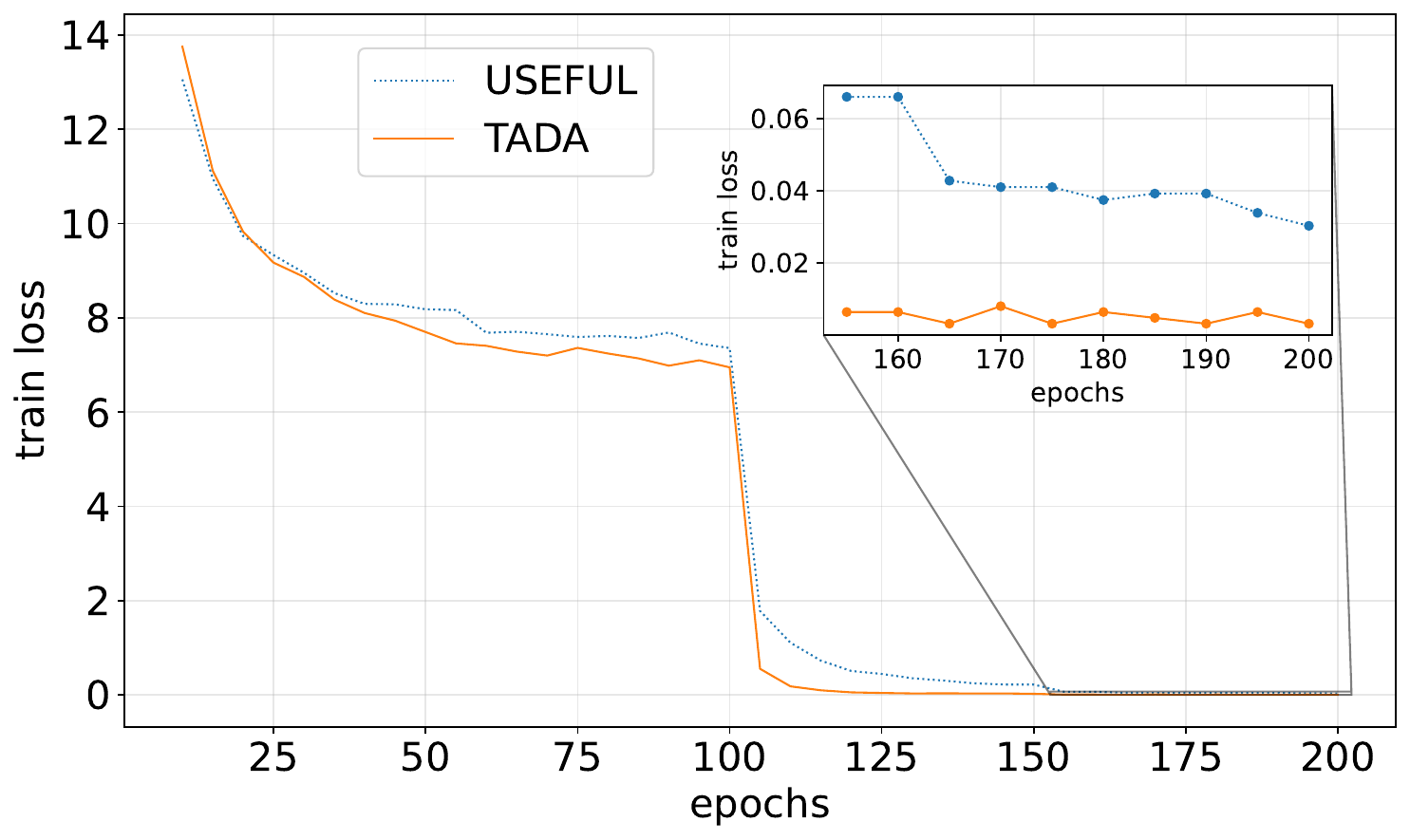}}
\caption{
{Training loss of ResNet18 on the CIFAR10 dataset when upsampling the slow-learnable examples vs. augmenting them with synthetic data}
}
\label{fig:train_loss}
\end{center}
\end{figure}

\begin{figure}[!t]
\begin{center}
\centerline{\includegraphics[width=0.9\columnwidth]{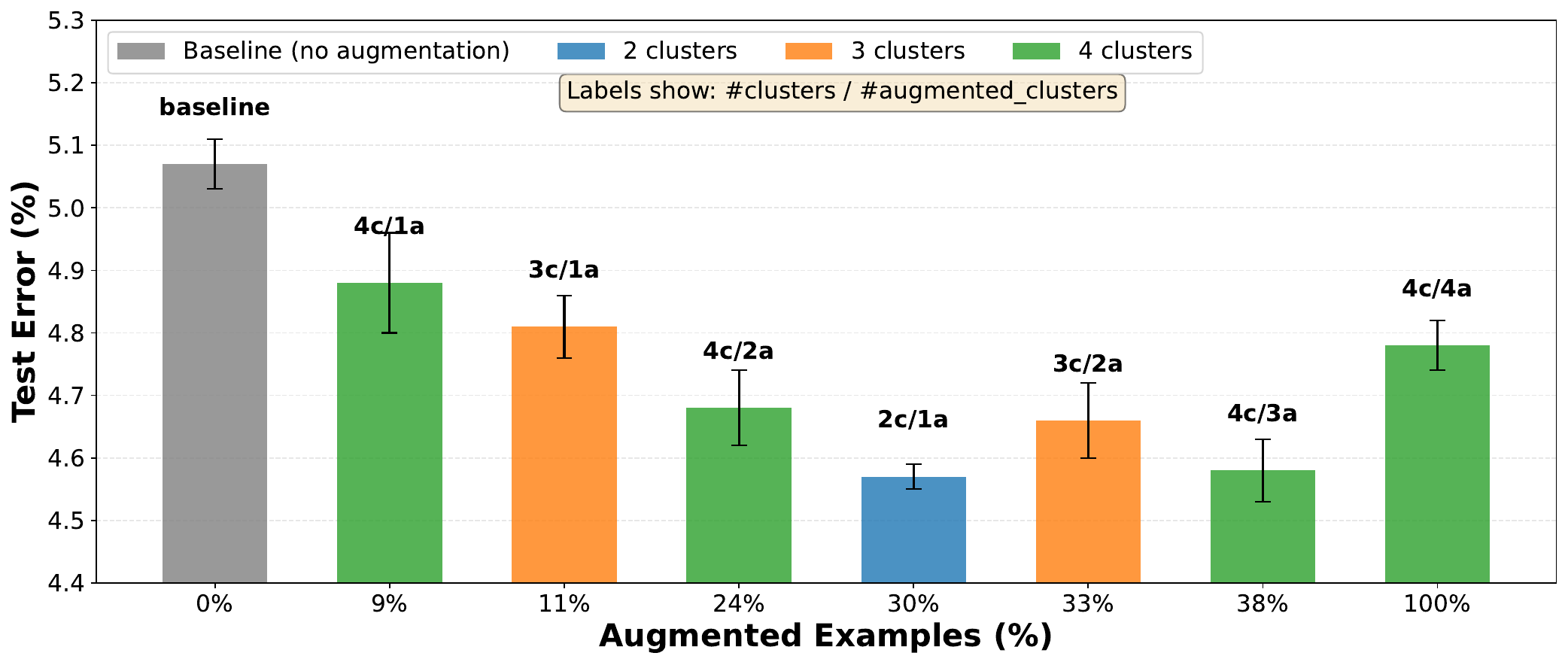}}
\caption{
{Test error of ResNet18 trained on CIFAR-10 with different cluster-based augmentation configurations. Each bar represents one experimental setting, color-coded by the number of clusters: blue for 2 clusters, orange for 3 clusters, and green for 4 clusters. Augmented clusters are selected as those with the highest average losses. The x-axis shows the percentage of augmented examples relative to the full dataset size. Error bars represent standard deviations across runs.}
}
\label{fig:vary_clusters}
\end{center}
\end{figure}

{\textbf{Training loss.} Figure~\ref{fig:train_loss} illustrates that training with synthetic augmentation (\alg) has lower training loss than training with real augmentation (UPSAMPLE).}

{\textbf{Different number of clusters.} Figure~\ref{fig:vary_clusters} illustrates the performance of our method with varying numbers of clusters (2, 3, and 4). For each number of clusters, we varied the number of augmented clusters, which are selected as those with the highest average losses. The baseline (0\%) corresponds to no augmentation, while 100\% indicates augmenting all clusters. As shown in the figure, augmenting too few or too many examples is suboptimal. For ResNet18 on CIFAR-10, augmenting approximately 30\% of the dataset yields the best performance.}

\subsection{Experiments on ImageNet}\label{app:imagenet}
To demonstrate the scalability of our method, we apply it to training on ImageNet~\citep{deng2009imagenet}, a large-scale dataset containing over 1.2M images across 1,000 classes and commonly used as a benchmark for evaluating visual recognition models. Due to computational constraints, we adopt the FFCV library~\citep{leclerc2022ffcv} for efficient data loading and training, adapting the implementation from the publicly available repository at \url{https://github.com/libffcv/ffcv-imagenet}. We train ResNet18 and ResNet50 using the smallest configuration provided in that codebase (i.e., 16 epochs). For synthetic image generation, we employ Boomerang~\citep{luzi2022boomerang} with Stable Diffusion v1.5 (\url{https://huggingface.co/stable-diffusion-v1-5/stable-diffusion-v1-5}), following the settings described in Boomerang's paper. For selecting augmented images, we use the checkpoint at epoch 2 and the number of augmented images is about 65\% of the full dataset.

\begin{table}[!t]
    \centering
    \caption{{Test accuracy of training ResNet18 and ResNet50 on ImageNet.}}
    \label{tab:imagenet}
    \vskip 0.15in
    \scalebox{0.9}{
    \begin{tabular}{c|c|c|c|c|c}
        \toprule
        Method & Augmentation (\%) & \multicolumn{2}{c|}{ResNet18} & \multicolumn{2}{c}{ResNet50}\\
        && Top-1 Acc & Top-5 Acc & Top-1 Acc & Top-5 Acc \\
        \midrule
        Original & 0 & 67.23 $\pm$ 0.02 & 87.76 $\pm$ 0.10 & 73.24 $\pm$ 0.02 & 91.63 $\pm$ 0.08 \\
        \useful & 65 & 68.86 $\pm$ 0.20 & 88.68 $\pm$ 0.12 & 74.04 $\pm$ 0.15 & 92.06 $\pm$ 0.08 \\
        Boomerang & 100 & 68.95 $\pm$ 0.04 & 89.02 $\pm$ 0.09 & 73.72 $\pm$ 0.06 & 92.09 $\pm$ 0.11 \\
        \alg + Boomerang & 65 & \textbf{69.28} $\pm$ \textbf{0.06} & \textbf{89.21} $\pm$ \textbf{0.09} & \textbf{74.77} $\pm$ \textbf{0.05} & \textbf{92.62} $\pm$ \textbf{0.07} \\
        \midrule
    \end{tabular}
    }
\end{table}

\begin{table}[!t]
    \centering
    \caption{{Performance comparison of training YOLOv5m on MS-COCO.}}
    \label{tab:object_detection}
    \vskip 0.15in
    \scalebox{0.9}{
    \begin{tabular}{c|c|c|c}
        \toprule
        Method & Augmentation (\%) & $\text{AP}_{50}$ & $\text{mAP}_{50-95}$ \\
        \midrule
        Original & 0 & 63.26 & 44.26 \\
        \useful & 75 & 63.87 & 44.92 \\
        InstanceAugmentation & 100 & 64.17 & 45.75 \\
        \alg + InstanceAugmentation & 75 & \textbf{64.94} & \textbf{46.28} \\
        \midrule
    \end{tabular}
    }
\end{table}

Table \ref{tab:imagenet} reports the ImageNet test accuracy when training ResNet18 and ResNet50 with different augmentation strategies. Our method achieves the highest Top-1 and Top-5 accuracies across both architectures, outperforming Boomerang despite using only 65\% augmentation, whereas Boomerang requires 100\% synthetic augmentation. This demonstrates that our targeted augmentation strategy not only yields stronger accuracy gains but also does so with substantially lower computational cost.

\subsection{Object detection experiments}
To further demonstrate the applicability of our method beyond image classification, we conducted experiments on object detection. We follow the setup of InstanceAugmentation~\citep{kupyn2024dataset}, which uses a pretrained ControlNet~\citep{zhang2023adding}-based inpainter with prompt engineering to repaint individual objects. Following their data augmentation settings (Table 1 in~\citep{kupyn2024dataset}), we trained a YOLOv5m model \url{https://github.com/ultralytics/yolov5} on the MS-COCO dataset~\citep{lin2014microsoft} from scratch using default hyperparameters (batch size 40, 300 epochs). We used the synthetic images released by the authors of InstanceAugmentation. To identify slow-learnable images, we used the model checkpoint at epoch 5 and labeled any image containing at least one misclassified object as slow-learnable, resulting in 75\% of the dataset.

We evaluate object detection performance using two standard metrics: $\text{AP}_{50}$ and $\text{mAP}_{50-95}$. Both metrics rely on the Intersection over Union (IoU), which measures how much a predicted bounding box overlaps with the ground truth. $\text{AP}_{50}$ computes Average Precision at an IoU threshold of 0.50, providing a lenient measure of detection accuracy—models with high $\text{AP}_{50}$ effectively locate objects even if their boxes are not perfectly aligned. In contrast, $\text{mAP}_{50-95}$ averages AP across IoU thresholds from 0.50 to 0.95 in steps of 0.05, rewarding models that produce both correct detections and tightly aligned bounding boxes. This makes $\text{mAP}_{50-95}$ a more stringent and comprehensive metric for evaluating detection quality.

Table~\ref{tab:object_detection} shows that our method also applies effectively to object detection, outperforming all baselines, including InstanceAugmentation while achieving a 25\% reduction in data usage.

\end{document}